\pgfplotsset{width=6cm,compat=newest} 
\title{Formatting Instructions For NeurIPS 2023}
\author{%
  Tal Amir$^{1}$
  \quad Steven J. Gortler$^{2}$
  \quad Ilai Avni$^{1}$
  \quad Ravina Ravina$^{1}$
  \quad Nadav Dym$^{1,3}$
  %
  \vspace{0.1in} 
  \\ 
  $^1$ Faculty of Mathematics, Technion – Israel Institute of Technology, Haifa, Israel
  \\
  $^2$ School of Engineering and Applied Sciences, Harvard University, Cambridge, USA
  \\
  $^3$ Faculty of Computer Science, Technion – Israel Institute of Technology, Haifa, Israel. 
  %
  %
}
\newtheorem{theorem}{Theorem}[section]
\newtheorem{lemma}[theorem]{Lemma}
\newtheorem{prop}[theorem]{Proposition}
\newtheorem{cor}[theorem]{Corollary}
\theoremstyle{definition}
\newtheorem{definition}[theorem]{Definition}
\newtheorem{example}[theorem]{Example}
\newcommand{\lms}{ \{\!\!\{ }
\newcommand{\rms}{ \}\!\!\} }
\newcommand{\Y}{\mathcal{Y}}
\newcommand{\B}{\mathcal{B}}
\newcommand{\relu}{\mathrm{ReLU}}
\newcommand{\M}{\mathbb{M}}
\newcommand{\W}{\mathbb{W}}
\newcommand{\N}{\mathcal{N}}
\renewcommand{\P}{\mathcal{P}}
\newcommand{\A}{\mathcal{A}}
\renewcommand{\S}{\mathcal{S}}
\newcommand{\NN}{\mathbb{N}}
\newcommand{\CC}{\mathbb{C}}
\newcommand{\RR}{\mathbb{R}}
\newcommand{\ZZ}{\mathbb{Z}}
\newcommand{\SnR}{\S_{\leq n}(\RR)}
\newcommand{\SnSig}{\S_{\leq n}(\Sigma)}
\newcommand{\MnSig}{\mathcal{M}_{\leq n}(\Sigma)}
\newcommand{\PnK}{\P_{\leq n}(K)}
\newcommand{\Sn}{\S_{\leq n}(\Omega)}
\newcommand{\GnSig}{\mathcal{G}_{\leq n}(\Sigma)}
\newcommand{\Snd}{\S_{\leq n}(\RR^d)}
\newcommand{\MnR}{\mathcal{M}_{\leq n}(\RR)}
\newcommand{\Mnd}{\mathcal{M}_{\leq n}(\RR^d)}
\newcommand{\Mn}{\mathcal{M}_{\leq n}(\Omega)}
\newcommand{\T}{\tau}
\newcommand{\bT}{\boldsymbol{\T}}
\newcommand{\bpsi}{\boldsymbol{\psi}}
\newcommand{\ba}{\boldsymbol{a}}
\newcommand{\bb}{\boldsymbol{b}}
\newcommand{\bd}{\boldsymbol{d}}
\newcommand{\bA}{\boldsymbol{A}}
\newcommand{\bx}{\boldsymbol{x}}
\newcommand{\by}{\boldsymbol{y}}
\newcommand{\bw}{\boldsymbol{w}}
\newcommand{\bbm}{\boldsymbol{m}} 
\newcommand{\bh}{\boldsymbol{h}}
\newcommand{\bX}{\boldsymbol{X}}
\newcommand{\bY}{\boldsymbol{Y}}
\newcommand{\bz}{\boldsymbol{z}}
\newcommand{\bzero}{\boldsymbol{0}}
\newcommand{\btheta}{\boldsymbol{\theta}}
\newcommand{\bth}{\btheta}
\newcommand{\setmid}{{\ \,\mid\ \,}}
\newcommand{\argdot}{{\hspace{0.18em}{{\bm{\cdot}}}\hspace{0.18em}}}
\title{Neural Injective Functions for Multisets, Measures and Graphs via a Finite Witness Theorem  }
\pgfplotsset{every axis plot/.append style={line width=0.8pt}} 
\pgfplotsset{every tick label/.append style={font=\scriptsize}}
\newcommand{\bottomval}{1e-9}
\newcommand{\injplotmmax}{1000}
\newcommand{\ourMarkerSize}{0pt}
\newcommand{\lscos}{densely dashdotdotted}
\newcommand{\colcos}{violet}
\newcommand{\lsrelu}{densely dotted}
\newcommand{\colrelu}{blue}
\newcommand{\lshardtanh}{densely dashdotted}
\newcommand{\colhardtanh}{green!50!black}
\newcommand{\lstanh}{dashed}
\newcommand{\coltanh}{olive}
\newcommand{\lssigmoid}{solid}
\newcommand{\colsigmoid}{brown!50!black}
\newcommand{\lsswish}{solid}
\newcommand{\colswish}{brown!50!black}
\newcommand{\colmish}{brown!50!black}
\newcommand{\lssin}{solid}
\newcommand{\colsin}{brown!50!black}
\begin{document}

\maketitle

\begin{abstract} 
Injective multiset functions have a key role in the theoretical study of machine learning on multisets and graphs. Yet, there remains a gap between the provably injective multiset functions considered in theory, which typically rely on polynomial moments, and the multiset functions used in practice, which rely on $\textit{neural moments}$ --- whose injectivity on multisets has not been studied to date.

In this paper, we bridge this gap by showing that moments of neural networks do define injective multiset functions,  provided that an analytic non-polynomial activation is used. The number of moments required by our theory is optimal essentially up to a multiplicative factor of two. To prove this result, we state and prove a $\textit{finite witness theorem}$, which is of independent interest. 

As a corollary to our main theorem, we derive new approximation results for functions on multisets and measures, and new separation results for graph neural networks. We also provide two negative results: (1) moments of piecewise-linear neural networks cannot be injective multiset functions; and (2) even when moment-based multiset functions are injective, they can never be bi-Lipschitz.

\end{abstract}

\section{Introduction}
Multisets are a slight generalization of sets: like sets, they are an unordered collection of elements $\lms \bx_1,\ldots,\bx_k \rms$, but unlike sets, repetitions are allowed. Multisets arise naturally in many machine-learning tasks. They are the natural way to represent point clouds in $\RR^3$, neighborhoods of vertices in graphs, and any other data that has an intrinsic order that is immaterial to the task at hand.

We refer to functions and architectures whose inputs are multisets in $\RR^d$ as \emph{multiset functions} and \emph{multiset architectures}. By definition, these functions do not depend on the order in which the multiset elements are given. This is important not only because the order is irrelevant and thus should not affect the output, but also because otherwise a model may overfit the training data by making predictions based on its intrinsic order.

Multiset architectures are typically constructed using a combination of permutation-invariant operations such as sum- and max-pooling \cite{qi2017pointnet}, attention mechanisms \cite{lee2019set} and sorting \cite{zhangfspool}. One simple and popular approach, pioneered in the seminal Deep-Sets paper \cite{zaheer2017deep}, employs multiset functions based solely on sum-pooling. Namely, if the elements of all multisets come from some fixed \emph{alphabet} $\Omega$, any function $f: \Omega \to \RR^m$ induces a multiset function $\hat f$, to which we refer as the \emph{moment} of $f$:
\begin{equation}\label{eq:setsum}
\hat f\left( \lms \bx_1,\ldots,\bx_k \rms  \right)=\sum_{i=1}^k f(\bx_i) .
\end{equation}

While moment functions of the form \labelcref{eq:setsum} are simple, they are quite powerful. For example, in \cite{zaheer2017deep} it was shown that if $\Omega$ is countable, and the multisets have no repetitions (so they are just sets), then for an appropriate $f:\Omega \to \RR$, the induced function $\hat f$ maps the input sets \emph{injectively} to $\RR$. 

Injectivity is indeed a desired property for multiset functions. The search for such functions stems from the quest to find an architecture that can approximate \emph{all} multiset functions. Clearly, if $\hat f$ assigns the same value $\hat f(S_1)= \hat f(S_2)$ to two different multisets $S_1 \neq S_2$, then any architecture based on $\hat f$ will yield a poor approximation of a multiset function that assigns different values to $S_1$ and $S_2$. 

The authors of \cite{zaheer2017deep} showed that injectivity is not only necessary for approximation, but also sufficient: Under the assumption that the alphabet $\Omega$ is countable, if the moment $\hat f$ of $f$ maps multisets injectively to $\RR^m$, then any multiset function $F$ can be written as a composition of the form $F(\left \lms \bx_1,\ldots,\bx_k \rms  \right)=g\left(\sum_{i=1}^k f(\bx_i) \right)$. Motivated by this observation, the authors proposed a neural architecture of this form, with the functions $f$ and $g$ replaced by Multi-Layer Perceptrons (MLPs). This step was justified by the universal approximation power of MLPs.


These intriguing results inspired further research, mainly focusing on seeking injective multiset functions of the form \labelcref{eq:setsum} for \emph{continuous} alphabets such as $\Omega = \RR^d$. Preferably, such functions should (a) have a minimal \emph{embedding dimension} $m$ while ensuring that $f:\Omega \to \RR^m$ induces an injective $\hat f$; and (b) be practical to compute. We next summarize some of these results:

For a countable $\Omega$, the Deep-Sets paper as well as \cite{xu2018powerful} showed that an embedding dimension $m=1$ is sufficient. For $\Omega=\RR$, if the multisets contain at most $n$ elements, and $f$ is continuous, then an embedding dimension of $m\geq n$ is necessary and sufficient for injectivity \cite{wagstaff2019limitations,corso2020principal,wagstaff2022universal}.

For $\Omega=\RR^d$ and multisets of size at most $n$, it was shown in \cite{joshi2023expressive} that $m \geq n  d$ is necessary for injectivity. As for an upper bound on the required $m$, while some polynomials discussed in the literature achieve injectivity with a rather high exponential \cite{balan2022permutation,maron2019provably,segol2019universal} or polynomial \cite{wang2023polynomial} dimension $m$, recent work \cite{dym2022low} achieved injectivity with $m=2nd+1$, using a polynomial $f$ with randomly chosen coefficients --- thus achieving the lower bound essentially up to a multiplicative factor of $2$. 


While the above works provide injective multiset functions with optimal or near-optimal embedding dimension, these functions are typically polynomials, and not the MLPs used in practice. As mentioned above, many papers \cite{zaheer2017deep,xu2018powerful,maron2019provably} justify this by the fact that MLPs can approximate any function, and thus any polynomial. However, using this argument, we have no control on the number of neurons required for injectivity --- which in some cases may be infinite, as we show in \cref{sec:pwl}.
%
%
In this paper, we address this limitation by providing a practical and efficient method to construct functions of the form \labelcref{eq:setsum} that are provably injective while having a near-optimal number of neurons.
We now state this formally.

\subsection{Problem Statement}
Let $\Omega \subseteq \RR^d$ be a set, to which we refer as an \emph{alphabet}. Denote by $\Sn$ the collection of all multisets $\lms \bx_1,\ldots,\bx_k \rms $ with $\bx_1,\ldots,\bx_k \in \Omega$ and $k\leq n$. Any function $f:\Omega \to \RR^m$ induces a \emph{moment function} $\hat f:\Sn \to \RR^m $ as in \eqref{eq:setsum}. If $\hat f$ is injective, we say that $f$ is \emph{moment injective} on $\Sn$. 

We also consider a natural generalization from multisets to measures, by identifying each multiset $\lms \bx_1,\ldots,\bx_k \rms$ with the measure $\mu=\sum_{i=1}^k \delta_{\bx_i}$, where $\delta_{\bx}$ is the Dirac measure that assigns a unit weight to $\bx$. In this generalized setting, the induced multiset function $\hat f$ of \eqref{eq:setsum} is just the integral of $f$ with respect to the measure $\mu$. More generally, we consider signed measures $\mu=\sum_{i=1}^n w_i \delta_{\bx_i}$, with weights $w_i\in \RR$ that can be negative, and points $\bx_i$ that belong to an alphabet $\Omega \subseteq \RR^d$. We denote the space\footnote{While we use the term \emph{space} for $\Sn$ and $\Mn$, note that these are not vector spaces, since the sum of two measures in these spaces might be supported on more than $n$ points.} of all such measures by $\Mn$.
 A function $f:\Omega \to \RR^m  $  induces a moment function $\hat f:\Mn \to \RR^m $ defined by 
 \begin{equation}\label{eq:set_moment}
 	\hat f(\mu)=\int_{\Omega} f(\bx)d\mu(\bx)=\sum_{i=1}^n w_i f(\bx_i), \quad \text{where} \quad \mu=\sum_{i=1}^n w_i\delta_{\bx_i} . 
 \end{equation}
If $\hat f$ is injective, we say that $f$ is moment-injective on $\Mn$.
Naturally, injectivity on $\Mn$ implies injectivity on subsets of this space, such as the space of measures in $\Mn$ that are probability measures, or that have only positive weights. In particular, if $f$ is moment-injective on $\Mn$, then it is moment-injective on $\Sn$.

To summarize, the main questions we focus on in this paper are:

\textbf{Main Questions:}  (a) Under what conditions is an MLP $f$ moment-injective on spaces of multisets $\Sn$ or measures $\Mn$? (b) How many neurons are needed to achieve this injectivity?

\section{Main Results}
 
Interestingly, we find that the answers to these two questions largely depend on the activation function. Consider shallow neural networks $f: \RR^d \to \RR^m$ of the form
\begin{equation}\label{eq:mlp}
f(\bx;\bA,\bb)=\sigma(\bA\bx+\bb), \quad \bA\in \RR^{m\times d},\  \bb\in \RR^m,
\end{equation}
with the activation function $\sigma: \RR \to \RR$ applied entrywise to $\bA\bx+\bb$. Suppose that $\sigma$ is analytic and non-polynomial; such activations include the sigmoid, softplus, tanh, swish and sin. In \cref{sec:analytic} we show that for a large enough $m$, such networks $f(\bx;\bA,\bb)$ with random parameters $\bA$,$\bb$ are moment-injective on $\Mn$ and on $\Sn$; namely, their induced moment functions $\hat{f}$ of \labelcref{eq:set_moment} are injective. This holds for various natural choices of $\Omega$.

The embedding dimension $m$ required in \labelcref{eq:mlp} depends on the dimension $d$ of $\Omega$: For  $\Omega= \RR^d$, to achieve injectivity on $\Sn$ or $\Mn$, it suffices to take $m=2nd+1$ or $m=2n(d+1)+1$ respectively. When $\Omega$ is countable, $m=1$ or $m=2n+1$ are sufficient (corresponding to $d=0$). In Appendix~\ref{app:lb} we show that in all these cases, these embedding dimensions are optimal essentially up to a multiplicative factor of two. These results are summarized in \cref{tab:results}. In Appendix~\ref{app:lb} we also discuss examples where the optimal embedding cardinality for $\MnR$ is obtained.


\begin{table*}
	\centering
	\begingroup
	\renewcommand*{\arraystretch}{1.02}
	\begin{NiceTabular}{lccccc}
		\toprule
		Domain & $\Mnd$ & $\Snd$ & $\MnSig$ & $\S_{\leq n}(\ZZ^d)$ & $\S_{\leq n}(\Sigma_{\alpha})$   \\
			\cmidrule(lr){1-6}
		\multicolumn{1}{l}{Analytic activation} & $2n(d+1)+1$  & $2nd+1$  & $2n+1$ &  $1$  & $1$    \\ 
		\multicolumn{1}{l}{Piecewise-linear activation}  &  $\infty$  & $\infty$  & $\infty$  & $\infty$ & $1$   \\
		\cmidrule(lr){1-6}
		\multicolumn{1}{l}{Lower bound} & $n(d+1)$  & $nd$  & $n$ &  $1$  & $1$    \\ 
		\bottomrule
	\end{NiceTabular}
	\endgroup
	\centering
	\caption{The embedding dimension required for constructing injective functions of measures and multisets. $\Sigma \subset \RR^d$ is any infinite countable alphabet. First row: dimensions for which our theorems guarantee injectivity when using analytic non-polynomial activations. Second row: with infinite alphabets, moments of a neural network of any finite size with a piecewise-linear activation cannot be injective, except in the multiset case, with some special countable alphabets such as $\Sigma_\alpha$, defined in Appendix~\ref{app_pwl_injectivity_infinite_alphabets}. Third row: lower bounds on the embedding dimension required for injectivity. These bounds show that our results from the first row of the table are optimal essentially up to a factor of two.}
 
    %
    %
 \label{tab:results}
\end{table*}



At the core of our poof of moment injectivity is a theorem which we name the \emph{finite witness theorem}. This theorem enables reducing an infinite family of analytic equality constraints $\{F\left(\bx;\bth\right)=0 \setmid \bth \in \W\}$ to a finite subset $\{F\left(\bx;\bth_i\right)=0 \setmid i=1,\ldots,m\}$. This theorem generalizes the results in \cite{dym2022low}, where a special case of it was proved for semialgebraic domains and functions. The theorem we prove here (see Appendix~\ref{app:sigmasub}) applies to a much wider class of domains and functions, among which are analytic functions. In addition to our main result, we use the finite witness theorem to prove moment injectivity of Gaussian functions (\cref{prop:gauss}) and deep MLPs (\cref{prop:deep}), and we believe it shall find additional applications beyond those discussed in this work.

\paragraph{Negative Results}
We also prove two negative results for moment-based multiset functions: We show that in contrast to analytic activations, with piecewise linear (PwL) activations, such as ReLU, leaky ReLU and hard arctan, moment injectivity on spaces of measures $\Mn$ with infinite $\Omega$ is \emph{impossible}. On multiset spaces $\Sn$, moment injectivity with PwL activations can be obtained for some irregular, countable $\Omega$, such as the alphabet $\Omega=\Sigma_\alpha$ defined in Appendix~\ref{app:pwl}, but not for infinite alphabets that arise naturally, like $\Omega=\RR^d$, $\ZZ^d$ or $(0,1)^d$. These results are summarized in the bottom row of \cref{tab:results}.
The second negative result is that while moments of MLPs with analytic activations can be injective, they can never be stable in the bi-Lipschitz sense. This points to a possible advantage of injective multiset functions that are not based on moments, but rather on sorting \cite{balan2022permutation} or max-filters \cite{cahill2022group}. These multiset functions are not only injective but also bi-Lipschitz.

\paragraph{Implications for learning on multisets and graphs}
The result on moment injectivity of MLPs with analytic non-polynomial activation enables us to improve upon two seminal theoretical results in the study of functions on  multisets and graphs: 

\textbf{(a) Universality for multisets.} In Corollaries~\labelcref{cor:universalA} and~\labelcref{cor:universalB}, we show that any continuous function on a space of multisets or measures respectively can be presented as a continuous vector-to-vector function composed with a moment function $\hat f$ of an MLP of the form \labelcref{eq:mlp}. The MLP has the same embedding dimension $m$ as in \cref{tab:results}. 
Essentially, this result replaces the moment-injective polynomials traditionally used in the characterization of multiset functions \cite{zaheer2017deep,wagstaff2022universal} by MLPs.
 
\textbf{(b) Separation power of Graph Neural Networks.} Famously, the ability of \emph{Message-Passing  Neural Networks (MPNNs)} to separate distinct graphs is at most that of the Weisfeiler-Leman (WL) graph isomorphism test, with equivalence taking place if the multiset functions used in the MPNN are injective \cite{xu2018powerful}. Injective multiset functions are also used in generalizations of this result, such as the equivalence of high-order \emph{Graph Neural Networks (GNNs)} to high-order WL tests \cite{morris2019weisfeiler,maron2019provably}, and recent results on geometric GNNs and their corresponding WL tests \cite{hordan2023complete,joshi2023expressive,li2023distance,pozdnyakov2022incompleteness,delle2023three}. 

Using the fact that an embedding dimension of one is sufficient to achieve injectivity on $\Sn$ with countable $\Omega$, we show in Theorem~\ref{thm:gnn} that standard MPNNs with analytic non-polynomial activations and random parameters have the separation power of WL, even when their architecture only uses a single feature per node. This can be compared on the one hand with the construction in \cite{xu2018powerful}, which also requires a single node feature but uses multiset aggregators that are not MLPs, and on the other hand with works that do consider MLPs with ReLU activations \cite{morris2019weisfeiler,aamand2022exponentially}, but require a number of node features and parameters that depends polylogarithmically on the number of nodes. In contrast, our construction requires a single node feature and a fixed number of parameters (though we have no bound on the number of bits required for achieving separation using floating-point arithmetic). A numerical verification of these results is shown in Figure~\ref{fig:graphs_and_jac}(a), where we show that, on the 600 graphs in the TUDataset \cite{Morris+2020}, MPNNs with three different analytic activations were equivalent to the WL-test even with a single node feature, whereas three different PwL activations were in some cases weaker than WL when a small number of node features was used.  

Independently of this work, it was recently proved in \cite{khalife2023power} that MPNNs with certain analytic activations can separate all trees of depth two, while separation fails with PwL and even piecewise-polynomial activations. Our results here are stronger in that we show separation for \emph{all} graphs separable by WL, and \emph{all} analytic non-polynomial activations.

\begin{figure}
\includegraphics[width=\textwidth]{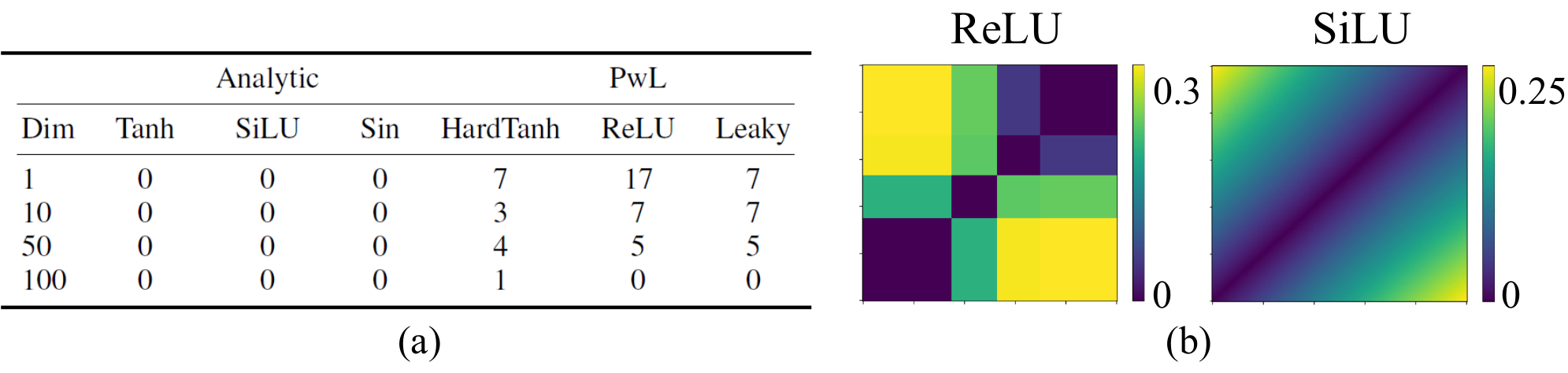}
\caption{(a) The number of failures of graph neural networks, with varying hidden dimension and activation, to achieve WL separation on the 600 graphs from the TUDataset \cite{Morris+2020}. Analytic activations succeed on all graphs, as Theorem~\ref{thm:gnn} predicts. (b) The normalized smallest singular value of multiset functions induced by piecewise-linear ReLU-networks and analytic SiLU-networks. Piecewise-linear networks have singularities on squares intersecting the diagonal, leading to non-injectivity. Analytic networks are moment injective, but have singularities on the diagonal, which leads to a non-Lipschitz inverse. See the end of Section~\ref{sec:stability} for more details.}
\label{fig:graphs_and_jac}
\end{figure}

\subsection{Notation}
We denote vectors by boldface letters, e.g. $\bx, \by$, and scalars by plain letters $x, y$. 
The inner product of $\ba,\bx$ is denoted by $\ba \cdot \bx$. Throughout this work, the term \emph{measure} always refers to signed measures. 

\section{Moment injectivity with analytic activations}\label{sec:analytic}

In this section, we prove moment injectivity for MLPs with analytic non-polynomial activations.
We begin by showing that for any non-polynomial function $\sigma: \RR \to \RR$, a measure $\mu\in \Mn$ is uniquely determined by the integrals of all functions $\{\sigma(\ba \cdot \bx + b) \mid \ba \in \RR^d, b \in \RR \}$. When this holds, we say that $\sigma$ is \emph{discriminatory}:
\begin{definition}\label{def:disc}
Let $\sigma : \RR \rightarrow \RR$ be a continuous function. We say that $\sigma$ is {\em discriminatory} if for any two signed Borel measures $\mu, \mu'$ on $\RR^d$ that are distinct (i.e. $\mu \neq \mu'$), finite (i.e. $\lvert\mu\left(A \right)\rvert, \lvert\mu'\left(A \right)\rvert < \infty$ for all Borel $A \subseteq \RR^d$) and compactly supported, there exist $\ba \in \RR^d$, $b \in \RR$ such that
\begin{equation}\label{eq:discriminatory}
    \int_{\RR^d} {\sigma (\ba \cdot \bx+b) d\mu(\bx) } 
    \neq
    \int_{\RR^d} {\sigma (\ba \cdot \bx+b) d\mu'(\bx) }.
\end{equation}
\end{definition}

%

%

The definition of discriminatory activation functions comes from\footnote{with a minor change: we do not require that all measures considered are supported on the same fixed compact set.}\ Cybenko's celebrated paper on the universality of MLPs \cite{cybenko1989approximation}, where it was proved that sigmoid-like activations are discriminatory. This, in turn, was used to prove the universality of MLPs with such activations. In later papers \cite{leshno1993multilayer,pinkus1999approximation} it was shown that universality can be achieved by \emph{all} continuous non-polynomial activations. 
In the following simple proposition, we use a reverse argument to that used by Cybenko, and show that activations that allow for universality are automatically discriminatory:

\begin{restatable}{prop}{disc}
\label{prop:universal_is_discriminatory}
Let $\sigma : \RR \rightarrow \RR$ be a continuous function that is not a polynomial; then $\sigma$ is discriminatory.
\end{restatable}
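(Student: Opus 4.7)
The plan is to prove the proposition by contradiction, running Cybenko's classical Hahn-Banach argument in reverse, with Pinkus's universality theorem for non-polynomial activations playing the role that Cybenko's discriminatory hypothesis played in the original.

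First, I would set up the contrapositive assumption: suppose $\sigma$ is \emph{not} discriminatory. Then there exists a finite signed Borel measure $\mu$ on $\RR^d$ that is compactly supported (say $\mathrm{supp}(\mu) \subseteq K$ for some compact $K$) and not identically zero, yet
\[
\int_{\RR^d} \sigma(\ba \cdot \bx + b)\, d\mu(\bx) = 0 \qquad \text{for all } \ba \in \RR^d, \, b \in \RR.
\]
Define the linear functional $L : C(K) \to \RR$ by $L(\phi) = \int_K \phi\, d\mu$. Since $\mu$ is finite, $|L(\phi)| \leq \|\phi\|_\infty \cdot |\mu|(K)$, so $L$ is continuous with respect to the uniform norm on $C(K)$.

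Next, I would invoke Pinkus's extension of the universality theorem: when $\sigma$ is continuous and non-polynomial, the space of finite linear combinations $\sum_{i=1}^N c_i\, \sigma(\ba_i \cdot \bx + b_i)$, restricted to $K$, is dense in $C(K)$ under the uniform norm. By the linearity of $L$ and our assumption, $L$ vanishes on every such combination. Continuity of $L$ then forces $L \equiv 0$ on all of $C(K)$, i.e.\ $\int_K \phi\, d\mu = 0$ for every $\phi \in C(K)$.

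Finally, by the Riesz-Markov-Kakutani representation theorem (applied to the finite regular signed Borel measure $\mu$ on the compact Hausdorff space $K$), vanishing of $L$ on $C(K)$ implies $\mu = 0$, contradicting our choice of $\mu$. Hence $\sigma$ must be discriminatory. The main point to be careful about is simply citing Pinkus's theorem in its stated form for shallow affine-combination networks on compact sets; once that is in place, the rest is a clean Hahn-Banach / Riesz argument. I do not anticipate any genuine technical obstacle here; the proposition really is, as the excerpt hints, just a bookkeeping reversal of Cybenko's original implication.
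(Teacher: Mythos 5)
Your proof is correct and takes essentially the same route as the paper: assume all the integrals vanish, use linearity plus Pinkus's density theorem to conclude $\int \phi\, d\mu = 0$ for every continuous $\phi$ on the compact support, then apply Riesz representation to get $\mu = 0$. The paper phrases this directly rather than as a contradiction, but the argument is the same.
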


\begin{proof}[Proof idea]
Suppose that $\int \sigma(\ba \cdot \bx+b) d\mu = \int \sigma(\ba \cdot \bx+b) d\mu'$ for all $\ba,b$. By the universality theorem for shallow MLPs \cite{pinkus1999approximation}, all continuous functions can be approximated by linear combinations of functions of the form $\sigma(\ba \cdot \bx+b)$. Thus, for any continuous function $f$, $\int f d\mu = \int f d\mu'$. Since a measure is uniquely determined by its integrals of all continuous functions, $\mu$ is equal to $\mu'$.
\end{proof}

Next, we shall prove our main result: If $\sigma$ is \emph{analytic} and discriminatory, then shallow MLPs of reasonable width with $\sigma$ as activation are moment injective. 
 
\begin{theorem}\label{thm:continuous_alph}
Let $\sigma:\RR \to \RR$ be an analytic non-polynomial function. Let $n,d \in \NN$, and set $m=2n(d+1)+1$. Then for Lebesgue almost any $\bA \in \RR^{m \times d}, \bb \in \RR^m $, the shallow MLP $f(\bx)= \sigma(\bA \cdot \bx+\bb) $ is moment injective on $\Mnd$; namely, the function $\hat f:\Mn \to \RR^m$ given by
\begin{equation}\label{eq:momentsofmeasure}
    \hat f\left(\mu\right)
    %
    %
    = \sum_{i=1}^n w_i \sigma\left( \bA\bx_i + \bb\right)
    \ 
    \textup{ for }\ \mu = \sum_{i=1}^n w_i \delta_{\bx_i}
\end{equation}
is injective.

For moment injectivity on $\Snd$, it suffices to take $m=2nd+1$. For $\MnSig$ or $\SnSig$ with countable $\Sigma$, $m=2n+1$ and $m=1$ respectively are sufficient.
\end{theorem}
Our proof of \cref{thm:continuous_alph} is based on a separate theorem, which we name the \emph{finite witness theorem}. This theorem enables us to show that, since any two measures can be discriminated by an integral $\int \sigma(\ba\cdot \bx+b) d\mu(\bx)$ for some choice of parameters $\ba$,$b$, there exists a finite number of \emph{witness} parameters $\left(\ba_i,b_i\right)_{i=1}^m$ that are sufficient for discriminating between \emph{any} two measures. This holds under the assumption that the number of points in both measures is bounded. We shall now state a simple version of this theorem, which suffices for proving \cref{thm:continuous_alph}.
\begin{restatable}{theorem}{hardest}\label{thm_fwt_simple}(Finite Witness Theorem, simple version)
Let $\M\subseteq \RR^L$ be a countable union of affine sets, each of which is of dimension $\leq D$. Let $\W \subseteq \RR^{D_{\theta}}$ be open and connected. Let $F\left(\bx;\btheta\right):\M \times \W\to \RR$ be an analytic function. Then for almost any $\left(\btheta^{(1)},\ldots,\btheta^{(2D+1)}\right) \in \W^{2D+1}$,
the following set equality holds:
    \begin{equation*}    
    \begin{split}        
    &\{\left(\bx,\by\right)\in \M \times \M \setmid F\left(\bx;\btheta\right)=F\left(\by;\btheta\right),\ \forall \btheta\in \W \} =
    \\
    &\{\left(\bx,\by\right) \in \M \times \M \setmid F\left(\bx;\btheta^{(i)}\right)
    = F\left(\by;\btheta^{(i)}\right),\ \forall i=1,\ldots 2D+1\}.
    \end{split}
    \end{equation*}    
\end{restatable}

Using the finite witness theorem, we are now ready to prove \cref{thm:continuous_alph}.
\begin{proof}[Proof of \cref{thm:continuous_alph}]
Recall that a signed measure $\mu \in \Mnd$ can be parameterized, albeit not uniquely, by a matrix $\bX=(\bx_1,\ldots,\bx_n)\in \RR^{d\times n}$ representing $n$ points in $\RR^d$, and a weight vector $\bw = \left(w_1,\ldots,w_n\right)$, such that $\mu = \sum_{i=1}^n w_i\delta_{\bx_i}$.
Let $\M$ be the space of measure parameters 
$$\M=\{(\bw,\bX)\in \RR^n \times \RR^{d\times n}\}.$$
%
Similarly, let $\W$ be the space of parameters
$$\W=\{(\ba,b)\in \RR^d \times \RR\}.$$
Define $F:\M\times\W \to \RR$ by
\begin{equation}
	F(\bw,\bX;\ba,b)=\sum_{i=1}^n w_i\sigma(\ba\cdot \bx_i+b).
\end{equation}
We now invoke the finite witness theorem. Set $m=2n(d+1)+1$, and note that $m=2\dim\left(\M\right)+1$. Recall that $F$ is analytic. According to \cref{thm_fwt_simple}, for almost any choice of 
$\left(\ba_i,b_i\right)_{i=1}^m \in \W$,
\begin{equation}\label{proof_alph_set_equality}
\begin{split}
    &\{\left(\left(\bw,\bX\right),\left(\bw',\bX'\right)\right)\in \M \times \M \setmid F\left(\bw,\bX;\ba,b\right)=F\left(\bw',\bX';\ba,b\right),\ \forall \left(\ba,b\right)\in \W \} =
    \\
    &\{\left(\left(\bw,\bX\right),\left(\bw',\bX'\right)\right)\in \M \times \M \setmid F\left(\bw,\bX;\ba_i,b_i\right)=F\left(\bw',\bX';\ba_i,b_i\right),\ \forall i=1,\ldots,m\}.
\end{split}
\end{equation}
Let $\bA \in \RR^{m \times d}$ with rows $\ba_1,\ldots,\ba_m$, and $\bb = \left(b_1,\ldots,b_m\right)$. Suppose that $\bA$,$\bb$ indeed satisfy \labelcref{proof_alph_set_equality}.

Let $\mu,\mu' \in \Mn$ be two measures with parameters $\left(\bw,\bX\right)$, $\left(\bw',\bX'\right)$ respectively. 
\cref{proof_alph_set_equality} implies that if the function $\hat f$ of \eqref{eq:momentsofmeasure} satisfies $\hat f\left(\mu\right)= \hat f\left(\mu'\right)$, then $\left(\bw,\bX\right)$, $\left(\bw',\bX'\right)$ are not separated by  the entire family of functions $\{ F\left(\argdot;\ba,b\right) \setmid \ba \in \RR^d, b \in \RR\}$. Since $\sigma$ is discriminatory, this in turn implies that $\mu = \mu'$. This concludes the proof of moment injectivity on $\Mnd$. 



%

If we are only interested in moment injectivity on $\Snd$, it is sufficient to apply the theorem to
$$\M=\bigcup_{\bw \in \{0,1\}^n} \{(\bw,\bX) \setmid \bX\in \RR^{d\times n}\},$$
which is a finite union of affine subspaces of dimension $D=nd$. Thus, \cref{thm_fwt_simple} only requires $m=2nd+1$ to achieve injectivity on $\Snd$. Similarly, when considering $\MnSig$ with a countable $\Sigma$, the theorem can be applied to a domain $\M$ that can be written as a countable union of affine spaces  of dimension $n$, which yields $m=2n+1$. Finally, $\SnSig$ is a countable union of points, namely zero-dimensional affine subspaces, and therefore $m=1$ is sufficient in this case. 
\end{proof}

\subsection{More on the finite witness theorem}
The finite witness theorem can be used to prove moment injectivity for functions beyond the activated inner-product form of \labelcref{eq:momentsofmeasure}. As an example, we show in the following proposition that Gaussian functions with random parameters are moment injective:

\begin{restatable}{prop}{gauss}
\label{prop:gauss}
Let $n,d \in \NN$ and set $m=2n(d+1)+1$. Let $\W = \left(\by,\sigma\right) \in \RR^d \times \RR_+$. Then for Lebesgue almost any $(\by_i,\sigma_i)_{i=1}^m \in \W^m$, the function
$$f(\bx)=\left( \exp\left(-\frac{\|\bx-\by_1\|^2}{\sigma_1^2}  \right),\ldots, \exp\left(-\frac{\|\bx-\by_m\|^2}{\sigma_m^2}  \right)  \right) $$
is moment injective on $\Mnd$.
\end{restatable}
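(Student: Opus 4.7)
The plan is to follow the template of the proof of Theorem~\ref{thm:continuous_alph} essentially verbatim, replacing the hyperplane-based separator $\sigma(\ba \cdot \bx + b)$ with the Gaussian kernel. A technical preliminary is that $\exp(-\|\bx-\by\|^2/\sigma^2)$ fails to be analytic in $\sigma$ at $\sigma=0$, so I would first reparameterize by $t = 1/\sigma^2$. Under this substitution, $\exp(-t\|\bx-\by\|^2)$ is jointly analytic in $(\bx,\by,t)\in\RR^{2d+1}$, which is what the finite witness theorem demands.

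The main non-routine ingredient is the analogue of the discriminatory property from Proposition~\ref{prop:universal_is_discriminatory}: any nonzero finite compactly supported signed measure $\nu$ on $\RR^d$ must admit some $(\by,t)$ with $\int \exp(-t\|\bx-\by\|^2)\,d\nu(\bx)\neq 0$. This follows from a standard Fourier argument: fixing $t=1$ and writing the integral as a convolution $(\nu * G)(\by)$ against a standard Gaussian $G$, vanishing for all $\by$ gives $\hat\nu \cdot \hat G \equiv 0$, and since $\hat G$ is a nowhere-vanishing Gaussian this forces $\hat\nu \equiv 0$; finite signed measures are determined by their Fourier transforms, so $\nu=0$.

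With discriminatoriness in hand, I would define
\[ F(\bw,\bw',\bX,\bX';\by,t) = \sum_{i=1}^n w_i e^{-t\|\bx_i - \by\|^2} - \sum_{i=1}^n w_i' e^{-t\|\bx_i' - \by\|^2} \]
on the parameter space $\M = \RR^n \times \RR^n \times \RR^{d\times n} \times \RR^{d\times n}$, which is a single affine subspace of dimension $D = 2n(d+1)$, and invoke the finite witness theorem (Theorem~\ref{thm:hardest}) with $D_\theta = d+1$. This yields $D+1 = 2n(d+1)+1 = m$ witness parameters $(\by^{(j)}, t^{(j)})_{j=1}^m$ for which, for almost every choice, vanishing of $F$ on the finite family forces vanishing on the full parametric family; combined with discriminatoriness, this forces equality of the two measures, which is moment injectivity. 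Finally, translating the ``almost every'' conclusion back from $t$-coordinates to the $\sigma$-coordinates of the statement is immediate, since $(\by,\sigma)\mapsto(\by,1/\sigma^2)$ is a smooth diffeomorphism on $\RR^d \times (0,\infty)$ and hence preserves Lebesgue null sets in both directions. I do not anticipate any serious obstacle beyond the Fourier step, which is classical; everything else is a direct transcription of the proof of Theorem~\ref{thm:continuous_alph}.
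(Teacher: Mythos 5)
Your proof is correct and follows the paper's template closely: reparameterize so the Gaussian is jointly analytic, establish a discriminatory property, apply the finite witness theorem, and translate the a.e.\ statement back to $\sigma$-coordinates. The one place where your argument genuinely diverges from the paper is the discriminatory step. You fix the scale ($t=1$) and vary the center $\by$, interpreting the moment as a convolution $(\nu * G)(\by)$ against a fixed Gaussian; vanishing for all $\by$ gives $\hat\nu\,\hat G\equiv 0$, and since $\hat G$ is nowhere zero this forces $\hat\nu\equiv 0$ and hence $\nu=0$. The paper instead fixes the center at an atom $\by_0$ of $\mu-\mu'$ and sends the scale to infinity, so that $\int \exp(-\alpha^2\|\bx-\by_0\|^2)\,d(\mu-\mu')(\bx) \to (\mu-\mu')(\{\by_0\})\neq 0$ as $\alpha\to\infty$ by dominated convergence. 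Your Fourier route is the more classical one and applies to arbitrary finite compactly supported signed measures; the paper's localization argument is more elementary but specifically exploits that elements of $\Mnd$ are purely atomic (for a non-atomic $\nu$ the localization limit would be identically $0$). Both are valid here since only discrete measures arise. Your choice of parameter $t=1/\sigma^2$ (versus the paper's $\alpha=1/\sigma$ with $\alpha^2$ in the exponent) is an equivalent and perhaps slightly cleaner reparameterization; your final null-set transfer argument matches the paper's Lipschitz-on-dyadic-pieces reasoning in spirit and is adequate.
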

\begin{proof}[Proof idea]
Any two measures with bounded support can be separated by the moment of a Gaussian function supported on a small ball around a point where the measures disagree. Thus, a measure in $\Mn$ is uniquely defined by the continuous family of all its Gaussian moments. The finite witness theorem then shows that a finite number $m$ suffices.  
\end{proof}

The full version of the finite witness theorem (\cref{thm_fwt_full}), discussed in \cref{app:sigmasub}, is more general than \cref{thm_fwt_simple}. In this version, the class of sets admissible as $\M$ is the class of $\sigma$-subanalytic sets. While its definition is technically involved (see \cref{app:sigmasub}), this class is quite vast: it includes all open sets, all semialgebraic sets (including affine spaces, polygons, and closed $\ell_2$-balls), and countable unions thereof. The analyticity assumption on $F$ is also substantially relaxed to $\sigma$-subanalyticity, though this requires an additional condition \labelcref{eq:assumption} in the theorem assumptions.

The proof of the finite witness theorem is non-trivial, and we regard it as the main technical contribution of this work. In essence, the proof generalizes a similar result in \cite{dym2022low}, which only applies to polynomial functions on sets defined by polynomial constraints --- known as \emph{semialgebraic sets}. This class of sets has several nice properties, which the proof in \cite{dym2022low} relies on: It is closed under linear projections, finite unions, finite intersections, and complements. Moreover, any semialgebraic set is a finite union of smooth manifolds.

Our generalization from the polynomial to the analytic setting consists of two steps: First, we generalize the theorem to a larger class of sets, called \emph{globally subanalytic sets}, which are known to be an \emph{o-minimal system} --- essentially, a family of sets that has the same nice properties of semialgebraic sets mentioned above. This generalization is straightforward; however, it does not allow $F$ to be an arbitrary analytic function, and thus does not suffice even to prove the weaker version, \cref{thm_fwt_simple}. Our second step is then to observe that our proof carries through also when considering countable unions of globally subanalytic sets, which we name \emph{$\sigma$-subanalytic sets}. This, in turn, paves the way to prove the full version of the finite witness theorem.

Using the more general version of the theorem, we can prove the following proposition, which in particular implies moment injectivity of \emph{deep} networks, provided that the last activation is analytic:
\begin{restatable}{prop}{deep}
\label{prop:deep}
Let $\sigma:\RR \to \RR$ be an analytic non-polynomial function. Let $n,d\in \NN$  and set $m=2n(d+1)+1$. Let $f:\RR^d\to \RR^L$ be an injective function that is a composition of PwL functions and analytic functions. Then for Lebesgue almost any $\bA \in \RR^{m \times L}, \bb \in \RR^m $, the function $\sigma(\bA f(\bx)+\bb) $ is moment injective on $\Mnd$.
\end{restatable}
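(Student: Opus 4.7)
The plan is to imitate the proof of Theorem~\ref{thm:continuous_alph} with $F(\bx)$ playing the role of $\bx$ inside the activation, and to invoke the strengthened finite witness theorem from Appendix~\ref{app:sigmasub} that allows $\sigma$-subanalytic (rather than analytic) target functions. The main obstacle will be precisely the fact that $F$, being partly PwL, destroys global analyticity of the witnessing function in the measure parameters; $\sigma$-subanalyticity is exactly the tool for bypassing this.

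First I would observe that injectivity of $F$ reduces the problem to a discrimination statement in $\RR^L$. Indeed, if $\mu=\sum_{i=1}^n w_i\delta_{\bx_i}$ and $\mu'=\sum_{i=1}^n w'_i\delta_{\bx'_i}$ are distinct elements of $\Mnd$, then the pushforward measures $F_*\mu$ and $F_*\mu'$ are distinct elements of $\mathcal{M}_{\leq n}(\RR^L)$, because injectivity of $F$ rules out any coincidental merging of support points. By Proposition~\ref{prop:universal_is_discriminatory}, $\sigma$ is discriminatory, so there exist $\ba\in\RR^L, b\in\RR$ with
$$\int_{\RR^L}\sigma(\ba\cdot \by+b)\,d(F_*\mu-F_*\mu')(\by)=\sum_{i=1}^n w_i\sigma(\ba\cdot F(\bx_i)+b)-\sum_{i=1}^n w'_i\sigma(\ba\cdot F(\bx'_i)+b)\neq 0.$$
Defining
$$G(\bw,\bw',\bX,\bX';\ba,b)=\sum_{i=1}^n w_i\sigma(\ba\cdot F(\bx_i)+b)-\sum_{i=1}^n w'_i\sigma(\ba\cdot F(\bx'_i)+b),$$
on the affine parameter space $\M=\RR^n\times\RR^n\times\RR^{d\times n}\times\RR^{d\times n}$ of dimension $D=2n(d+1)$, the above says that the zero set of $G(\cdot;\ba,b)$ over all $(\ba,b)$ coincides with the diagonal $\{\mu=\mu'\}$ in $\M$. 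As in the proof of Theorem~\ref{thm:continuous_alph}, it then suffices to produce $m=D+1=2n(d+1)+1$ parameter values $(\ba_i,b_i)$ such that joint vanishing of $G(\cdot;\ba_i,b_i)$ for $i=1,\dots,m$ forces vanishing for every $(\ba,b)$.

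This last step is exactly where the main obstacle appears and exactly where the general version of the finite witness theorem in Appendix~\ref{app:sigmasub} is needed: because $F$ contains PwL pieces, the map $\bX\mapsto G(\bw,\bw',\bX,\bX';\ba,b)$ is \emph{not} analytic, so Theorem~\ref{thm:hardest} as stated does not apply. The plan is therefore to verify that $G$ falls into the $\sigma$-subanalytic class for which the strengthened theorem holds, using that PwL functions are (semialgebraic, hence) subanalytic, that analytic functions are $\sigma$-subanalytic, and that the appropriate composition/multiplication/addition operations preserve the $\sigma$-subanalytic class — together with the mild auxiliary hypothesis on $F$ required by that theorem. Once this verification is in place, applying the strengthened finite witness theorem with $D=2n(d+1)$ and $D_\theta=L+1$ yields the desired $m=2n(d+1)+1$ parameters and completes the proof. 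The routine verification that compositions of PwL and analytic maps with an analytic outer activation remain $\sigma$-subanalytic is what I expect to be the only delicate technical point, and it is the place where the additional moderate hypothesis mentioned after Theorem~\ref{thm:hardest} would be invoked.
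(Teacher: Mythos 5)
Your proposal matches the paper's proof essentially step for step: reduce to a discrimination statement via the pushforward $F_*\mu$ using injectivity of $F$, note that the witness function fails to be analytic in the measure parameters because of the PwL layers of $F$, and invoke the $\sigma$-subanalytic strengthening of the finite witness theorem with $D=2n(d+1)$ and $D_\theta = L+1$. The one place you can tighten is the treatment of the ``moderate hypothesis'' \eqref{eq:assumption}: the paper does not verify it ad hoc but instead applies Corollary~\ref{cor:gen2}, which observes that the witness function is still \emph{analytic in the parameter coordinates} $(\ba,b)$ alone — for fixed $\bX$ the map $(\ba,b)\mapsto \sum_i w_i\sigma(\ba\cdot F(\bx_i)+b)$ is plainly analytic — and that this already forces the zero set in $\bth$ to be dimension-deficient whenever it is not all of $\RR^{D_\theta}$. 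Splitting the variables this way dissolves the delicate point you flagged.
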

In particular, $F$ could be a neural network that has increasing widths, linear layers with full rank, and injective activations that are either PwL or analytic (such as leaky ReLU or sigmoid). Therefore, Proposition~\ref{prop:deep} shows that increasing the network depth will not have a negative effect on its moment-injectivity. While this may seem trivial, what is not immediate in this formulation is that the embedding dimension $m$ depends linearly on $n\cdot d$ rather than $n\cdot L$. The reason this is true is that the shallow neural network applied to $F(\bx)$ will only `see' inputs that originate from the set $F\left(\RR^d\right)$, and in \cref{app:sigmasub} we show that this is a $\sigma$-subanalytic set of dimension $\leq d$.

\section{Failure of moment injectivity for piecewise-linear functions}\label{sec:pwl}
In this section, we show that moments of neural networks with piecewise-linear activations (such as ReLU, leaky ReLU and the hard hyperbolic tangent) cannot be injective when the alphabet is infinite, except for some singular cases discussed below. %
\begin{prop}\label{prop:pwlsetsR}
	Let $d,m$ and $n\geq 2$ be natural numbers and $\Omega\subseteq \RR^d$ an open set. If $\bpsi:\RR^d \to \RR^m$ is piecewise linear, then it is not moment injective on $\Sn$. 
\end{prop}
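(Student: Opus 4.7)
The plan is to exploit the key structural feature of a piecewise linear (PwL) map $\bpsi$: on some open ball $B\subseteq\Omega$, $\bpsi$ is affine, and on multisets supported in $B$ the induced moment map $\hat\bpsi$ only sees the sum of the multiset entries. Since $n\geq 2$, one can easily produce distinct multisets in $B$ with the same cardinality and the same sum of entries, which will then collide under $\hat\bpsi$.

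For the first step, I would unpack the definition of PwL: there is a (locally finite) polyhedral complex whose cells cover $\RR^d$ and on each of which $\bpsi$ is affine. The top-dimensional cells have non-empty interior, and their union is dense in $\RR^d$ (the lower-dimensional cells form a closed, nowhere-dense, measure-zero set). Hence the set of points in whose neighbourhood $\bpsi$ is affine is open and dense; intersecting it with the non-empty open set $\Omega$ produces an open ball $B\subseteq\Omega$ on which $\bpsi(\bx)=\bA\bx+\bb$ for some $\bA\in\RR^{m\times d}$ and $\bb\in\RR^m$. For the second step, note that for any multiset $\lms \bx_1,\ldots,\bx_k\rms$ with all entries in $B$ we have $\hat\bpsi(\lms \bx_1,\ldots,\bx_k\rms)=\bA(\bx_1+\cdots+\bx_k)+k\bb$, which depends only on the cardinality $k$ and on the sum of the entries. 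Picking two distinct points $\bx_1,\bx_2\in B$ and setting $\by=\tfrac{1}{2}(\bx_1+\bx_2)\in B$ gives two distinct multisets $\lms \bx_1,\bx_2\rms\neq\lms \by,\by\rms$ in $\S_{\leq 2}(\Omega)\subseteq\S_{\leq n}(\Omega)$ that collide under $\hat\bpsi$, contradicting injectivity.

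The only mildly technical step is producing the open ball on which $\bpsi$ is affine. This is essentially a combinatorial fact about PwL cell complexes and I expect it to be routine, but it is worth stating explicitly since the proposition does not spell out which definition of ``piecewise linear'' is adopted. Note also that the construction never uses $d\geq 2$, so the hypothesis is only there for uniformity with the other results in this section; the argument goes through verbatim in any dimension once $n\geq 2$.
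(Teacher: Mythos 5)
Your proof is correct and takes essentially the same approach as the paper: find an open region of $\Omega$ on which $\bpsi$ is affine, then produce two distinct size-two multisets there with the same entry-sum. The paper parameterizes the colliding pair as $\lms \bx_0,\bx_0\rms$ and $\lms \bx_0-\epsilon\bd,\bx_0+\epsilon\bd\rms$ rather than as a pair and its midpoint doubled, but these are the same construction; you are slightly more explicit than the paper about why an affine ball can be found inside $\Omega$, and your remark that $d\geq 2$ is never used matches the paper's argument as well.
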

\begin{proof}
There exists some open $U \subset \RR^d$ such that $\bpsi(\bx)$ is of the form $\bpsi(\bx)=\bA\bx+\bb$ in $U$. Let $\bx_0 \in U$ and let $\bd \neq \bzero \in \RR^d $. For small enough $\epsilon > 0$, we have that $\bx_0+\epsilon \bd$ and $\bx_0-\epsilon \bd$ are in $U$. It follows that  the multisets $\lms \bx_0,\bx_0 \rms $ and $\lms \bx_0-\epsilon \bd,\bx_0+\epsilon \bd \rms $ have the same moments:
$$\bpsi(\bx_0)+\bpsi(\bx_0)=2\left(\bA\bx_0+\bb  \right)=\bA(\bx_0-\epsilon \bd)+\bb+\bA(\bx_0+\epsilon \bd)+\bb= \bpsi(\bx_0-\epsilon \bd )+\bpsi(\bx_0+\epsilon \bd ).$$
This proves that $\psi$ is not moment injective on $\Sn$.
\end{proof}
The basic idea behind the above proof is that inside a linear region of $\bpsi$, different multisets with the same center of mass have the same moments. The same idea can be used to prove failure of moment injectivity of PwL functions on $\Mn$ for \emph{any} infinite $\Omega$, and on $\mathcal{S}_{\leq n}(\ZZ^d)$. On the other hand, PwL networks can be moment injective on $\Mn$ with finite $\Omega$, as well as on $\SnSig$ when $\Sigma$ is a somewhat pathological infinite countable alphabet. These results are described in Appendix~\ref{app:pwl}.

\section{Failure of bi-Lipschitzness for general moment functions}\label{sec:stability}
In \cref{sec:analytic} we have shown that a neural network $f:\RR^d \to \RR^m$ with analytic non-polynomial activation can induce an injective multiset function $\hat f:\Snd \to \RR^m$. Ideally, we wish such $\hat f$ to be \emph{bi-Lipschitz}, meaning that there exist constants $0<c\leq C $ such that
\begin{equation}\label{eq:lip}
c \cdot W_2(S_1,S_2) \leq  \|\hat{f}(S_1)-\hat{f}(S_2)\|\ \leq C \cdot W_2(S_1,S_2) ,\quad \forall S_1,S_2 \in \Snd ,
\end{equation}
where $W_2(S_1,S_2)$ is the 2-Wasserstein distance between the two measures $\mu_1,\mu_2$ that assign uniform weights to the points in $S_1,S_2$ respectively. Unfortunately, we find that \emph{any} moment function $\hat f$ induced by some $f:\RR^d \to \RR^m$ cannot be bi-Lipschitz, assuming that $f$ is differentiable in at least one point.
\begin{restatable}{prop}{unstable}
\label{thm:unstable}
Let $n\geq 2$, $d,m \in \NN$, and let $f:\RR^d \to \RR^m$ be differentiable at some $\bx_0\in \RR^d$. Then the induced moment function $\hat f:\Snd \to \RR^m$ defined in \eqref{eq:setsum} is not bi-Lipschitz.
\end{restatable}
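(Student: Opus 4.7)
The plan is to run an infinitesimal version of the argument in Proposition~\ref{prop:pwlsetsR}. Since $n\ge 2$, for any nonzero $\bd\in\RR^d$ and small $\epsilon>0$ the two multisets
$$T_\epsilon = \lms \bx_0,\bx_0 \rms, \qquad S_\epsilon = \lms \bx_0+\epsilon\bd,\,\bx_0-\epsilon\bd \rms$$
both lie in $\Snd$ and are distinct. I would first estimate $W_2(S_\epsilon,T_\epsilon)$ from above by the obvious transport plan, which pairs one copy of $\bx_0$ with $\bx_0+\epsilon\bd$ and the other with $\bx_0-\epsilon\bd$; this immediately gives $W_2(S_\epsilon,T_\epsilon)\le c_1 \epsilon$ for some positive constant depending only on $\|\bd\|$. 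A matching lower bound of the same order is also easy to obtain, so in fact $W_2(S_\epsilon,T_\epsilon) = \Theta(\epsilon)$ as $\epsilon\to 0$.

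Next I would use differentiability of $f$ at $\bx_0$ to Taylor-expand the moment output. Writing $f(\bx_0\pm\epsilon\bd) = f(\bx_0)\pm \epsilon\, Df(\bx_0)\bd + o(\epsilon)$ and summing, the linear terms cancel and one obtains
$$\hat f(S_\epsilon)-\hat f(T_\epsilon) = f(\bx_0+\epsilon\bd)+f(\bx_0-\epsilon\bd)-2f(\bx_0) = o(\epsilon).$$
This is the crux of the argument: the symmetric perturbation kills the first-order contribution, leaving only a sublinear remainder. Comparing with the $\Theta(\epsilon)$ lower bound on $W_2$ shows that the ratio $W_2(S_\epsilon,T_\epsilon)/\|\hat f(S_\epsilon)-\hat f(T_\epsilon)\|$ tends to $\infty$ along this sequence (whenever the denominator is nonzero), which directly contradicts the right-hand inequality in \eqref{eq:lip} for any constant $C$. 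In the degenerate case that $\hat f(S_\epsilon)=\hat f(T_\epsilon)$ for some $\epsilon>0$, the map $\hat f$ is not even injective on $\Snd$, so the left-hand inequality in \eqref{eq:lip} fails at those two distinct multisets. Either way, $\hat f$ cannot be bi-Lipschitz.

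I do not foresee a real obstacle: the only subtle point is confirming that the Wasserstein distance between the two multisets is genuinely $\Theta(\epsilon)$ and not smaller, but this follows from any standard definition since the supports of $S_\epsilon$ and $T_\epsilon$ are separated by exactly $\epsilon\|\bd\|$. The proof requires no assumption on $f$ beyond differentiability at a single point $\bx_0$, and does not use the dimension $m$ of the codomain. Thus the instability is an intrinsic feature of the moment construction \eqref{eq:setsum}, orthogonal to whatever architecture is used for $f$.
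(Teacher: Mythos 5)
Your proof is correct and follows essentially the same route as the paper: both exploit the symmetric perturbation $\lms\bx_0,\bx_0\rms$ versus $\lms\bx_0+\epsilon\bd,\,\bx_0-\epsilon\bd\rms$, use differentiability at $\bx_0$ to show $\hat f(S_\epsilon)-\hat f(T_\epsilon)=o(\epsilon)$ while $W_2(S_\epsilon,T_\epsilon)=\Theta(\epsilon)$, and conclude that no finite upper Lipschitz constant $C$ in \eqref{eq:lip} can exist. One minor correction in your degenerate case: if $\hat f(S_\epsilon)=\hat f(T_\epsilon)$ for two distinct multisets it is the \emph{right}-hand inequality of \eqref{eq:lip} that fails (one has $W_2>0$ but $C\cdot 0=0$), not the left-hand inequality, which holds trivially since its left side is zero.
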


Figure~\ref{fig:graphs_and_jac}(b) illustrates the underlying reason for this failure of bi-Lipschitzness, and its relation to the non-injectivity of PwL moments: consider a shallow neural network $f:\RR \to \RR^{10} $ with ReLU activations, and its induced moment function $\hat f(\lms x_1,x_2 \rms)=f(x_1)+f(x_2)$ on multisets in $\mathcal{S}_{\leq 2}\left(\RR\right)$. The left-hand side visualizes the ratio $\sigma_2/\sigma_1$ of the smallest and largest singular values of the differential matrix $D\hat f$. The function $f$ is PwL, with four linear regions $I_1,\ldots,I_4$ in $[0,1]$. The linear regions of $\hat f$ in $[0,1]^2$ are thus the rectangles $I_i \times I_j$. As seen in the figure, there are degeneracies in the rectangles that intersect the diagonal, as for small enough $\epsilon$, $\hat f(\lms x_0+\epsilon,x_0-\epsilon \rms)=\hat f(\lms x_0,x_0 \rms)$ as in the proof of Proposition~\ref{prop:pwlsetsR}. The right-hand side visualizes the same ratio when the analytic SiLU activation is used instead of ReLU. We see that $D\hat f$ is singular on the diagonal. Intuitively, this is because the differentiability of $f$ implies that  it behaves locally like an affine function. This leads to  singularities of $\hat f$ on the diagonal, which do not prevent it from being injective, but do prevent it from being bi-Lipschitz. A proof of this phenomenon is given in the appendix. See also Theorem 21 in \cite{cahill2023bilipschitz}, which independently proved a similar result for general invariant embeddings.

\section{Applications: Universal Approximation and Graph Separation}\label{sec:applications}
As mentioned in the introduction, injective multiset functions can be used to construct multiset architectures with universal approximation power, and to prove separation results for graph neural networks. In this section, we present some immediate corollaries of our results for these two applications. Proofs are in \cref{sec_proofs_applications}.

\subsection{Universal approximation of functions on multisets and measures}
Our first approximation result focuses on multisets of a fixed size $n$ with an alphabet $K \subseteq \RR^d$ that is  compact. Any such multiset is determined by a choice of $n$ vectors in $K$, possibly with repetitions and irrespective of order. Thus, multiset functions on this space are equivalent to permutation-invariant functions on $K^n$. Using the finite witness theorem and a basic topological argument, we prove:

\begin{restatable}{cor}{universalA}
\label{cor:universalA}
Let $n,d\in \NN$ and set $m=2nd+1$. Let $\sigma:\RR \to \RR $ be an analytic non-polynomial function. Let $K\subseteq \RR^d $ be a compact set. Then there exist $\bA\in \RR^{m\times d}, \bb\in \RR^d $ such that for any continuous permutation-invariant $f:K^n \to \RR $, there exists a continuous $F:\RR^m \to \RR$ such that 
\begin{equation}\label{eq:decomposable}
f(\bX)=F\left( \sum_{j=1}^n \sigma(\bA \bx_j+\bb)\right) , \quad \forall \bX=\left(\bx_1,\ldots,\bx_n\right) \in K^n.
\end{equation}
\end{restatable}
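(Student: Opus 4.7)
The plan is to combine the moment-injectivity result of Theorem~\ref{thm:continuous_alph} with a standard compactness/extension argument in the spirit of \cite{zaheer2017deep, wagstaff2022universal}. First, I would invoke Theorem~\ref{thm:continuous_alph} in its version for $\Snd$ to obtain (fixed) parameters $\bA \in \RR^{m\times d}$, $\bb \in \RR^m$ with $m = 2nd+1$ such that $\bx \mapsto \sigma(\bA\bx + \bb)$ is moment injective on $\Snd$, and in particular on the subset $\mathcal{S}_{\leq n}(K)$. Define the continuous permutation-invariant map
\begin{equation*}
\Phi : K^n \to \RR^m, \qquad \Phi(\bx_1,\ldots,\bx_n) = \sum_{j=1}^n \sigma(\bA\bx_j + \bb).
\end{equation*}
Moment injectivity translates exactly into the statement that $\Phi(\bx) = \Phi(\by)$ if and only if $(\by_1,\ldots,\by_n)$ is a permutation of $(\bx_1,\ldots,\bx_n)$.

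Next, I would use permutation-invariance of $f$ to define $F_0 : \Phi(K^n) \to \RR$ by $F_0(\Phi(\bx)) := f(\bx)$; this is well-defined by the previous equivalence. To verify that $F_0$ is continuous on $\Phi(K^n)$, I would pass to the orbit space $K^n/S_n$, which is compact and Hausdorff since $K^n$ is, and observe that $\Phi$ factors through an injective continuous map $\tilde\Phi : K^n/S_n \to \RR^m$. A continuous injection from a compact space to a Hausdorff space is a homeomorphism onto its image, so $\tilde\Phi^{-1} : \Phi(K^n) \to K^n/S_n$ is continuous. Since $f$ descends to a continuous map $\tilde f : K^n/S_n \to \RR$, we get $F_0 = \tilde f \circ \tilde\Phi^{-1}$, which is continuous.

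Finally, since $\Phi(K^n)$ is compact (continuous image of a compact set) and hence closed in $\RR^m$, I would apply the Tietze extension theorem to extend $F_0$ to a continuous function $F : \RR^m \to \RR$. By construction, for every $\bx \in K^n$ we have $F(\Phi(\bx)) = F_0(\Phi(\bx)) = f(\bx)$, which is exactly \eqref{eq:decomposable}. The only step that requires any care is the continuity of $F_0$ on $\Phi(K^n)$, but this reduces immediately to the standard fact that a continuous bijection out of a compact Hausdorff space is a homeomorphism. Everything else is formal given Theorem~\ref{thm:continuous_alph}.
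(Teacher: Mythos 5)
Your proof is correct and follows essentially the same approach as the paper: both obtain moment-injectivity on $\Snd$ (hence orbit-separation on $K^n$) from Theorem~\ref{thm:continuous_alph}, and then deduce the factorization through the injective invariant map. The only difference is that the paper black-boxes the compactness/orbit-space/Tietze step by citing Proposition~1.3 of \cite{dym2022low}, whereas you carry out that standard argument explicitly (as the paper itself does in the measure-valued Corollary~\ref{cor:universalB}).
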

Combining \cref{cor:universalA} with the universality of MLPs, we get that any continuous permutation-invariant function on $K^n$ can be approximated by expressions of the form \eqref{eq:decomposable} with $F$ replaced by an MLP. Similar results were obtained for moments of polynomials rather than of MLPs in \cite{zaheer2017deep,wagstaff2022universal,dym2022low}.

It is worth noting that an analogue of \cref{cor:universalA} cannot hold with a piecewise-linear $\sigma$, assuming that $K$ has a non-empty interior. This is because by \cref{prop:pwlsetsR}, any fixed moment function induced by a PwL MLP will not be able to separate all multisets, whereas any two distinct multisets can be separated by some continuous $f$. Though, with a PwL $\sigma$, one may approximate any given $f$ to arbitrary precision, by taking the embedding dimension $m$ to infinity. In contrast, with an analytic $\sigma$, we are able to specify a \emph{finite} $m=2nd+1$ for which exact equality in \eqref{eq:decomposable} is guaranteed.

Since our injectivity results on multisets extend to measures, it is natural to seek an extension of the above approximation result to functions defined on measures. Denote by $\PnK$ the space of probability measures supported on $\leq n$ points in $K\subseteq \RR^d$, endowed with the 2-Wasserstein metric. 

\begin{restatable}{cor}{universalB}
\label{cor:universalB}
Let $n,d \in \NN$ and set $m=2n(d+1)+1$. Let $\sigma:\RR \to \RR $ be analytic and non-polynomial. Let $K\subseteq \RR^d$ be compact. Then there exist $\bA\in \RR^{m\times d}, \bb\in \RR^m $ such that for any continuous (in the $2$-Wasserstein sense) $f:\PnK \to \RR $, there exists a continuous $F:\RR^m \to \RR$ such that 
$$f(\mu)=F\left( \int_{\bx\in K} \sigma(\bA \bx+\bb)d\mu(\bx)\right) ,\quad \forall \mu \in \PnK. $$
\end{restatable}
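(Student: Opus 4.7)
My plan is to reduce this to Theorem~\ref{thm:continuous_alph} combined with a standard ``compact injection + Tietze extension'' argument, analogous to Corollary~\ref{cor:universalA} but adapted to the measure-valued setting with the $2$-Wasserstein topology. First, I invoke Theorem~\ref{thm:continuous_alph} to choose any $\bA \in \RR^{m \times d}$ and $\bb \in \RR^m$ (from the full-measure set given by the theorem) such that the moment map
\[
\hat{f}(\mu) := \int_{\RR^d} \sigma(\bA \bx + \bb) \, d\mu(\bx)
\]
is injective on $\Mnd$. Since $\PnK \subseteq \Mnd$, the restriction $\hat{f}|_{\PnK}$ is also injective.

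The key topological ingredient is that $\PnK$ is compact in the $2$-Wasserstein topology. I would prove this by exhibiting $\PnK$ as the image of the compact set $\Delta^{n-1} \times K^n$ (where $\Delta^{n-1}$ is the probability simplex) under the continuous map $(\bw, \bx_1, \ldots, \bx_n) \mapsto \sum_i w_i \delta_{\bx_i}$; continuity here follows because for compactly supported probability measures the Wasserstein distance metrizes weak convergence. Next, $\hat{f}$ is continuous on $\PnK$: the integrand $\bx \mapsto \sigma(\bA \bx + \bb)$ is continuous and hence bounded on the compact set $K$, so $\mu \mapsto \int \sigma(\bA \bx+\bb) \, d\mu$ is continuous with respect to weak convergence, hence with respect to the $2$-Wasserstein metric.

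With these ingredients, $\hat{f} : \PnK \to \RR^m$ is a continuous injection from a compact space into a Hausdorff space, so it is a homeomorphism onto its image $\hat{f}(\PnK)$, which is compact (and therefore closed) in $\RR^m$. I then define $\tilde F : \hat{f}(\PnK) \to \RR$ by $\tilde F(\hat f(\mu)) := f(\mu)$; this is well-defined by injectivity of $\hat f$ and continuous because $\tilde F = f \circ (\hat f|_{\PnK})^{-1}$ is a composition of two continuous maps. Finally, since $\hat f(\PnK)$ is closed in $\RR^m$, the Tietze extension theorem produces a continuous $F : \RR^m \to \RR$ extending $\tilde F$, giving the desired factorization $f(\mu) = F\bigl(\int \sigma(\bA \bx+\bb) \, d\mu(\bx)\bigr)$ for all $\mu \in \PnK$.

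The main obstacle I anticipate is the compactness/continuity step for $\PnK$ under the $2$-Wasserstein topology: one must be careful that the parametrization $(\bw, \bX) \mapsto \sum_i w_i \delta_{\bx_i}$ is indeed continuous into $W_2$ (not merely into the weak topology), and that $\PnK$ is genuinely closed as a subset of all probability measures on $K$. Both facts are standard for $K$ compact, since $W_2$ metrizes weak convergence on probability measures with uniformly bounded support, but they must be stated carefully. Once this compactness is established, the rest of the proof is routine.
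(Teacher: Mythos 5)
Your proposal is correct and follows essentially the same route as the paper's proof: choose $\bA,\bb$ from the full-measure set of Theorem~\ref{thm:continuous_alph} so the moment map is injective on $\Mnd$, establish that $\PnK$ is $W_2$-compact as the continuous image of a compact parameter set, note the moment map is continuous and hence a homeomorphism onto its closed image, and finish with Tietze extension of $f\circ(\hat f|_{\PnK})^{-1}$. The only cosmetic difference is that you correctly write $\bb\in\RR^m$ where the statement (apparently a typo) says $\bb\in\RR^d$.
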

It follows from \cref{cor:universalB} that any continuous function $f:\PnK \to \RR$ with compact $K \subseteq \RR^d$ can be approximated to arbitrary precision by functions of the form $\hat f (\mu) = F\left( \int_{\bx\in K} \sigma(\bA \bx+\bb)d\mu(\bx)\right)$, with $\bA\in \RR^{m\times d}, \bb\in \RR^m $, $m=2nd+1$, and $F$ being an MLP.

\subsection{Graph separation}
We now discuss the implications of \cref{thm:continuous_alph} for graph separation, using terminology from \cite{xu2018powerful}. Let $\GnSig$ be the collection of all graphs $G=(V,E,\bh^{(0)})$ with at most $n$ vertices, endowed with vertex features $\bh_v^{(0)} \in \Sigma$, where $\Sigma \subseteq \RR^d$ is a countable alphabet. We consider GIN-like \cite{xu2018powerful} MPNNs that recursively, for $t=1,\ldots,T$, calculate node features $\bh_v^{(t)} $ from the previous features $\bh_v^{(t-1)} $ by 
\begin{equation}\label{eq:agg}
\bh_v^{(t)}= \sum_{u \in \N(v)} \sigma \left( \bA^{(t)} \left(\eta^{(t)}\bh_v^{(t-1)}+\bh_u^{(t-1)}\right)+\bb^{(t)} \right).
\end{equation}
After the $T$ iterations are concluded, a global feature is computed via a \emph{readout function}: 
\begin{equation}\label{eq:readout}
\bh_G=\sum_{v \in V} \sigma \left( \bA^{(T+1)}\bh_v^{(T)}+\bb^{(T+1)} \right).
\end{equation}
We choose all features $\bh_G $ and $\bh_v^{(t)}$ for $1 \leq t \leq T$, to have the same dimension $m$. Based on the fact that MPNNs are equivalent to $1$-WL when the multiset functions are injective \cite{xu2018powerful}, and on our injectivity results for countable alphabets, we prove that: 
\begin{restatable}{theorem}{thmgnn}
\label{thm:gnn}
Let $n,d,T \in \NN$ and let $\Sigma \subseteq \RR^d $ be countable. Let $m \geq 1$ be \emph{any} integer. Let $\sigma:\RR \to \RR $ be an analytic non-polynomial function. Then for Lebesgue almost any choice of $\bA^{(t)},\bb^{(t)}$ and $\eta^{(t)}$, the MPNN defined in \eqref{eq:agg} and \eqref{eq:readout} assigns different global features to any pair of graphs $G_1, G_2 \in \GnSig$ that  can be separated by $T$ iterations of 1-WL. 
\end{restatable}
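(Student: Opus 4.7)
The plan is to reduce to the standard result of \cite{xu2018powerful}: an MPNN achieves 1-WL separation power in $T$ iterations whenever each of its layer-wise multiset aggregations, as well as its readout, is injective on the features actually arising in the computation. The converse direction (MPNN separation implies 1-WL separation in $T$ iterations) is the usual refinement argument and holds for \emph{any} parameters: by induction on $t$, two nodes with identical $t$-th WL colors have identical $t$-th MPNN features, since the update in \eqref{eq:agg} depends only on $\bh_v^{(t-1)}$ and the multiset $\lms \bh_u^{(t-1)} : u \in \N(v) \rms$, which are determined by the $(t-1)$-st WL color of $v$. Hence it suffices to prove that for almost every $(\eta^{(t)}, \bA^{(t)}, \bb^{(t)})_{t=1}^{T+1}$, every layer and the readout are injective.

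The key observation enabling the application of Theorem~\ref{thm:continuous_alph} is that the set $\F^{(t)} \subseteq \RR^m$ of all possible values of $\bh_v^{(t)}$, taken across $G \in \GnSig$ and $v \in V(G)$, is countable: $\F^{(0)} \subseteq \Sigma$ is countable by assumption, and the recursion \eqref{eq:agg} produces a countable range from countably many inputs regardless of the parameter choices. So at every layer we remain in the countable-alphabet regime of Theorem~\ref{thm:continuous_alph}.

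I decompose the $t$-th layer as
\[
\bigl(\bh_v^{(t-1)},\, \lms \bh_u^{(t-1)} : u \in \N(v) \rms\bigr)
\;\stackrel{\alpha_{\eta^{(t)}}}{\longmapsto}\;
\lms \eta^{(t)} \bh_v^{(t-1)} + \bh_u^{(t-1)} : u \in \N(v) \rms
\;\stackrel{\beta_{\bA^{(t)},\bb^{(t)}}}{\longmapsto}\;
\bh_v^{(t)}.
\]
For $\alpha_{\eta^{(t)}}$: if two distinct input pairs collide, their cardinalities must match, $k = |\N(v)| = |\N(v')|$, and summing the equal multisets yields $k\eta^{(t)}\bigl(\bh_v^{(t-1)} - \bh_{v'}^{(t-1)}\bigr) = \sum_{u'} \bh_{u'}^{(t-1)} - \sum_u \bh_u^{(t-1)}$. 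When $\bh_v^{(t-1)} \neq \bh_{v'}^{(t-1)}$, this constrains $\eta^{(t)}$ to a finite set of values (generically to at most one value); when they are equal, subtracting $\eta^{(t)} \bh_v^{(t-1)}$ from every element of both multisets reduces the presumed equality to $\lms \bh_u^{(t-1)} \rms = \lms \bh_{u'}^{(t-1)} \rms$, contradicting distinctness of the inputs. Since the input pairs form a countable set, the bad $\eta^{(t)}$ values form a countable union of finite sets, hence have Lebesgue measure zero. For $\beta_{\bA^{(t)},\bb^{(t)}}$, the alphabet $\eta^{(t)} \F^{(t-1)} + \F^{(t-1)}$ is countable, so the countable-alphabet case of Theorem~\ref{thm:continuous_alph} guarantees moment injectivity with embedding dimension $m=1$, and in particular with any $m \geq 1$, for almost every $(\bA^{(t)}, \bb^{(t)})$. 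The readout in \eqref{eq:readout} is handled identically on $\S_{\leq n}(\F^{(T)})$. Taking the finite union of the $T+1$ measure-zero bad sets in the joint parameter space yields a measure-zero set off of which the MPNN realizes the full 1-WL separation power.

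The main obstacle I expect is the $\alpha_{\eta^{(t)}}$ step: one must carefully set up the countable family of bad linear equations indexed by pairs of candidate inputs, verify that a single generic $\eta^{(t)}$ simultaneously avoids all of them, and track that the feature alphabets $\F^{(t)}$ remain countable as the induction on $t$ proceeds. Beyond this, the argument is routine bookkeeping on top of Theorem~\ref{thm:continuous_alph} and the standard template of \cite{xu2018powerful}.
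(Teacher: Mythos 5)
Your proposal is correct and takes a genuinely different route from the paper. The paper fixes a single pair of graphs $G,\hat G$ that 1-WL separates in $T$ rounds, observes that the difference of global features $\bh_G(\bth)-\bh_{\hat G}(\bth)$ is an \emph{analytic} function of the full parameter vector $\bth$, and therefore reduces ``a.e.\ $\bth$ separates this pair'' to merely \emph{exhibiting one} separating $\bth$ --- which it does by a recursive choice of parameters over the \emph{finite} feature alphabet of that fixed pair; countability of graph pairs then finishes. You instead aim to make every layer aggregation and the readout injective simultaneously across all graphs, for a.e.\ $\bth$: your $\alpha/\beta$ decomposition, the bound of one bad $\eta^{(t)}$ per collision candidate, and the invocation of the countable-alphabet case of Theorem~\ref{thm:continuous_alph} for $\beta$ mirror the paper's inner recursion, but your outer union is over $T{+}1$ layers rather than over graph pairs, and you never need the analyticity-of-the-composition observation. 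What your route buys is directness (a clean layer-injectivity statement handed to \cite{xu2018powerful}); what the paper's buys is avoiding the slicing issue below.

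There is one step you gloss over. Both your $\alpha$ and $\beta$ arguments are per-slice: for each \emph{fixed} prefix $\bth_{<t}$ (which determines the alphabet $\F^{(t-1)}$) you get a null set of bad $\eta^{(t)}$, and for each fixed $(\bth_{<t},\eta^{(t)})$ a null set of bad $(\bA^{(t)},\bb^{(t)})$. Concluding that the bad set $B_t$ is null \emph{in the joint parameter space} requires Tonelli/Fubini, which in turn needs $B_t$ to be Lebesgue-measurable there. That does hold --- $B_t$ is a countable union, over collision candidates, of sets cut out by Boolean combinations of equalities and inequalities of functions continuous in $\bth$, hence Borel --- but it is a real step, not bookkeeping, because measurability of every slice alone does not give measurability of the whole set. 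The paper sidesteps this entirely: for a fixed graph pair the bad set is the zero locus of a single analytic function of $\bth$, manifestly null, with no slicing and no measurability check. Add the Fubini/measurability sentence and your proof is complete.
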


\paragraph{Graph separation with continuous features}
Up to now, we have discussed graphs with node features coming from a countable alphabet. Since Theorem~\ref{thm:continuous_alph} applies to multisets with continuous alphabets, it can be applied to separation of graphs with continuous node features as well. In particular, the paper  \cite{hordan2023complete} explains how the random semialgebraic multiset function from \cite{dym2022low} can be used to construct architectures for graphs with continuous features, whose separation power is equivalent to WL tests. Their focus is on showing that the embedding dimension in their construction depends linearly on the dimension of the \emph{graph space}, rather than grows exponentially with the number of message-passing iterations $T$.  Similar results can now be obtained by using our random analytic multiset functions. We leave a full description of these aspects to future work. 
\section{Experiments}
\paragraph{Empirical injectivity and bi-Lipschitzness }

We empirically investigated the injectivity and bi-Lipschitzness of moments of shallow networks of the form \eqref{eq:mlp}, by randomly generating a large number of pairs of multisets of $n$ vectors in $\RR^d$, and computing the optimal constants $c,C$ for which  \eqref{eq:lip} holds for the generated pairs. The ratio $c/C $ for varying activations and embedding dimension $m$ is shown in Figure~\ref{fig_injectivity}. Here $d=3$ and $n=1000$. Similar qualitative results were obtained for other values of $d$ and $n$; see \cref{fig_injectivity_group} in \cref{app:exp}.

We observe several interesting phenomena: First, at low embedding-dimensions, the ratio $c/C$ for PwL networks is exactly zero, indicating that they are not injective even on the finite sample set. In contrast, for analytic activations, $c/C$ is always positive. Indeed, we expect analytic activations to be injective on a finite sample even with embedding dimension $m=1$, since a finite sample set has an intrinsic dimension of zero. Next, we observe that $c/C$ naturally improves as $m$ increases. Finally, we note that even for high $m$, $c/C$ is rather small.\begin{wrapfigure}[17]{r}{0.5\textwidth}
\begin{tikzpicture}

\begin{loglogaxis}[height=0.45\textwidth,
		width=0.5\textwidth,
        xlabel={Embedding dimension},
        ylabel={$c/C$},
		grid=major,
        legend style={font=\scriptsize, at={(axis cs:550,1e-6)},anchor=east},
        scaled x ticks = false,
        ytick={1e-8,1e-7,1e-6,1e-5,1e-4,1e-3,1e-2},
        yticklabels={$10^{-8}$, $10^{-7}$, $10^{-6}$, $10^{-5}$, $10^{-4}$, $10^{-3}$, $10^{-2}$},
        xtick={1,10,100,1000},
        xticklabels={$1$,$10$,$100$,$1000$},
        extra y ticks={\bottomval},
        extra y tick labels={$0$},    
        ]

\ifx\currcsvfile\undefined
  \newrobustcmd{\currcsvfile}{plots/injectivity_d3_n1000.csv}
\else
  \renewrobustcmd{\currcsvfile}{plots/injectivity_d3_n1000.csv}
\fi

\input{plots/plot_injectivity_draw_plots}
\end{loglogaxis}
\end{tikzpicture}

\vspace{-3pt}
\caption{}
\label{fig_injectivity}
\end{wrapfigure}
Indeed, if it were possible to consider \emph{all} pairs of multisets when computing $c/C$, we would get zero for all activations and all embedding dimensions, as follows from Theorem~\ref{thm:unstable}. For additional details on this experiment, see Appendix~\ref{app_exp_empirical_injectivity}.
%
%
\paragraph{Graph Separation} To validate Theorem~\ref{thm:gnn}, we conducted the following experiment: we considered 600 graphs from the TUDataset \cite{Morris+2020}. On each graph we ran three iterations of the WL test, and three iterations of MPNNs with the Graph Convolutional Layers from \cite{DBLP:conf/iclr/KipfW17} with different activations and hidden dimensions. Our goal was to check in how many graphs the MPNNs returned a vertex coloring that differs from the coloring provided by the WL test. The results are shown in Figure~\ref{fig:graphs_and_jac}(a). As seen in the table, with the three \emph{analytic} activations tested, the vertex coloring of MPNN was always equivalent to 1-WL, even with a hidden dimension of 1. On the other hand, for the three \emph{PwL} activations, there were inconsistencies in about 1\% of the graphs, even with a hidden dimension of 50.  

We note that while analytic activations fully succeeded in separation, in some cases the separation was rather weak: while the distance between features of non-equivalent nodes computed by the MPNNs was typically around $0.1$, the least-separated features had a distance of $
\sim 10^{-7}$. In future work, it could be interesting to investigate whether MPNNs can be trained to yield larger distances between the features of all non-equivalent nodes. Further details on this experiment appear in \cref{app_exp_graph_separation}.

\section{Conclusion}
We have shown that moments of neural networks with an analytic non-polynomial activation are injective on multisets and measures. We have also shown how this can be harnessed to construct universal approximators for multiset functions, as well as prove separation results for graph neural networks. A key advantage of our approach is that it enables constructing proofs using real models that are used in practice, rather than idealized versions of them as done in previous works.   

It may seem tempting, due to our theoretical results, to conclude that analytic activations should perform on multisets better than piecewise-linear activations. We stress that we make no such claim. Indeed, while the separation results in Figure~\ref{fig:graphs_and_jac}(a) corroborate our theory, PwL networks fail to separate only $1\%$ of the graphs in our experiment. Furthermore, at high embedding dimensions, the empirical bi-Lipschitzness in Figure~\ref{fig_injectivity} does not seem to strongly depend on the analyticity of the activation function. Our claim is thus much more modest: we claim that multiset architectures with analytic activations are easier to \emph{theoretically} analyze, and we hope that pursuing this analysis shall lead to fruitful theoretical and practical insights, which may ultimately benefit multiset architectures with either type of activation.    

Lastly, we note that the finite witness theorem, which is presented here as a tool for proving moment injectivity, may prove valuable as a general tool for reducing an infinite number of equality constraints to a finite number, and we believe it will find additional applications beyond the scope of this paper.

\textbf{Acknowledgements} N.D. is partially funded by a Horev Fellowship. T.A, R.R. and N.D. are partially funded by ISF grant 272/23.



\printbibliography

\newpage
\appendix
\section{Finite witness theorem}\label{app:sigmasub}
In this section, we state and prove the full version of the finite witness theorem (\cref{thm_fwt_full}), which is more general than \cref{thm_fwt_simple} stated in the main text. Before laying out the formal definitions and proofs, we begin by describing the context of these results. While this section makes use of notions from  algebraic geometry and real analytical functions, it is self-contained and most of it only requires knowledge of elementary calculus and some topology.

The finite witness theorem is essentially a tool for reducing an infinite, continuously parameterized family of constraints $p(\bz;\bth)=0 \,\ \forall \bth$ to a finite subset of constraints $p(\bz;\bth_i)=0$, $i=1,\ldots,m$, defined by random parameters $\bth_1,\ldots,\bth_m$. This general approach seems to have originated from the famous proof of uniqueness for phase-retrieval measurements in \cite{balan2006signal}. In that work, functions $p(\bz,\bth):\CC^n\times \CC^n \to \RR$ of the form $p(\bz;\bth)=|\langle \bz,\bth \rangle| $ were considered, and it was proved that a finite number of $\sim 4n $ random measurements $\bth_i$ are sufficient to uniquely determine a signal, up to unavoidable global phase ambiguity. The proof in \cite{balan2006signal}  achieves this result by showing that the values $p(\bz;\bth) $ for \emph{all} possible $\bth$s are sufficient to determine the signal uniquely, and then uses a  real algebraic-geometric and dimension-counting argument to show that this continuous family of measurements can be replaced by a finite subset, defined by $m \sim 4n$ random vectors (\emph{witnesses}) $\{\bth_i\}_{i=1}^m$, without losing information.

In \cite{dym2022low}, the authors provide a generalization of the results in \cite{balan2006signal}, by defining conditions under which a continuously parameterized family of functions $p(\bz;\bth)$ that fully determines  $\bz$ up to equivalence, can be replaced by a finite subset $p(\bz;\bth_i)$,  $i=1,\ldots,m$ determined by random witness-vectors $\bth_i$. This theorem is based on similar arguments as those used in \cite{balan2006signal}, and on similar assumptions required for machinery from real algebraic geometry. Specifically, sets in \cite{dym2022low} are assumed to be \emph{semialgebraic}, which means that they are finite unions of subsets of $\RR^D$ that can be defined by polynomial equalities and inequalities. This class of sets includes, for example, finite unions of spheres, balls, and convex polyhedra.  The functions in the theorem are assumed to be semialgebraic as well, which means that their graphs are semialgebraic sets. This class of functions includes polynomials, as well as rational functions and piecewise-linear functions. 

The main theorem in \cite{dym2022low} can be essentially\footnotemark\  formulated as 
\footnotetext{To obtain this theorem from the formulation in  \cite{dym2022low}, use the change of variables $\bz=(\bx,\by) $ and $F(\bz;\bth)=p(\bx;\bth)-p(\by;\bth) $. The formulation in   \cite{dym2022low}  makes  some additional requirements on $F$ which are relevant to the specific applications considered there,  but going through the details of the proof shows that it is sufficient for proving Theorem~\ref{thm:gen1} as stated here.}
\begin{theorem}\label{thm:gen1}
	Let $\M$ be a semialgebraic set of dimension $D$, and let $F:\M \times \RR^{D_{\bth}}\to \RR$ be a semialgebraic function. Define the set
	$$\N=\{\bz\in \M| \, F(\bz;\bth)=0,\ \forall \bth\in \RR^{D_{\bth}} \}$$
	and assume that for all $\bz \in \M \setminus \N$, we have that 
	\begin{equation}\label{eq:condition}
	\dim\{\bth\in \RR^{D_{\bth} } | \, F(\bz;\bth)= 0 \} \leq D_{\bth}-1 ,
	\end{equation}
	then for Lebesgue almost every $\bth^{(1)},\ldots,\bth^{(D+1)}$,
	$$\N=\{\bz\in \M| \, F(\bz;\bth^{(i)})=0,\ \forall i=1,\ldots D+1\}.$$    
\end{theorem}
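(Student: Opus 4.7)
My proof would proceed by semialgebraic dimension counting in the product space. Define the ``bad'' set of configurations
\[
W:=\{(\bz,\bth^{(1)},\ldots,\bth^{(D+1)})\in(\M\setminus\N)\times(\RR^{D_{\bth}})^{D+1}:F(\bz;\bth^{(i)})=0\text{ for all }i\},
\]
and let $\pi$ be the projection onto the last $D+1$ coordinates. The theorem is equivalent to the assertion that $\pi(W)$ has Lebesgue measure zero in $(\RR^{D_{\bth}})^{D+1}$, because a witness tuple lying outside $\pi(W)$ is exactly the statement that every $\bz\in\M$ satisfying all the finite constraints must lie in $\N$ (the reverse inclusion is immediate from the definition of $\N$).

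\textbf{Semialgebraic setup.} First I would verify that all relevant sets are semialgebraic. By Tarski--Seidenberg applied to the quantifier $\forall\bth$ in its defining formula, $\N$ is semialgebraic, hence $\M\setminus\N$ is semialgebraic and of dimension at most $D$; $W$ is then semialgebraic as the intersection of $D+1$ semialgebraic conditions with a semialgebraic product. I will read hypothesis~\eqref{eq:condition} in the form needed for the argument, namely that for each $\bz\in\M\setminus\N$ the zero set $\{\bth:F(\bz;\bth)=0\}$ has dimension at most $D_{\bth}-1$, which is what ensures that generic witnesses detect the nonvanishing of $F(\bz;\cdot)$.

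\textbf{Dimension bound.} For each fixed $\bz\in\M\setminus\N$, the fiber of $W$ above $\bz$ is the $(D+1)$-fold Cartesian product of $\{\bth:F(\bz;\bth)=0\}$, so it has semialgebraic dimension at most $(D+1)(D_{\bth}-1)$. The semialgebraic fiber-dimension theorem then yields
\[
\dim W\le \dim(\M\setminus\N)+(D+1)(D_{\bth}-1)\le D+(D+1)(D_{\bth}-1)=(D+1)D_{\bth}-1.
\]
Since the image of a semialgebraic set under a semialgebraic map cannot increase dimension, $\dim \pi(W)\le (D+1)D_{\bth}-1$, strictly less than the ambient dimension $(D+1)D_{\bth}$. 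A semialgebraic subset of lower dimension than its ambient Euclidean space has Lebesgue measure zero, so $\pi(W)$ is Lebesgue-null and the theorem follows.

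\textbf{Main obstacle.} The arithmetic is routine; the substance is in correctly invoking the semialgebraic toolkit. I expect the main technical point to be the fiber-dimension theorem for semialgebraic maps, which is what promotes the pointwise hypothesis~\eqref{eq:condition} to a global bound on $\dim W$; care is also needed to check that the ``dimension of total space $\le$ dimension of base $+$ supremal fiber dimension'' inequality applies to the (possibly non-smooth, non-irreducible) semialgebraic set $W$, and that $\N$ is indeed semialgebraic via quantifier elimination. Conceptually the argument is the same witness-reduction trick used by Balan--Casazza--Edidin in~\cite{balan2006signal} and refined in~\cite{dym2022low}: a continuum of semialgebraic equations is cut out by any sufficiently generic finite subsample whose size just exceeds the dimension of the base $\M$.
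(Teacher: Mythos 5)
Your proposal is correct, and it follows essentially the same dimension-counting-in-the-product-space argument that the paper uses to prove the more general $\sigma$-subanalytic version (\cref{thm:gen2}); the paper itself does not give a self-contained proof of \cref{thm:gen1} but delegates it to \cite{dym2022low} with a change of variables, so your write-up is in effect a clean specialization of the paper's proof of \cref{thm:gen2} to the semialgebraic case, where Tarski--Seidenberg and the fiber-dimension theorem do the work that Lemmas~\ref{lem:dim_prod} and~\ref{lem:dim2} do in the $\sigma$-subanalytic setting. One point worth flagging: you silently corrected condition~\eqref{eq:condition} from a bound on $\dim\{\bth : F(\bz;\bth)\neq 0\}$ to a bound on $\dim\{\bth : F(\bz;\bth)= 0\}$ --- this is indeed the hypothesis the argument needs (and is what appears as~\eqref{eq:assumption} in \cref{thm:gen2}), so you are almost certainly fixing a typo in the paper's statement, but it would be better to call this out explicitly rather than quietly "read the hypothesis in the form needed."
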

The notion of dimension used in  condition \eqref{eq:condition} and throughout this section is the \emph{Hausdorff dimension}, explained in \cref{app:dim} below.

Theorem~\ref{thm:gen1} is similar to the simple version of the finite witness theorem (Theorem~\ref{thm_fwt_simple}), with four notable differences: (1) The domain $\M$ in \cref{thm_fwt_simple} is a countable union of affine sets; this is not, in general, a semialgebraic set, and thus does not qualify for the conditions of Theorem~\ref{thm:gen1}. (2) The function $F$ in Theorem~\ref{thm_fwt_simple} is analytic. Analytic functions are not necessarily semialgebraic. (3) Theorem~\ref{thm:gen1} requires the extra condition \eqref{eq:condition}, which does not appear in Theorem~\ref{thm_fwt_simple}.  This condition is not required in \cref{thm_fwt_simple} because with analytic functions, it is always satisfied; however, it will be required in our full version of the theorem, since it admits a more general class of functions. (4) Theorem~\ref{thm:gen1} deals with sets of the form $\{\bz| \, F(\bz,\bth)=0, \forall \bth\} $ while \cref{thm_fwt_simple} deals with sets of the form $\{(\bx,\by)| \, F(\bx,\bth)=F(\by,\bth), \forall \bth\} $. This difference is not essential and can be handled by the change of variables $\bz=(\bx,\by) $ and $\tilde F(\bx,\by,\bth)=F(\bx,\bth)-F(\by,\bth) $. 

To address the first two differences, we shall generalize Theorem~\ref{thm:gen1} to support a large class of domains $\M$ and functions $F$.  The admissible domains shall include a vast class of sets, among which are \emph{countable} unions of semialgebraic sets, as well as all open sets, and sets defined by analytic, rather than polynomial, equations. The admissible functions $F$ shall include semialgebraic functions, analytic functions, and many other types of functions. To achieve this goal, we use results from the study of \emph{o-minimal structures} (see below), which aim at finding families of sets that have the same tame properties as semialgebraic sets. 

A good starting point to achieve this generalization is to consider the family of \emph{globally subanalytic sets} (formally defined below). This family contains all semialgebraic sets and is an o-minimal system; consequently, it is possible to generalize Theorem~\ref{thm:gen1} so that the domain $\M$ can include all globally subanalytic sets, and the function $F$ could be any \emph{globally subanalytic function} (meaning that the graph of $F$ is a globally subanalytic set). However, this still will not allow for a general analytic $F$, nor for $\M$ to include countable unions of affine sets.

%

To address this, we will show that an analogue to Theorem~\ref{thm:gen1} holds even when considering  \emph{countable unions} of globally subanalytic sets. We call such sets \emph{$\sigma$-subanalytic sets}. To the best of our knowledge, such sets have not been studied to date. The family of $\sigma$-subanalytic sets is not an o-minimal structure, since it not closed under taking complements. However, it \emph{is} closed under linear projections, countable unions, finite intersections and Cartesian products. Moreover, $\sigma$-subanalytic sets are countable unions of $C^{\infty}$ manifolds. We find that these properties are sufficient to generalize Theorem~\ref{thm:gen1} and obtain a finite witness theorem for the $\sigma$-subanalytic category.

Clearly, the class of $\sigma$-subanalytic sets is larger than the class of globally subanalytic sets. In particular, it includes countable unions of semialgebraic sets. This enables the domain $\M$ in the theorem statement to be a countable union of affine spaces, as in Theorem~\ref{thm_fwt_simple} from the main text. This also implies that any open set is $\sigma$-subanalytic, since it is a countable union of open balls --- which are semialgebraic sets.

As for the function $F$, our theorem admits all $\sigma$-subanalytic functions: functions whose graph is $\sigma$-subanalytic. This class includes all analytic functions, as well as all semialgebraic functions. Moreover, we show below that it is closed under composition and other elementary operations.

We now state the full version of the finite witness theorem.

\begin{theorem}[Finite Witness Theorem, full version]\label{thm_fwt_full}
Let $\M \subseteq \RR^p$, $\W \subseteq \RR^q$ be $\sigma$-subanalytic sets of dimension $D$ and $D_{\bth}$ respectively. Let $F:\M \times \W \to \RR$ be a $\sigma$-subanalytic function. Define the set
$$\N=\{\bz\in \M\setmid F(\bz;\bth)=0,\ \forall \bth\in \W \}.$$
Suppose that for all $\bz\in \M \setminus \N$
\begin{equation}\label{eq:assumption}
\dim\{\bth\in \W\setmid F(\bz;\bth)= 0 \}\leq D_{\bth}-1.\end{equation}
Then for generic $\left( \bth^{(1)},\ldots,\bth^{(D+1)}\right) \in \W^{D+1}$,
\begin{equation}\label{eq:Nequals}
\N=\{\bz\in \M\setmid F(\bz;\bth^{(i)})=0,\ \forall i=1,\ldots D+1\} .
\end{equation}
Moreover, if $\W$ is an open and connected subset of $\RR^q$, and $F(\bz;\bth)$ is analytic as a function of $\bth$ for all fixed $\bz \in \M$, then condition \labelcref{eq:assumption} is not required, as it is automatically satisfied.
\end{theorem}
The notion of dimension in \eqref{eq:assumption} is the Hausdorff dimension (discussed below), and the term \emph{generic} means that the set of $\left( \bth^{(1)},\ldots,\bth^{(D+1)}\right) $ for which \eqref{eq:Nequals} fails is a subset of $\W^{D+1} $ whose Hausdorff dimension is strictly lower than $\dim\left(\W^{D+1}\right)$. In particular, in the common case where $\W=\RR^q$, or is an open subset of $\RR^q$, we have that the theorem holds for almost any $\left( \bth^{(1)},\ldots,\bth^{(D+1)}\right)$, with respect to the Lebesgue measure on $\RR^{q(D+1)}$ .

We now begin to rigorously define the notions discussed in this section and then prove \cref{thm_fwt_full}. In \cref{sec_definitions_and_background,app_sigma_subanalytic_sets,sec_sigma_subanalytic_functions,app:dim}, we construct the theoretical framework step by step. Then, in \cref{sec_proof_of_fwt} we prove the theorem, and in \cref{app:cor} we show that the simple version of the theorem (\cref{thm_fwt_simple}) follows from the full version. Finally, we present in \cref{app:cor} several corollaries to the theorem.
\subsection{Definitions and Background}\label{sec_definitions_and_background}
\subsubsection{Globally subanalytic sets: an o-minimal structure}
We begin by defining analytic functions, subanalytic sets and functions, and various other related concepts that will be required for our discussion. These definitions are taken from \cite{krantz2002primer} and \cite{valette2022subanalytic}.
\begin{definition}[Analytic function \cite{krantz2002primer}]
	Let $U\subseteq \RR^D$ be an open set. We say that $f:U\to \RR$ is \emph{analytic} if for all $\bz\in U$, there exists an open ball $V\subseteq U$ centered at $\bz$, and $(a_\alpha)_{\alpha \in \NN^D}$ such that
	\begin{equation}\label{eq:series}
	f(\by)=\sum_{\alpha \in \NN^D} a_\alpha(\by-\bz)^\alpha \quad \forall \by \in V,
	\end{equation}
and the power series in \eqref{eq:series} converges absolutely.
\end{definition}

Our next goal is to define the family of globally subanalytic sets, which contains all semialgebraic sets; moreover, it is an \emph{o-minimal structure}, meaning that it shares the essential tame properties of semialgebraic sets. This requires some preliminary definitions:

\begin{definition}[Semianalytic sets \cite{valette2022subanalytic}]
	A subset $E\subseteq \RR^n$ is called \emph{semianalytic} if it is locally defined
	by finitely many real analytic equalities and inequalities. Namely, for each $a\in \RR^n$,
	there is a neighborhood $U$ of $a$, and real analytic functions $f_{ij}, g_{ij}$ on $U$, where
	$i=1,\ldots,r$ and $ j= 1\ldots s_i$, such that
	\begin{equation}\label{eq:semi}
	E\cap U=\bigcup_{i=1}^r \bigcap_{j=1}^{s_i}\{\bz\in U| \, g_{ij}(\bz)>0 \text{ and } f_{ij}(\bz)=0 \}. \end{equation}
\end{definition}
\begin{example}\label{ex:semi}
As shown in Example 1.1.2 in \cite{valette2022subanalytic}, the graph of the analytic function $f:(0,1)\to \RR $ defined by  $f(x)=\sin(1/x) $ is \emph{not} a semianalytic set. This is because there is no neighborhood of $a=(0,0)$ for which \eqref{eq:semi} holds.  On the other hand, if $B\subseteq \RR^n $ is a closed ball and  $f:B \to \RR$ can be extended to an analytic function in an open set containing $B$, then the graph of $f$ (as a function defined on $B$) will be semianalytic.     
\end{example}
The example above suggests  that the class of semianalytic sets may be too restrictive for our purposes. More importantly, linear projections
of semianalytic sets may not be semianalytic \cite{valette2022subanalytic}, so the family of semianalytic sets does not form an o-minimal system. This can be remedied by defining globally semianalytic sets (which don't form an o-minimal system) and then using these sets to define globally subanalytic sets (which \emph{do} form an o-minimal system):




\begin{definition}[Globally semianalytic sets  \cite{valette2022subanalytic}]
	A subset $Z\subseteq \RR^n$ is \emph{globally semianalytic} if $V_n(Z)$ is a semianalytic
	subset of $\RR^n$, where $V_n : \RR^n \to (-1,1)^n$ is the homeomorphism defined by
	$$V_n\left(\bz=(z_1,\ldots,z_n)\right)=\left(\frac{z_1}{\sqrt{1+|\bz|^2}},\ldots , \frac{z_n}{\sqrt{1+|\bz|^2}} \right). $$
\end{definition}
Globally semianalytic sets can be thought of as semianalytic sets that remain semianalytic when `compactified' by $V_n$.  This is useful to rule out bad behaviour as $\bz$ goes out to infinity. Important examples of globally semianalytic sets include all bounded semianalytic sets and all semialgebraic sets \cite{valette2022subanalytic}.

Globally semianalytic sets are still not closed under linear projection. Their projections are called globally \emph{subanalytic} sets:
\begin{definition}[Globally subanalytic sets \cite{valette2022subanalytic}]
	A subset $E\subseteq \RR^n$ is \emph{globally subanalytic} if it can be presented
	as a linear projection of a globally semianalytic set; more precisely, if there
	exists a globally semianalytic set $Z \subseteq \RR^{n+p}$  such that $E=\pi(Z) $, with   $\pi:\RR^{n+p}\to \RR^n$ being the projection operator that omits the last $p$ coordinates while leaving the remaining coordinates unchanged.
\end{definition}
Globally semianalytic sets are by definition globally subanalytic. 
Globally subanalytic sets do  form an \emph{o-minimal structure}. In particular, they have the following properties:
\begin{prop}[Properties of globally subanalytic sets \cite{valette2022subanalytic}]\label{prop:globalsub}
	Let $A,B \subseteq \RR^D $ and $C \subseteq \RR^M$ be globally subanalytic sets. Then:
	\begin{enumerate}
		\item $\RR^D \setminus A$ is globally subanalytic.
		\item $A\cup B$ is globally subanalytic.
		\item $A\cap B$ is globally subanalytic.
		\item $A\times C $ is globally subanalytic. 
		\item If $\pi:\RR^D\to \RR^L $ is a linear projection, then $\pi(A)$ is globally subanalytic.
		\item $A $ is a finite union of $C^\infty $ manifolds.
	\end{enumerate}
\end{prop}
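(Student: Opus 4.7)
The plan is to establish each of the six closure properties, handling the routine items (2)--(5) by direct manipulation of the definition, and appealing to deep results from subanalytic geometry for the nontrivial items (1) and (6). Since the proposition is attributed to \cite{valette2022subanalytic}, my ``proof'' is really an indication of how the labor should be distributed between elementary arguments and cited theorems.

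For items (2)--(5), I would unwind the definition: a globally subanalytic set is a coordinate projection of a globally semianalytic set. Closure under Cartesian product (4) is immediate, since the Cartesian product of two globally semianalytic sets is globally semianalytic after a coordinate rearrangement, and such products commute with coordinate projections. Closure under union (2) follows similarly, since projections commute with unions and globally semianalytic sets are closed under finite unions by definition. Closure under intersection (3) requires slightly more care: one first lifts both sets to a common ambient space so that they are projections of globally semianalytic sets that live alongside each other, then uses that globally semianalytic sets are closed under finite intersection (intersection of finitely many local analytic equalities and inequalities is again of this form), and finally projects. Closure under projection (5) follows because the composition of two coordinate projections is again a coordinate projection, so a projection of an already-projected globally semianalytic set is of the same form.

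The hard part will be item (1), closure under complement. This is Gabrielov's complement theorem, originally proved for the subanalytic category and transferred to the globally subanalytic setting by conjugating with the compactifying homeomorphism $V_n$. The naive strategy of flipping signs in the defining analytic relations fails because the existential quantifier introduced by the projection cannot be eliminated by hand — a dedicated argument (via Gabrielov's theorem on projections of semianalytic sets, or via model-theoretic o-minimality arguments for the structure generated by restricted analytic functions) is essential. I would cite this rather than reprove it. Item (6), the decomposition into finitely many $C^\infty$ manifolds, is likewise a substantial theorem — the $C^\infty$ stratification theorem for subanalytic sets — and I would also cite it from \cite{valette2022subanalytic}, perhaps noting that it is a general consequence of cell decomposition once items (1)--(5) establish that globally subanalytic sets form an o-minimal structure.
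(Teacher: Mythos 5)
The paper offers no proof at all --- it cites Valette \cite{valette2022subanalytic} directly: Theorem 2.1.1 for item (1), Theorem 1.2.3 for item (6), and ``Basic properties 1.1.8'' for items (2)--(5), with a footnote addressing item (5). You attempt to actually argue (2)--(5) from the definition, which goes beyond what the paper does, but your argument for (5) has a real gap: you treat ``linear projection'' as ``coordinate projection'' and observe that composing coordinate projections gives a coordinate projection. The statement is about arbitrary linear maps $\pi:\RR^D\to\RR^L$, and the paper's footnote flags exactly this point: the coordinate-projection clause in Valette's list is not enough, and one must instead invoke the fact that images of globally subanalytic sets under globally subanalytic \emph{mappings} are globally subanalytic (any linear map, having a semialgebraic graph, qualifies). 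If you want to stay elementary, you could factor $\pi$ as a linear automorphism of $\RR^D$ followed by a coordinate projection, but then you must separately prove closure under linear automorphisms, which your sketch never addresses.

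Your item (4) also glosses over a subtlety that propagates into (2) and (3), since those rely on products to lift $A$ and $B$ into a common ambient space before taking unions and intersections of the pre-projected witnesses. The compactification $V_{n_1+n_2}(z_1,z_2)=(z_1,z_2)/\sqrt{1+|z_1|^2+|z_2|^2}$ does not factor through $V_{n_1}\times V_{n_2}$ --- the denominator couples the two blocks --- so $V_{n_1+n_2}(Z_1\times Z_2)$ is not a coordinate rearrangement of $V_{n_1}(Z_1)\times V_{n_2}(Z_2)$, and product closure for globally semianalytic sets is not immediate from the definition as you claim. The property is still true, but one needs an argument: for instance, replace $V_n$ by the coordinatewise compactification $(z_1,\ldots,z_n)\mapsto\bigl(z_1/\sqrt{1+z_1^2},\ldots,z_n/\sqrt{1+z_n^2}\bigr)$, show it differs from $V_n$ by a semialgebraic homeomorphism and hence defines the same class of globally semianalytic sets, and only then is the product behavior trivial.
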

The first property in Proposition~\ref{prop:globalsub} is proven in Gabrielov’s Complement Theorem; see Theorem 1.8.8 in \cite{valette2022subanalytic}. The last property is a weaker version of Theorem 1.2.3 in  \cite{valette2022subanalytic}.
The remaining properties are proved\footnotemark\ in \cite[Basic properties 1.1.8]{valette2022subanalytic}. 
\footnotetext{Regarding the fifth property in Proposition~\ref{prop:globalsub}, note that it does not follow directly from property 1 in \cite{valette2022subanalytic} which only discusses a special class of projections, but rather from property 4 in \cite{valette2022subanalytic} which states that the image of a globally subanalytic set under a globally subanalytic mapping is globally subanalytic. Any linear projection is a semialgebraic mapping, and hence a globally subanalytic mapping. }

\subsection{$\sigma$-subanalytic sets}\label{app_sigma_subanalytic_sets}

As mentioned above, the fact that globally subanalytic sets form an o-minimal structure is already sufficient to prove a finite witness theorem for this class. However, this class is still too restrictive for our purposes, as it does not allow us to support arbitrary analytic functions $F$, as the following example shows:

\begin{example}\label{example_sin_not_globally_subanalytic}
    Analogously to semialgebraic functions, we say that a function is \emph{globally subanalytic} if its graph is globally subanalytic. Consider the function $\sin (x)$ defined on the real line. Then by Example 1.1.7 in \cite{valette2022subanalytic}, this function is not globally subanalytic.
\end{example}
Fortunately, we find that a finite witness theorem can be proven for a larger class of sets: countable unions of globally subanalytic sets. We name such sets \emph{$\sigma$-subanalytic} sets. To the best of our knowledge, this class of sets has not been studied to date\footnotemark. The class of $\sigma$-subanalytic sets is rather large: we will show below that it contains all open sets, while the classes discussed previously do not (see Example~\ref{ex:cantor} below). This is illustrated in the  Venn diagram on the left-hand side of  Figure~\ref{fig:ven}. Correspondingly, the class of $\sigma$-subanalytic functions is larger than the classes of functions considered previously, and in particular it contains \emph{any} analytic function $F$ defined on \emph{any} open set. This is illustrated in the Venn diagram on the right-hand side of  Figure~\ref{fig:ven}, and will be discussed rigorously in \cref{sec_sigma_subanalytic_functions} below. Thus, moving to \emph{$\sigma$-subanalytic} sets and functions is the crucial step that enables us to achieve our goal of proving a finite witness theorem for analytic functions (though the theorem covers a much larger class).
\footnotetext{Model theorists and analytic geometers \emph{have} researched other structures that are larger than o-minimal structures and enjoy some of their tame properties; see, e.g., \cite{van1996geometric,miller2005tameness}.  }

\begin{figure}[h]
\begin{center}
\includegraphics[width=0.8\textwidth]{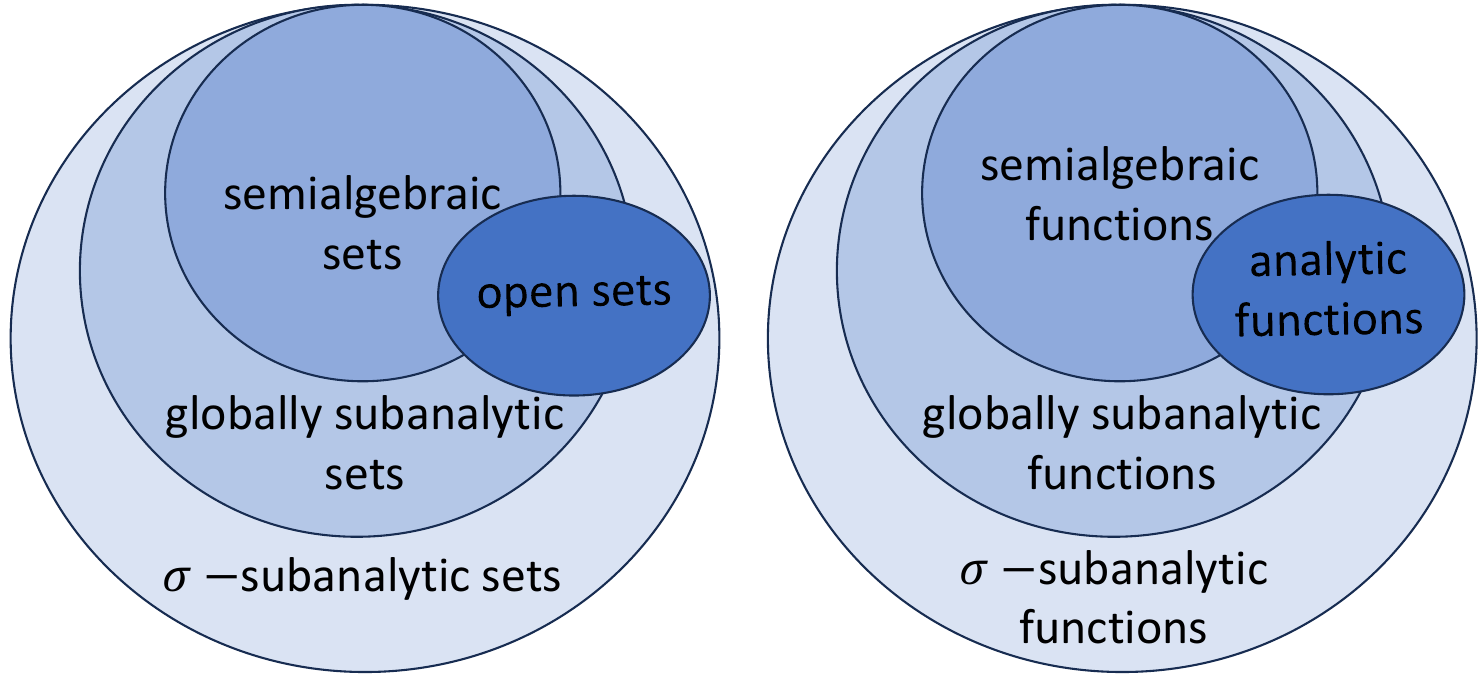}
\end{center}
\caption{\small Left: The class of semialgebraic sets is contained in the class of globally subanalytic sets, which in turn is contained in the class of $\sigma$-subanalytic sets --- on which we focus in this paper. One of the advantages of this larger class is that it contains all open sets, whereas the two smaller classes do not. Right: The classes of semialgebraic, globally subanalytic, and $\sigma$-subanalytic functions are related in the same way. The class of $\sigma$-subanalytic functions is the only one of the three that contains all analytic functions.}
\label{fig:ven}
\end{figure}

We now define $\sigma$-subanalytic sets and study their properties.
\begin{definition}[$\sigma$-subanalytic sets]
	We say that a subset $A \subseteq \RR^D $ is \emph{$\sigma$-subanalytic} if it is a countable union of globally subanalytic subsets of $\RR^D$.
\end{definition}

%
The class of $\sigma$-subanalytic sets is rather large. For example, any open set in $\RR^D$ can be written as a countable union of open balls. Since open balls are semialgebraic sets, it follows that all open sets are $\sigma$-subanalytic. 
Also, any semianalytic set is $\sigma$-subanalytic, since semianalytic sets are a countable union of bounded semianalytic sets, and these are globally subanalytic.
\subsubsection*{Properties of $\sigma$-subanalytic sets}
As the following proposition shows, $\sigma$-subanalytic sets inherit most of the properties enjoyed by globally subanalytic sets, described in \cref{prop:globalsub}.
\begin{prop}[Properties of $\sigma$-subanalytic sets]\label{prop:sigmaprop}
	Assume that $A,B \subseteq \RR^D $ and $C \subseteq \RR^M$ are $\sigma$-subanalytic sets, then
	\begin{enumerate}
		\item $A \cup B$ is $\sigma$-subanalytic. More generally, any countable union of $\sigma$-subanalytic sets is $\sigma$-subanalytic.
        \item $A\cap B$ is $\sigma$-subanalytic.
		\item $A\times C $ is $\sigma$-subanalytic.
		\item If $\pi:\RR^D\to \RR^L $ is a linear projection, then $\pi(A) $ is a $\sigma$-subanalytic set.
		\item $A $ is a \emph{countable} union of $C^\infty $ manifolds.
	\end{enumerate}
\end{prop}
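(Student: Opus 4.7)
The plan is to reduce each property to the corresponding property of globally subanalytic sets (\cref{prop:globalsub}), using only the fact that countable unions of countable families are countable. Write $A=\bigcup_{i\in \NN} A_i$, $B=\bigcup_{j\in \NN} B_j$, and $C=\bigcup_{k\in \NN} C_k$, where each $A_i,B_j\subseteq \RR^D$ and $C_k\subseteq \RR^M$ is globally subanalytic.

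For property 1, simply note that $A\cup B$ is already exhibited as a countable union of the globally subanalytic sets $\{A_i\}\cup\{B_j\}$. For property 2, use the distributive law $A\cap B=\bigcup_{i,j}(A_i\cap B_j)$; since each $A_i\cap B_j$ is globally subanalytic by property 3 of \cref{prop:globalsub}, and the index set $\NN\times \NN$ is countable, $A\cap B$ is $\sigma$-subanalytic. Property 3 is identical in spirit: $A\times C=\bigcup_{i,k}(A_i\times C_k)$ is a countable union of globally subanalytic sets by property 4 of \cref{prop:globalsub}. Property 4 is the most important for our applications, and uses the fact that linear projection commutes with unions: $\pi(A)=\pi(\bigcup_i A_i)=\bigcup_i \pi(A_i)$, which is a countable union of globally subanalytic sets by property 5 of \cref{prop:globalsub}. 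Finally, for property 5, each $A_i$ is a \emph{finite} union of $C^\infty$ manifolds by property 6 of \cref{prop:globalsub}, so $A=\bigcup_i A_i$ is a countable union of $C^\infty$ manifolds.

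There is essentially no obstacle here: every assertion is a one-line consequence of the matching assertion in \cref{prop:globalsub} once one observes that the relevant set operation distributes over countable unions (and that a countable union of finite/countable families is countable). The only property from \cref{prop:globalsub} that does \emph{not} carry over is closure under complements, since the complement of a countable union is an uncountable intersection and $\sigma$-subanalytic sets are not closed under intersections of uncountable families; this is precisely why the class fails to be an o-minimal structure, and it is worth noting this explicitly as a remark, since the subsequent finite witness theorem is engineered to avoid taking complements of the domain $\M$.
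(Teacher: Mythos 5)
Your proof is correct and follows essentially the same route as the paper: decompose $A$, $B$, $C$ into countable unions of globally subanalytic pieces, push the set operation through the union, and invoke the corresponding item of \cref{prop:globalsub}. The only (harmless) deviation is in part 3, where you write $A\times C=\bigcup_{i,k}(A_i\times C_k)$ directly, whereas the paper first forms $A\times\RR^M$ and $\RR^D\times C$ and then intersects them via part 2; yours is marginally cleaner. Your closing remark about failure of closure under complements is accurate and is in fact made explicitly by the paper right after the proposition, with the Cantor set as the counterexample.
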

\begin{proof}
	Let $A,B \subseteq \RR^D $ be $\sigma$-subanalytic sets. Then $A=\cup_{n\in \NN}A_n $ and $B=\cup_{m\in \NN}B_m $, where each $A_n$ and $B_m$ is globally subanalytic. We then have
	\begin{enumerate}
		\item A countable union of $\sigma$-analytic sets, each of which being itself a countable union of globally subanalytic set, is clearly a countable union of globally subanalytic sets, and therefore is $\sigma$-subanalytic by definition.
		\item We have that
		$$A \cap B=\left(\cup_{n\in \NN}A_n \right)\cap \left( \cup_{m\in \NN}B_m \right)=\cup_{(n,m)\in \NN^2}(A_n \cap B_m) $$
		and $A_n \cap B_m $ is globally subanalytic as the intersection of globally subanalytic sets.
		\item The set $A\times \RR^M$ can be presented as
		$$A\times \RR^M=\left(\cup_{n\in \NN}A_n \right)\times \RR^M=\cup_{n\in \NN}\left(A_n \times \RR^M\right),$$
        with $A_m$ being globally subanalytic.
        Since globally subanalytic sets are closed to Cartesian products, each $A_n \times \RR^M$ is globally subanalytic, and their countable union is $\sigma$-subanalytic. Finally,
        $$ A\times C = \left( A\times \RR^M \right) \cap \left( \RR^D\times C \right),$$
        which is $\sigma$-subanalytic by part 2.
		\item Follows from globally subanalytic sets being closed to projections, since
		$$\pi \left(\cup_{n\in \NN}A_n \right)=\cup_{n\in \NN}\left( \pi(A_n) \right). $$
		\item A $\sigma$-subanalytic set is a countable union of globally subanalytic sets, each of which is a finite union of $C^\infty$ manifolds. Therefore, it is a countable union of $C^\infty$ manifolds.
	\end{enumerate}
\end{proof}
In comparison to Proposition~\ref{prop:globalsub}, we see that we have lost two properties by moving from globally subanalytic sets to countable unions thereof: Firstly, a $\sigma$-subanalytic set is a countable union of manifolds rather than a finite one. Secondly, the  complement of a $\sigma$-subanalytic set may not be $\sigma$-subanalytic, as our next example shows. As it turns out, the properties noted in Proposition~\ref{prop:sigmaprop} are sufficient for our purposes. 

\begin{example}\label{ex:cantor}
The Cantor set is not $\sigma$-subanalytic, since it has fractal Hausdorff dimension and thus is not a countable union of manifolds. However, the complement of the Cantor set, which we denote by $U$, is an open set and  hence is $\sigma$-subanalytic. This shows that the class of $\sigma$-subanalytic sets is not closed to taking complements, as well as countable intersections: this is since the Cantor set is a countable intersection of finite unions of intervals, which are $\sigma$-subanalytic. The set $U$ is also an example of an open set that is not semialgebraic or globally subanalytic, since these classes of sets \emph{are} closed to complements, and do not contain the Cantor set.   
\end{example}

\subsection{$\sigma$-subanalytic functions}\label{sec_sigma_subanalytic_functions}
We now define the class of functions our theorem can admit as the function $F$: $\sigma$-subanalytic functions.
\begin{definition}[$\sigma$-subanalytic function]
	Let $\M \subseteq \RR^D $ be a $\sigma$-subanalytic set. We say that 
 $f:\M \to \RR^L $ is a $\sigma$-subanalytic function if its graph is a $\sigma$-subanalytic subset of $\RR^{D+L}$.  
\end{definition}
If $\M \subseteq \RR^D $ is a semialgebraic set and $f:\mathbb{M}\rightarrow \RR^L$ is a semialgebraic function, then the graph of $f$ is  semialgebraic, and thus is $\sigma$-subanalytic. Therefore, a semialgebraic function $f$ defined on a semialgebraic domain $\mathbb{M}$ is a $\sigma$-subanalytic function. A similar result holds also for analytic functions defined on open sets, as shown in the following lemma:
\begin{lemma}
If $U \subseteq \RR^D$ is open and $f:U \to \RR^L $ is analytic, then $f$ is $\sigma$-subanalytic.
\end{lemma}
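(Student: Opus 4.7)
The plan is to build the graph of $f$ as a countable union of graphs of restrictions to closed balls that sit inside $U$, and show that each such piece is globally subanalytic.

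First I would cover $U$ by countably many closed balls contained in $U$. Since $U$ is open, the collection
\[
\{\overline{B}(q,r)\setmid q\in\QQ^D,\ r\in\QQ_{>0},\ \overline{B}(q,r)\subseteq U\}
\]
is countable and its union is all of $U$. Enumerate these balls as $B_1,B_2,\ldots$, so that $U=\bigcup_{n\in\NN}B_n$. Correspondingly, the graph of $f$ decomposes as
\[
\mathrm{graph}(f)=\bigcup_{n\in\NN}\mathrm{graph}(f|_{B_n})\subseteq\RR^{D+L}.
\]
Because $\sigma$-subanalytic sets are closed under countable unions (\cref{prop:sigmaprop}), it suffices to prove that each $\mathrm{graph}(f|_{B_n})$ is globally subanalytic.

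Next I would argue that for each $n$, the graph of $f|_{B_n}$ is a \emph{bounded} semianalytic set. Each $B_n$ is a closed ball sitting inside the open set $U$ on which $f$ is analytic; in particular $f$ extends analytically to an open neighborhood of $B_n$ (namely $U$ itself). As noted in \cref{ex:semi}, this implies that the graph of the restriction $f|_{B_n}$ is semianalytic as a subset of $\RR^{D+L}$. Moreover, $\mathrm{graph}(f|_{B_n})$ is bounded: $B_n$ is compact, hence $f(B_n)$ is compact by continuity, so the graph lies inside the compact set $B_n\times f(B_n)$. Writing the graph componentwise via $f=(f_1,\ldots,f_L)$ and using the analytic equalities $y_i=f_i(\bx)$ inside the (globally semianalytic) ball $B_n$ makes this explicit.

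Then I would invoke the fact, recorded in the paper after the definition of globally semianalytic sets, that bounded semianalytic sets are globally semianalytic, and hence globally subanalytic. Applied to each $\mathrm{graph}(f|_{B_n})$, this gives a countable family of globally subanalytic sets whose union is $\mathrm{graph}(f)$. By the first item of \cref{prop:sigmaprop}, $\mathrm{graph}(f)$ is $\sigma$-subanalytic, so $f$ is a $\sigma$-subanalytic function, as required.

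The only non-routine point is the second step: one must know that a bounded semianalytic set is automatically globally semianalytic, and that analyticity of $f$ on the \emph{open} set $U$ gives analytic extendability across every closed ball $B_n\subset U$. Both facts are, however, immediate from the definitions and the references already cited; once they are in hand, the rest is a direct application of the closure properties of $\sigma$-subanalytic sets proved above.
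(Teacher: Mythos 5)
Your proof is correct and takes essentially the same route as the paper: cover $U$ by countably many closed balls inside $U$, observe that the graph of $f$ restricted to each ball is a compact (bounded) semianalytic set and hence globally subanalytic, and conclude by the closure of $\sigma$-subanalytic sets under countable unions. You merely make explicit some details (rational centers/radii, the citation to \cref{ex:semi}, and the compactness of $B_n\times f(B_n)$) that the paper leaves implicit.
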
  
\begin{proof}
First recall that since $U$ is open, it is a $\sigma$-subanalytic set. Next, we can write $U$ as a countable union of closed balls $B_i$, and then the graph of $f$ can be written as a countable union of sets of the form 
$$\{(\bx,\by)\in B_i\times \RR^L| \, \by=f(\bx) \} .$$
These subsets of $U \times \RR^L$ are compact and semianalytic, and hence they are globally subanalytic. 	
\end{proof}
%

%
%

\subsubsection*{Properties of $\sigma$-subanalytic functions}
The following proposition shows that the class of $\sigma$-subanalytic functions is closed under composition and elementary arithmetic operations. In particular, this allows for functions that combine analytic and semialgebraic functions via compositions and elementary arithmetic operations.
\begin{prop}\label{thm_composition_of_ssa_functions}
	Let $A \subseteq \RR^a $ and $B \subseteq \RR^b$ be $\sigma$-subanalytic sets.
    \begin{enumerate}
		\item \emph{(Composition)} If $f:A\to B$ and $g:B \to \RR^c$ are $\sigma$-subanalytic functions, then $g\circ f $ is a $\sigma$-subanalytic function.
		\item \emph{(Concatenation)} If $f:A \to \RR^m$ and $g:A \to \RR^n$ are $\sigma$-subanalytic functions, then $h: A \to \RR^{m+n}$ given by $h\left(x\right) = \left(f\left(x\right), g\left(x\right)\right)$ is a $\sigma$-subanalytic function. 
		\item \emph{(Addition and multiplication)} If $f,g:A \to \RR $ are $\sigma$-subanalytic functions, then $f+g$ and $f\cdot g$ are $\sigma$-subanalytic functions. 
	\end{enumerate}
\end{prop}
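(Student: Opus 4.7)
My plan is to derive all three parts directly from the closure properties of $\sigma$-subanalytic sets established in Proposition~\ref{prop:sigmaprop} (namely finite union, finite intersection, Cartesian product, and linear projection), by rewriting each target graph as a Boolean combination and/or projection of graphs that are already known to be $\sigma$-subanalytic.

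For part (1), I would express the graph of $g \circ f$ as
$$\mathrm{graph}(g \circ f) = \pi_{x,z}\bigl((\mathrm{graph}(f) \times \RR^c) \cap (\RR^a \times \mathrm{graph}(g))\bigr),$$
where $\pi_{x,z}:\RR^{a+b+c}\to\RR^{a+c}$ is the linear projection that forgets the middle $b$ coordinates. Both $\mathrm{graph}(f) \times \RR^c$ and $\RR^a \times \mathrm{graph}(g)$ are $\sigma$-subanalytic by the Cartesian product clause (using that $\RR^a$ and $\RR^c$ are semialgebraic, hence $\sigma$-subanalytic); their intersection is $\sigma$-subanalytic by the intersection clause; and applying the projection clause gives the desired graph.

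For part (2), I would write
$$\mathrm{graph}(h) = \bigl\{(x,y,z)\in A\times\RR^m\times\RR^n : (x,y)\in\mathrm{graph}(f)\bigr\} \cap \bigl\{(x,y,z)\in A\times\RR^m\times\RR^n : (x,z)\in\mathrm{graph}(g)\bigr\}.$$
The first set is the Cartesian product $\mathrm{graph}(f)\times\RR^n$, and hence $\sigma$-subanalytic. For the second, I would observe that $\sigma$-subanalyticity is preserved by any coordinate permutation: if $\tau$ is such a permutation, then $\tau(E) = \pi_y\bigl((E\times\RR^{\dim E})\cap \{(x,y):y=\tau(x)\}\bigr)$, with the graph of $\tau$ being a linear (hence semialgebraic, hence $\sigma$-subanalytic) subspace, so Proposition~\ref{prop:sigmaprop} applies. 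Starting from $\mathrm{graph}(g)\times\RR^m$ and permuting coordinates therefore yields the second set as $\sigma$-subanalytic, and the intersection clause finishes part (2).

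Part (3) follows immediately from (1) and (2): by part (2) the function $(f,g):A\to\RR^2$ is $\sigma$-subanalytic, and the sum and product maps $\RR^2\to\RR$ are polynomial, so their graphs are semialgebraic and therefore $\sigma$-subanalytic. Composing via part (1) gives $f+g$ and $f\cdot g$ as $\sigma$-subanalytic functions. The only mildly delicate point in the whole argument is the coordinate juggling in part (2), and even this is routine bookkeeping on top of Proposition~\ref{prop:sigmaprop}; no genuine obstacle arises, because we never need to take complements or countable intersections, which are precisely the operations under which the $\sigma$-subanalytic class is \emph{not} closed.
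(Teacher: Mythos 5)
Your proof is correct and follows the same route as the paper: realize each target graph as a (projection of an) intersection of Cartesian products of known $\sigma$-subanalytic graphs with Euclidean factors, then invoke closure under intersection, product, and linear projection from Proposition~\ref{prop:sigmaprop}, and deduce part (3) by composing the pairing map $(f,g)$ with the semialgebraic sum/product maps. The only place you are slightly more careful than the paper is part (2): where the paper loosely describes $\{(\bx,\by,\bz) : g(\bx)=\bz\}$ as a Cartesian product of a Euclidean space with a graph, you correctly observe it is a coordinate-permuted product and supply the small lemma that coordinate permutations preserve $\sigma$-subanalyticity.
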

\begin{proof}
	\begin{enumerate}
		\item To prove that $g\circ f$ is $\sigma$-subanalytic, we need to prove that its graph is a $\sigma$-subanalytic set. Indeed, this graph is of the form 
		$$\{(\bx,\bz)\in A\times \RR^c \mid \, g(f(\bx)) =\bz\}, $$
		which is the projection of the intersection
		$$\{(\bx,\by,\bz)\in A \times B \times \RR^c \mid \, f(\bx)=\by \} \cap \{(\bx,\by,\bz)\in A \times B \times \RR^c \mid \, g(\by)=\bz \} $$
		onto the $(\bx,\bz)$ coordinates. The two sets in the intersection above are Cartesian products of the graph of $f$ (respectively $g$) with a $\sigma$-subanalytic set, and hence are $\sigma$-subanalytic.
        \item The graph of $h$ is the intersection of the sets $\{(\bx,\by,\bz)| \, f(\bx)=\by \} $ and $\{(\bx,\by,\bz)| \, g(\bx)=\bz \} $. Each of these two sets  is a Cartesian product of a Euclidean space with the graph of a $\sigma$-subanalytic function.   
		\item The functions $f+g$ and $f\cdot g$ can be presented as the composition of the function $\RR^a \ni \bx \mapsto (f(\bx),g(\bx)) $, which is $\sigma$-subanalytic, with the addition or multiplications functions. Thus, the claim follows.
	\end{enumerate}
\end{proof}

\subsection{Dimension}\label{app:dim}
The proof of Theorem~\ref{thm_fwt_full} is based on a dimension-counting argument. Accordingly, we will need an appropriate definition of dimension for $\sigma$-subanalytic sets. It is convenient to work with  
the Hausdorff dimension, which is defined for \emph{every} subset of a Euclidean space $\RR^D$, and coincides with the standard notions of dimension for vector spaces and manifolds. The definition of Hausdorff dimension can be found, e.g., in \cite{hochman2012lectures}. For our purposes, we will only need to use some of its properties, stated below.
\begin{prop}[Taken from \cite{hochman2012lectures}]
    The Hausdorff dimension has the following properties:
	\begin{enumerate}
		\item The Hausdorff dimension of a $k$-dimensional $C^1$ submanifold of $\RR^D$ is $k$.
		\item If $B_1,B_2,\ldots$ are subsets of $\RR^D$ then 
		$$ \dim \left(\cup_{n \in \NN} B_n \right)=\max_{n\in \NN} \dim(B_n).$$
		\item For any $B \subseteq \RR^D$ and Lipschitz function $f:B \rightarrow \RR^C$, $\dim\left(f\left(B\right)\right) \leq \dim \left(B\right)$.
	\end{enumerate}
\end{prop}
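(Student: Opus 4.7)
The plan is to prove the three properties directly from the standard definition of Hausdorff dimension via the $s$-dimensional Hausdorff outer measures $\mathcal{H}^s$, where $\mathcal{H}^s(B)=\lim_{\delta\downarrow 0}\inf\{\sum_i \mathrm{diam}(U_i)^s : B\subseteq\bigcup_i U_i,\ \mathrm{diam}(U_i)<\delta\}$, and $\dim(B)$ is the unique threshold at which $\mathcal{H}^s(B)$ jumps from $\infty$ to $0$. Since these are standard facts cited from Hochman's lectures, I will only sketch the arguments, ordering them so the easier properties feed into the harder one.

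First I would prove property 2 (countable stability). Since $\mathcal{H}^s$ is an outer measure, it is countably subadditive: $\mathcal{H}^s(\bigcup_n B_n)\leq \sum_n \mathcal{H}^s(B_n)$. For any $s>\sup_n\dim(B_n)$, every term on the right is zero, so $\mathcal{H}^s(\bigcup_n B_n)=0$, giving $\dim(\bigcup_n B_n)\leq \sup_n\dim(B_n)$. The reverse inequality is monotonicity of $\dim$ under inclusion. Property 3 (Lipschitz images) is nearly as easy: if $f:B\to\RR^C$ is $L$-Lipschitz and $\{U_i\}$ is a $\delta$-cover of $B$, then $\{f(U_i\cap B)\}$ is an $L\delta$-cover of $f(B)$ with $\mathrm{diam}(f(U_i\cap B))\leq L\,\mathrm{diam}(U_i)$; this yields $\mathcal{H}^s(f(B))\leq L^s\mathcal{H}^s(B)$, so $s>\dim(B)$ forces $\mathcal{H}^s(f(B))=0$, hence $\dim(f(B))\leq\dim(B)$.

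Property 1 (dimension of a $k$-dimensional $C^1$ submanifold $M\subseteq\RR^D$) builds on the previous two. The key preliminary is that any nonempty open subset $U\subseteq\RR^k$ satisfies $\dim(U)=k$: the upper bound $\dim(U)\leq k$ follows from covering $U$ by a regular grid of cubes of side $\delta$, while the lower bound $\dim(U)\geq k$ uses the classical fact that $\mathcal{H}^k$ restricted to $\RR^k$ is a positive multiple of Lebesgue measure, so $\mathcal{H}^k(U)>0$. Now cover $M$ by countably many chart neighborhoods, each of which is the image of an open subset $V\subseteq\RR^k$ under a local parametrization $\varphi$ that is $C^1$ with injective differential; by continuity of $D\varphi$ and compactness of a sub-neighborhood, both $\varphi$ and its inverse are Lipschitz on that sub-neighborhood. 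Two applications of property 3 (to $\varphi$ and to $\varphi^{-1}$) show each chart has Hausdorff dimension $k$, and property 2 then yields $\dim(M)=k$.

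The main obstacle is the preliminary that $\mathcal{H}^k$ is comparable to Lebesgue measure on $\RR^k$, which is what prevents open sets from having dimension strictly less than $k$ and is the one nontrivial ingredient in the argument. Since a self-contained development of Hausdorff measure theory is outside the scope of the paper, I would simply invoke this classical result from Hochman's lectures rather than reproduce its proof.
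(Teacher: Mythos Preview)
Your sketch is correct and follows the standard textbook approach to these facts. However, the paper does not actually prove this proposition at all: it is stated as a known result ``Taken from \cite{hochman2012lectures}'' and used as a black box, with no argument given. So there is nothing in the paper to compare your proof against; your self-contained sketch already goes beyond what the paper provides.
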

Recall that $\sigma$-subanalytic sets are countable unions $A=\cup_{n\in \NN}A_n $ of $C^\infty$ manifolds. As we saw, their Hausdorff dimension is just the maximal dimension of all manifolds $A_n$. We shall now see that for such sets, the Hausdorff dimension has two nice properties that do not hold for general sets. These properties will be used in the proof of the finite witness theorem. 

The first property is the dimension of Cartesian products. For two general sets $A,B$, it is not always true that $\dim(A\times B)=\dim(A)+\dim(B)$ (see \cite{hochman2012lectures}). However, this does hold if $A$ and $B$ are countable unions of manifolds.
\begin{lemma}\label{lem:dim_prod}
	If $A,B$ are countable unions of $C^1$ sub-manifolds of $\RR^a$ and $\RR^b$, then
	$$\dim(A \times B)=\dim(A)+\dim(B) .$$
\end{lemma}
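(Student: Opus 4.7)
The plan is a routine reduction to the manifold case: decompose $A$ and $B$ into countable unions of submanifolds, distribute the Cartesian product over these unions, and then invoke the two properties of Hausdorff dimension recalled just before the lemma, namely that the Hausdorff dimension of a $C^1$ submanifold equals its manifold dimension, and that the Hausdorff dimension of a countable union is the supremum of the dimensions of its pieces.

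First, by hypothesis we may write $A=\bigcup_{n\in\NN} A_n$ and $B=\bigcup_{m\in\NN} B_m$, where each $A_n\subseteq\RR^a$ is a $C^1$ submanifold of some dimension $d_n$ and each $B_m\subseteq\RR^b$ is a $C^1$ submanifold of some dimension $e_m$. Distributing the product,
$$A\times B \;=\; \bigcup_{(n,m)\in\NN^2}\bigl(A_n\times B_m\bigr),$$
which is again a countable union since $\NN^2$ is countable. The standard fact from differential topology that the Cartesian product of two $C^1$ submanifolds (embedded in Euclidean spaces) is a $C^1$ submanifold of the product Euclidean space, with dimension equal to the sum of the factor dimensions, gives that $A_n\times B_m$ is a $C^1$ submanifold of $\RR^{a+b}$ of dimension $d_n+e_m$.

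Applying the listed properties of Hausdorff dimension, we obtain
$$\dim(A\times B) \;=\; \sup_{(n,m)\in\NN^2}\dim(A_n\times B_m) \;=\; \sup_{n,m}\,(d_n+e_m).$$
It remains to swap the sup with the sum, i.e.\ to show that $\sup_{n,m}(d_n+e_m)=\sup_n d_n + \sup_m e_m$. In general this identity requires both suprema to be attained, but here each $d_n$ is a nonnegative integer bounded by $a$ and each $e_m$ is a nonnegative integer bounded by $b$, so the sets $\{d_n\}$ and $\{e_m\}$ are finite sets of integers, hence both suprema are attained as maxima. Using the countable-union property again for $A$ and $B$ separately, $\sup_n d_n=\dim(A)$ and $\sup_m e_m=\dim(B)$, and combining the displays yields $\dim(A\times B)=\dim(A)+\dim(B)$.

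There is no real obstacle: the proof is a bookkeeping argument relying on the tools already quoted. The only mild subtlety is the attainment of the suprema needed to pull the sum outside, which is automatic here because submanifold dimensions are bounded nonnegative integers; this is precisely the point where the hypothesis that $A$ and $B$ are \emph{countable} unions of submanifolds (rather than general sets, for which $\dim(A\times B)\ge\dim(A)+\dim(B)$ can fail to hold with equality) is used.
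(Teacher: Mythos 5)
Your proof is correct and follows essentially the same route as the paper: distribute the Cartesian product over the two countable decompositions, note that each $A_n\times B_m$ is a $C^1$ submanifold of $\RR^{a+b}$ of dimension $\dim(A_n)+\dim(B_m)$, and apply the countable-union property of Hausdorff dimension. The paper writes the countable-union property directly with $\max$ (already presuming attainment), so it skips the brief justification you give that the suprema are attained; your added remark is a harmless and slightly more careful way to make the same step.
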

\begin{proof}
	By assumption $A=\cup_{n\in \NN}A_n $ and $B=\cup_{m\in \NN}B_m$, where $A_n,B_m$ are $C^1$ manifolds. We have that 
	$$A\times B=\cup_{(n,m)\in \NN^2}(A_n\times B_m) $$
	is again a countable union of the product manifolds $A_n\times B_m$, which are known to be of dimension $\dim(A_n)+\dim(B_m)$ (see e.g. \cite{lee2003smooth}). It follows that the dimension of $A \times B$ is equal to 
	$$\max_{(n,m)\in \NN^2}(\dim(A_n)+\dim(B_m))=\max_{n\in \NN}\dim(A_n)+\max_{m\in \NN}\dim(B_m)=\dim(A)+\dim(B). $$
\end{proof}
The second nice property of the Hausdorff dimension for countable unions of manifolds, which does not always hold for general sets, is \emph{dimension conservation}, in the following sense.
\begin{lemma}[Dimension Conservation]\label{lem:dim2}
	Let $S\subseteq\RR^{D_1}$ be a countable union of $C^\infty $ manifolds and $f:\RR^{D_1}\to \RR^{D_2}$ a $C^\infty $ function. Then 
	\begin{equation}\label{eq:conserve}
 \dim(S)\leq \dim(f(S))+ \max_{t \in f\left(S\right)} \dim\left(f^{-1}(t)\cap S \right). \end{equation}
\end{lemma}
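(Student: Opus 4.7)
The plan is to reduce the statement to the case where $S$ is a single $C^\infty$ submanifold of $\RR^{D_1}$, and then invoke the constant rank theorem at a point of maximal rank of $d(f|_S)$.

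For the reduction, I would write $S=\bigcup_{n\in\NN}S_n$ with each $S_n$ a $C^\infty$ manifold. Property 2 of the Hausdorff dimension gives $\dim(S)=\max_n\dim(S_n)$. Moreover, $f(S_n)\subseteq f(S)$ and $f^{-1}(t)\cap S_n \subseteq f^{-1}(t)\cap S$, so both $\dim(f(S))$ and $\max_{t\in f(S)}\dim(f^{-1}(t)\cap S)$ are monotone as we replace $S_n$ by $S$. Thus, once the inequality \eqref{eq:conserve} is established for each individual $S_n$, maximizing over $n$ transfers it to $S$. Hence it suffices to prove the lemma when $S$ is a single $C^\infty$ manifold of some dimension $s$.

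With $S$ a $C^\infty$ manifold of dimension $s$, consider the restriction $f|_S:S\to\RR^{D_2}$ and set $r=\max_{x\in S}\mathrm{rank}\!\left(d(f|_S)_x\right)$. By the lower semicontinuity of rank, the set $S^{*}\subseteq S$ on which $d(f|_S)$ attains this maximal rank is open (and nonempty by the definition of $r$), and $f|_{S^{*}}$ has constant rank $r$. The constant rank theorem then produces, around each $p\in S^{*}$, local charts in which $f$ takes the normal form $(x_1,\ldots,x_s)\mapsto(x_1,\ldots,x_r,0,\ldots,0)$. Reading off this normal form, a small coordinate neighborhood $V$ of $p$ satisfies: $f(V)$ is an $r$-dimensional submanifold of $\RR^{D_2}$, giving $\dim(f(S))\geq r$; and $f^{-1}(f(p))\cap V$ is an $(s-r)$-dimensional submanifold, giving $\max_{t\in f(S)}\dim(f^{-1}(t)\cap S)\geq s-r$. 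Summing,
\[
\dim(f(S))+\max_{t\in f(S)}\dim(f^{-1}(t)\cap S)\;\geq\; r+(s-r)\;=\;s\;=\;\dim(S),
\]
which is exactly \eqref{eq:conserve}.

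The only delicate point I anticipate is isolating a region of $S$ on which the constant rank theorem actually applies, since $f|_S$ need not have constant rank globally. This is handled cleanly by singling out the open set $S^{*}$ where the rank is maximal, so that no further stratification is required. Everything else is routine: the reduction relies only on the countable-union and monotonicity properties of Hausdorff dimension listed in the excerpt, and the dimensions of the image and the distinguished fiber are read directly off the rank normal form.
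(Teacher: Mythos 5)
Your proof is correct and takes essentially the same approach as the paper: both proofs isolate a point of maximal rank of $d(f|_S)$ on a single top-dimensional manifold piece, invoke the constant rank theorem there, and read the dimensions of the local image and fiber off the rank normal form. The only cosmetic difference is that you perform the reduction to a single manifold up front and then derive the two lower bounds $\dim(f(S))\geq r$ and $\max_t\dim(f^{-1}(t)\cap S)\geq s-r$ separately, whereas the paper fixes one $S_1$ of maximal dimension, writes the local equality $\dim(V\cap S_1)=\dim(f(V\cap S_1))+\dim(f^{-1}(t)\cap V\cap S_1)$ in one stroke, and then uses monotonicity of the Hausdorff dimension; the two presentations are logically interchangeable.
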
 
For failure of dimension conservation for general sets see, e.g., the discussion in \cite{furstenberg2008ergodic}. 

A good intuition for the concept of dimension conservation comes from the linear case, in which $f: \RR^{D_1} \rightarrow \RR^{D_2}$ is a linear transformation and $S \subseteq \RR^{D_1}$ is a linear subspace. In this case, the \emph{Rank-Nullity Theorem} from elementary linear algebra states that
\begin{align}
   \dim(S)&=\dim(f(S))+\dim(\mathrm{Kernel}(f) \cap S) \label{eq:dim_conservation}\\
   &=\dim(f(S))+\dim(f^{-1}(0) \cap S). \nonumber
\end{align}
The same statement also holds if we replace $f^{-1}(0)$ by $f^{-1}(t)$, with $t$ being any point in the image of $f$. Thus, for linear transformations, any loss of dimension when moving from a set $S$ to its image, is accounted for by the dimension of the fibers above points in the image. The image conservation in Lemma~\ref{lem:dim2} is a weaker since it is only an inequality, but it applies to the more general class of $\sigma$-semianalytic functions.

\begin{proof}[Proof of \cref{lem:dim2}]
We know that $S$ is a countable union of manifolds, and at least one of these, which we denote by $S_1$, has the maximal dimension $\dim(S_1)=\dim(S)$. 
This equality shall enable us to argue about $\dim(S)$ by arguing only about $\dim(S_1)$.
	
Let $r$ be the maximal rank of the differential of $f_{|S_1}$. Fix some $s_1\in S_1$ so that the differential of $f_{|S_1}$ at $s_1$ has rank $r$. The set of $s\in S_1$ whose differential has rank $r$ is open, and so there is a neighborhood $V$ of $s_1$, such that $V\cap S_1 $ is a manifold of the same dimension as $S_1$, and the restriction of $f$ to $V\cap S_1 $ has constant rank $r$. Consider the restriction of $f$ to $V\cap S_1$, denoted by $f_{|V \cap S_1}$. By the constant rank theorem \cite{lee2013smooth}, $f_{|V \cap S_1}$ is a projection, up to a diffeomorphic change of coordinates. Since diffeomorphisms of manifolds preserve dimensions, and projections (like all linear transformations) satisfy dimension conservation as in \eqref{eq:dim_conservation}, this implies that locally we have dimension conservation: for all $t\in f(V\cap S_1)$,
 $$\dim(V\cap S_1)=\dim \left(f(V\cap S_1)\right)+\dim \left( f^{-1}(t)\cap V \cap S_1 \right).$$
Therefore:
\begin{align*}
		\dim(S) =& \dim(S_1) = \dim(V \cap S_1)
        \\ =& \dim \left(f(V\cap S_1)\right)+\dim \left( f^{-1}(t)\cap V \cap S_1 \right), \quad  \forall t\in f(V\cap S_1)\\
        \leq& \dim \left(f(S)\right)+ \max_{t\in f(V\cap S_1)} \dim \left(  f^{-1}(t)\cap S \right)
		\\ \leq& \dim(f(S))+ \max_{t \in f\left(S\right)} \dim\left(f^{-1}(t)\cap S \right).
\end{align*}
\end{proof}

We conclude our discussion of the Hausdorff dimension with a natural corollary which will be useful later on for our proof of \cref{prop:deep}.
\begin{cor}\label{cor:imageDim}
  Let $\M\subseteq \RR^D$ be a $\sigma$-subanalytic set, and let $F:\M \to \RR^L $ a $\sigma$-subanalytic function. Then the graph and image of $F$ are $\sigma$-subanalytic sets, and 
  $$\dim\left(\mathrm{graph}(F) \right)=\dim(\M), \qquad \dim(F(\M))\leq \dim(\M).$$
\end{cor}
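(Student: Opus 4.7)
The plan is to derive both statements by applying the dimension conservation lemma (Lemma~\ref{lem:dim2}) to the two coordinate projections of the graph of $F$. First, I would verify that the objects involved are of the right type: since $F$ is $\sigma$-subanalytic, its graph $\Gamma := \{(\bx,F(\bx)) : \bx\in\M\}\subseteq\RR^{D+L}$ is by definition a $\sigma$-subanalytic set, so by the last item of Proposition~\ref{prop:sigmaprop} it is a countable union of $C^\infty$ manifolds. This is precisely the hypothesis required to invoke Lemma~\ref{lem:dim2}, and also the setting in which the Hausdorff dimension behaves nicely (maxima over countable unions, dimension of products, etc.).

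To obtain $\dim(\Gamma)=\dim(\M)$, I would consider the projection $\pi_1 : \RR^{D+L}\to\RR^D$ onto the first $D$ coordinates, which is linear and hence both smooth and $1$-Lipschitz. Applied to $\Gamma$, it gives $\pi_1(\Gamma)=\M$, and each fiber $\pi_1^{-1}(\bx)\cap\Gamma=\{(\bx,F(\bx))\}$ is a single point, hence of Hausdorff dimension $0$. Dimension conservation then yields
\[
\dim(\Gamma)\ \le\ \dim(\pi_1(\Gamma))\ +\ \max_{\bx\in\M}\dim\bigl(\pi_1^{-1}(\bx)\cap\Gamma\bigr)\ =\ \dim(\M)+0 .
\]
The reverse inequality $\dim(\M)\le\dim(\Gamma)$ is the easy direction: since $\pi_1$ is Lipschitz and restricts to a bijection from $\Gamma$ onto $\M$, Lipschitz image estimates give $\dim(\M)=\dim(\pi_1(\Gamma))\le\dim(\Gamma)$. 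Combining the two inequalities gives the equality.

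For the second claim, I would use the other projection $\pi_2:\RR^{D+L}\to\RR^L$ onto the last $L$ coordinates; again this is linear and Lipschitz, and $\pi_2(\Gamma)=F(\M)$. The Lipschitz image bound together with the equality just established yields
\[
\dim(F(\M))\ =\ \dim(\pi_2(\Gamma))\ \le\ \dim(\Gamma)\ =\ \dim(\M).
\]

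The main step where I would expect to spend care is checking the hypotheses of Lemma~\ref{lem:dim2}: the lemma is stated for a $C^\infty$ function defined on an ambient Euclidean space and for $S$ a countable union of $C^\infty$ manifolds. The projection $\pi_1$ is a linear map on $\RR^{D+L}$, so smoothness is immediate, and the fact that $\Gamma$ is a countable union of $C^\infty$ manifolds follows from Proposition~\ref{prop:sigmaprop}. Once these pieces are in place, the remainder of the argument is bookkeeping. No subtle issue arises from the (possibly non-manifold) structure of $\M$ itself, because it only enters the argument through its Hausdorff dimension, which behaves well precisely because $\M$ too is $\sigma$-subanalytic.
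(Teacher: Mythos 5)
Your proof is correct and follows essentially the same route as the paper: use the coordinate projections of $\mathrm{graph}(F)$, apply the dimension conservation Lemma~\ref{lem:dim2} (with single-point fibers) for the nontrivial inequality $\dim(\mathrm{graph}(F))\le\dim(\M)$, and the Lipschitz-image bound for the remaining inequalities. Your extra care in verifying that $\mathrm{graph}(F)$ is a countable union of $C^\infty$ manifolds via Proposition~\ref{prop:sigmaprop}, so that Lemma~\ref{lem:dim2} applies, is a welcome clarification that the paper leaves implicit.
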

\begin{proof}
 Note that by the definition of $\sigma$-subanalytic functions, the graph of $f$ is a $\sigma$-subanalytic set.
We begin by showing that it has the same dimension as $\M$. Denote by $\pi$ the projection from  
$$\mathrm{graph}(F)=\{\left( \bx,F(\bx) \right) \mid \, \bx \in \M \} $$
onto the first coordinate. Firstly, since $\pi$ is a Lipschitz function, it cannot increase the Hausdorff dimension, so 
$$\dim(\mathrm{graph}(F))\geq \dim(\pi(\mathrm{graph}(F)))=\dim(\M) .$$
In the other direction, note that for every $(\bx,F(\bx))\in \mathrm{graph}(F)$ we have that \[ \pi^{-1}\left( \pi\left(\bx,F\left(\bx\right)\right)\right) = \pi^{-1}\left(\bx\right) = \{ \left(\bx,F\left(\bx\right)\right)\} \]
is a set containing a single point, and thus has dimension zero. Therefore, using Lemma~\ref{lem:dim2} we have that 
$$\dim(\mathrm{graph}(F))\leq  \dim(\pi(\mathrm{graph}(F)))+0=\dim(\M),$$
and so we have shown that $\M$ and the graph of $F$ have the same dimension. Finally, $F(\M) $ is the projection of the graph of $F$ onto the second coordinate and so it is a $\sigma$-subanalytic set. Moreover, as projections cannot increase the Hausdorff dimension, we obtain that
$$\dim(F(\M))\leq \dim(\mathrm{graph}(F))=\dim(\M),$$
which concludes the proof of the corollary.
\end{proof}

\subsection{Proof of the finite witness theorem}\label{sec_proof_of_fwt}
We are finally ready to prove the full version of the finite witness theorem.
\begin{proof}[Proof of Theorem~\ref{thm_fwt_full}]
Let $m=D+1$. 
%
%
Let $F_m: \mathbb{M} \times \W^{m+1} \rightarrow \RR^{m+1}$ be given by 
\[ F_m\left(\bz,\bth_0,\bth_1,\ldots,\bth_m\right) = \left(F\left(\bz;\bth_0\right),\ldots,F\left(\bz;\bth_m\right) \right). \]
Since $F_m$ is a concatenation of $\sigma$-subanalytic functions (each such function is the composition of $F$ with a different linear projection), by Proposition~\ref{thm_composition_of_ssa_functions} it is also $\sigma$-subanalytic. Therefore, the graph of $F_m$, given by $\A_m$ below, is $\sigma$-subanalytic: 
$$\A_m=\{(\bz,\bth_0,\bth_1,\ldots,\bth_m,s_0,s_1,\ldots,s_m) \in \M\times \W^{m+1}\times \RR^{m+1} \setmid F(\bz;\bth_i)=s_i,\ \forall i=0,\ldots ,m   \}.$$

Next, we can intersect $\A_m$ with the semialgebraic set defined by the equations $s_0\neq 0, s_1=s_2=\ldots=s_m=0 $ to obtain the $\sigma$-subanalytic set
\begin{align*}\tilde\A_m=\{(\bz,\bth_0,\bth_1,\ldots,\bth_m,F(\bz;\bth_0),0,\ldots,0) &\in \M\times \W^{m+1}\times \RR^{m+1} \mid \\
&  F(\bz;\bth_0)\neq 0 \text{ and }F(\bz;\bth_i)=0,\ \forall i=1,\ldots ,m   \} .\end{align*}

We can then remove by projection the last $m+1$ coordinates and the $\bth_0$ coordinate, to obtain the set
$$\B_m=\{(\bz,\bth_1,\ldots,\bth_m) \in \M\times \W^{m} \mid \, \bz \in \M \setminus \N \text{ and } F(\bz;\bth_i)=0,\ \forall i=1,\ldots ,m   \}, $$
which is $\sigma$-subanalytic as well.	 Let $\pi$ and $\pi_{\bth}$ denote the projections 
$$\pi(\bz,\bth_1,\ldots,\bth_m)=\bz, \qquad 
\pi_{\bth}(\bz,\bth_1,\ldots,\bth_m)=(\bth_1,\ldots,\bth_m) .$$
The set $\pi_{\bth}(\B_m) \subseteq \W^{m}$ is exactly the set of $m$-tuples $\left(\bth_1,\ldots,\bth_m\right)$ that are not sufficient to determine $\N$. To prove the theorem, we thus need to show that the dimension of $\pi_{\bth}(\B_m)$ is lower than $\dim\left(\W^{m}\right) = mD_{\bth}$. We do so by bounding the dimension of $\B_m$.

Let $\bz_0 \in \pi \left(\B_m\right)$. The set $\pi^{-1}(\bz_0) \cap \B_m$ can be presented as the Cartesian product
\begin{equation}\label{eq_decomposing_z0_fiber}
\pi^{-1}(\bz_0) \cap \B_m=\{\bz_0\}\times \underbrace{\Theta_{\bz_0}\times \Theta_{\bz_0}\times \ldots \times \Theta_{\bz_0} }_{m \text{ times }},
\end{equation}
where
\[ \Theta_{\bz_0}=\{\bth\in \W \mid \, F(\bz_0;\bth)=0\}. \]
The set $\Theta_{\bz_0}$ is also $\sigma$-subanalytic: the graph of $F$
$$\{ \left(\bz, \btheta, s\right) \in \M \times \W \times \RR \mid s = F\left(\bz;\theta\right) \}$$
is $\sigma$-subanalytic by assumption; intersecting it with the space $\bz=\bz_0,s=0$, and projecting to the $\btheta$ coordinate, yields the set $\Theta_{\bz_0}$. Since $\Theta_{\bz_0}$ is $\sigma$-subanalytic, it is a countable union of manifolds.
By the theorem assumption \labelcref{eq:assumption}, the Hausdorff dimension of $\Theta_{\bz_0}$ satisfies $$\dim(\Theta_{\bz_0})\leq D_{\bth}-1.$$
Therefore by Lemma~\ref{lem:dim_prod} and \cref{eq_decomposing_z0_fiber},
\begin{equation}\label{eq_dimension_of_fiber_bound}
    \dim \left( \pi^{-1}(\bz_0)\cap \B_m \right)=m\dim(\Theta_{\bz_0})\leq mD_{\bth}-m.   
\end{equation}
So far, $\bz_0$ is only assumed to be an arbitrary point in $\pi(\B_m)$. Now, fix $\bz_0 \in \pi(\B_m)$ to be a maximizer of the left-hand side of \cref{eq_dimension_of_fiber_bound}.
By the dimension conservation Lemma~\ref{lem:dim2}, 
\begin{equation}\label{eq:reuse}
\begin{split}
    \dim(\B_m) \leq &\dim(\pi(\B_m)) + \max_{\bz \in \pi(\B_m)} \dim \left( \pi^{-1}(\bz)\cap \B_m \right)
    \\ \overset{(a)}{=} &\dim(\pi(\B_m)) + \dim \left( \pi^{-1}(\bz_0)\cap \B_m \right)
	  \\ \overset{(b)}{\leq} &\dim(\pi(\B_m)) + mD_{\bth}-m
   \\ \overset{(c)}{\leq} &D + mD_{\bth}-m = mD_{\bth}-1,    
\end{split}
\end{equation}
where (a) holds by the choice of $\bz_0$, (b) is by \cref{eq_dimension_of_fiber_bound}, and (c) holds since $\pi(\B_m) \subseteq \M$ and $\dim(\M)=D$.
Since  $\pi_{\bth}$ is Lipschitz, \cref{eq:reuse} implies that
$$\dim(\pi_{\bth}(\B_m)) \leq \dim(\B_m) \leq m D_{\bth}-1$$
as required.

Now, suppose that $\W$ is an open and connected subset of $\RR^{q}$. We need to show that \eqref{eq:assumption} holds for all $\bz\in \M \setminus \N $. Fix $\bz\in \M \setminus \N$ and let $f:\W \to \RR$ be given by $f(\bth) = F(\bz;\bth)$. Since $\bz\not \in \N$, there exists some $\bth$ for which $F(\bz;\bth)\neq 0$. Hence, $f$ is an analytic function defined on an open connected domain, and is not identically zero. Therefore, its zero-set must be dimension-deficient (see \cite[Proposition 3]{mityagin2020zero}) and so we obtain \eqref{eq:assumption}.

This concludes the proof of the theorem.
\end{proof}

\subsection{Corollaries}\label{app:cor}
In this subsection, we present several corollaries to the finite witness theorem, and show that the simple version \cref{thm_fwt_simple} follows from it as a special case.

A useful application of the finite witness theorem, which often arises in invariant learning, is when one wishes to assert that two points $\bx,\by \in \M$ are indistinguishable by a parametric family of functions $F(\argdot;\bth)$; namely, that $F(\bx;\bth) = F(\by;\bth)$ for all $\bth \in \W$. As the following corollary shows, this can be asserted almost surely by testing whether $F(\bx;\bth) = F(\by;\bth)$ on $2\dim\left(\M\right)+1$ random $\bth$s.
\begin{cor}[Separation by Finite Witnesses]\label{thm_fwt_separation}
Let $\M \subseteq \RR^p$, $\W \subseteq \RR^q$ be $\sigma$-subanalytic sets of dimension $D$ and $D_{\bth}$ respectively. Let $F:\M \times \W \to \RR$ be a $\sigma$-subanalytic function. Define the set
$$\N=\{\left(\bx,\by\right) \in \M \times \M \setmid F(\bx;\bth)=F(\by;\bth),\ \forall \bth\in \W \}.$$
Suppose that for all $\left(\bx,\by\right) \in \M \times \M \setminus \N$
\begin{equation}\label{cond_dimension_defficiency_for_separation}
\dim\{\bth\in \W\setmid F(\bx;\bth)=F(\by;\bth) \}\leq D_{\bth}-1.\end{equation}
Then for generic $\left( \bth^{(1)},\ldots,\bth^{(2D+1)}\right) \in \W^{D+1}$,
\begin{equation}
\N=\{\left(\bx,\by\right) \in \M \times \M \setmid F(\bx;\bth^{(i)})=F(\by;\bth^{(i)}),\ \forall i=1,\ldots 2D+1\} .
\end{equation}
Moreover, if $\W$ is an open and connected subset of $\RR^q$, and $F(\bx;\bth)$ is analytic as a function of $\bth$ for all fixed $\bx \in \M$, then condition \labelcref{cond_dimension_defficiency_for_separation} is not required, as it is automatically satisfied.
\end{cor}
\begin{proof}
    Set $\tilde{\M} = \M \times \M$, and define $G:\tilde{\M} \times \W \to \RR$ by
    \[ G\left( \left(\bx,\by \right); \bth \right) = F(\bx;\bth) - F(\by;\bth).\]
    Then $\tilde{\M}$ is a $\sigma$-subanalytic set of dimension $\dim(\tilde{\M}) = 2D$, and $G$ is a $\sigma$-subanalytic function. Moreover, if $F(\bx;\bth)$ is analytic as a function of $\bth$ for any fixed $\bx \in \M$, then the function $\bth \mapsto G\left( \left(\bx,\by \right); \bth \right)$ is analytic for any fixed $(\bx,\by) \in \tilde{\M}$. The result follows from applying \cref{thm_fwt_full} to $G$.
\end{proof}

Using \cref{thm_fwt_separation}, we can now easily prove \cref{thm_fwt_simple} and \cref{prop:deep} from the main text, which we restate here for convenience: 
\hardest*
\begin{proof}[Proof of \cref{thm_fwt_simple}]
Since affine sets are semialgebraic, countable unions of affine sets are $\sigma$-subanalytic. Therefore, Theorem~\ref{thm_fwt_simple} follows from \cref{thm_fwt_separation}. Note that by \cref{thm_fwt_separation}, the claim holds for all $(\theta^{(1)},\ldots,\theta^{(2D+1)})$ except for possibly a dimension-deficient subset of $\W^{D+1}$, which is slightly stronger than the `for almost any' statement in the theorem.
\end{proof}

We now prove \cref{prop:deep} from the main text.
\deep*
\begin{proof}
%
Let $\Y =f\left(\RR^d\right) \subseteq \RR^L$ be the image of $f$. Since $f$ is injective, it naturally induces an injective \emph{pushforward} map of measures $f_*:\Mnd \to \mathcal{M}_{\leq n}(\Y)$, defined by
$$\Mnd \ni \sum_{i=1}^n w_i\delta_{x_i} \mapsto \sum_{i=1}^n w_i\delta_{f(x_i)} \in \mathcal{M}_{\leq n}(\Y).$$
Thus, it remains to show that for Lebesgue almost any $\bA\in \RR^{m\times L},\ \bb\in \RR^m$, the function $g(\by)=\sigma(\bA\by+\bb)$ is moment-injective on $\mathcal{M}_{\leq n}(\Y)$. To prove this, our first step is to bound the dimension of $\Y$. 
 By \cref{cor:imageDim}, since $f$ is a $\sigma$-subanalytic function, $\Y$ is a $\sigma$-subanalytic set, and
$$\dim\left(\Y\right) \leq \dim\left(\RR^d\right) = d.$$

Let $\M$ be the space of measure parameters over $\Y$ with $n$ points:
$$\M=\{(\bw,\bY)\in \RR^n \times \Y^n\},$$ 
and let $\W$ be the space of parameters
$$\W=\{(\ba,b)\in \RR^L \times \RR\}.$$
Then $\M$ and $\W$ are $\sigma$-subanalytic sets, $\dim\left(\M\right) \leq  n(d+1)$ and thus  $m=2n(d+1)+1$ is larger or equal to  $ 2\dim\left(\M\right)+1$. We can now proceed as in the proof of \cref{thm:continuous_alph}:

Define $F:\M\times\W \to \RR$ by
\begin{equation*}
	F(\bw,\bY;\ba,b)=\sum_{i=1}^n w_i\sigma(\ba\cdot \by_i+b).
\end{equation*}
The set $\W$ is open and connected, and $F(\bw,\bY;\ba,b)$ is analytic as a function of $\left(\ba,b\right)$ for any fixed $\left(\bw,\bY\right) \in \M$. Therefore, by \cref{thm_fwt_separation}, for almost any choice of $\left(\ba_i,b_i\right)_{i=1}^m \in \W$,
the following set equality holds:
\begin{equation}\label{proof_alph_set_equality_extended}
\begin{split}
    &\{\left(\left(\bw,\bY\right),\left(\bw',\bY'\right)\right)\in \M \times \M \setmid F\left(\bw,\bY;\ba,b\right)=F\left(\bw',\bY';\ba,b\right),\ \forall \left(\ba,b\right)\in \W \} =
    \\
    &\{\left(\left(\bw,\bY\right),\left(\bw',\bY'\right)\right)\in \M \times \M \setmid F\left(\bw,\bY;\ba_i,b_i\right)=F\left(\bw',\bY';\ba_i,b_i\right),\ \forall i=1,\ldots,m\}.
\end{split}
\end{equation}

Let $\bA \in \RR^{m \times L}$ with rows $\ba_1,\ldots,\ba_m$, and $\bb = \left(b_1,\ldots,b_m\right)$. Suppose that $\bA$,$\bb$ indeed satisfy \labelcref{proof_alph_set_equality_extended}.
Then, using the same argument as in the proof of \cref{thm:continuous_alph}, we see that the function $g$ is moment-injective on $\mathcal{M}_{\leq n}\left(\Y\right)$. This concludes the proof.
\end{proof}

\newpage

\section{Moment Injectivity of piecewise-linear networks}\label{app:pwl}
In this appendix, we describe some additional results  on moment injectivity of PwL networks. Our results are summarized as follows:
\begin{enumerate}
\item PwL networks are not moment injective on $\MnSig$, when $\Sigma$ is an infinite set. 
\item PwL networks are not moment injective on $\Sn$ when $\Omega=\RR^d$ (shown in the main text) or $\Omega=\ZZ^d $.
\item There exist irregular, countable infinite $\Sigma$, for which PwL networks are moment injective on $\SnSig$ (but not on $\MnSig$ as mentioned above).
\item For \emph{finite} $\Sigma$, PwL networks can be moment-injective on both $\MnSig$ and $\SnSig$. The number of neurons needed for injectivity depends on $n$ and on the size of $\Sigma$. 
\end{enumerate}
We now prove these results. We begin with discussing infinite alphabets.
\subsection{Failure of moment-injectivity for piecewise-linear functions with infinite alphabets}\label{app_pwl_injectivity_infinite_alphabets}
We begin by showing that PwL moment injectivity is never possible on spaces of \emph{measures}, when the alphabet is \emph{infinite}:
\begin{restatable}{prop}{pwlM}
\label{prop:pwlM}
	For all natural $m,d$ and $n\geq (d+2)/2$, and $\Omega\subseteq \RR^d$ that is not finite, if $\bpsi:\RR^d \to \RR^m$ is piecewise linear, then it is not moment injective on $\Mn$. 
\end{restatable}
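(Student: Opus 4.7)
The plan is to mimic the strategy of Proposition~\ref{prop:pwlsetsR}, but exploit the extra freedom provided by arbitrary real weights (rather than $0/1$ weights) to get obstructions even from a countable alphabet.

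First I would use the fact that a PwL function $\bpsi:\RR^d\to\RR^m$ has only finitely many (or at most countably many) closed linear regions covering $\RR^d$. Since $\Omega$ is infinite, a pigeonhole argument yields a linear region $U\subseteq\RR^d$ such that $U\cap\Omega$ is infinite; in particular we may choose $2n$ distinct points $\bx_1,\ldots,\bx_n,\by_1,\ldots,\by_n$ in $U\cap\Omega$. On $U$ the map $\bpsi$ is affine, so write $\bpsi(\bx)=\bA\bx+\bb$ for $\bx\in U$.

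Next, I would set up the two candidate measures with disjoint supports,
\[
\mu_1=\sum_{i=1}^n w_i\,\delta_{\bx_i},\qquad \mu_2=\sum_{i=1}^n w_i'\,\delta_{\by_i},
\]
both of which lie in $\Mn$. Using the affine form of $\bpsi$ on $U$, the moment difference is
\[
\hat\bpsi(\mu_1)-\hat\bpsi(\mu_2)=\bA\!\left(\sum_{i=1}^n w_i\bx_i-\sum_{i=1}^n w_i'\by_i\right)+\bb\!\left(\sum_{i=1}^n w_i-\sum_{i=1}^n w_i'\right).
\]
Hence the equality $\hat\bpsi(\mu_1)=\hat\bpsi(\mu_2)$ is implied by the $d+1$ linear equations
\[
\sum_{i=1}^n w_i\bx_i=\sum_{i=1}^n w_i'\by_i,\qquad \sum_{i=1}^n w_i=\sum_{i=1}^n w_i',
\]
in the $2n$ unknowns $(\bw,\bw')\in\RR^{2n}$. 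Since $n\geq(d+2)/2$ we have $2n\geq d+2>d+1$, so this homogeneous linear system admits a nonzero solution $(\bw,\bw')$.

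Finally, I would argue distinctness: because $\mu_1$ and $\mu_2$ are supported on the disjoint sets $\{\bx_i\}$ and $\{\by_i\}$, the equality $\mu_1=\mu_2$ would force all weights to vanish, contradicting nonzeroness of the chosen solution. Thus $\mu_1\neq\mu_2$ in $\Mn$ but $\hat\bpsi(\mu_1)=\hat\bpsi(\mu_2)$, so $\bpsi$ is not moment injective on $\Mn$. The only step requiring a bit of care is guaranteeing a linear region containing $2n$ points of $\Omega$; this is the expected obstacle if one tries to drop the infinite-alphabet assumption, and it is handled cleanly by pigeonhole once one notes that standard PwL networks have only finitely many linear regions.
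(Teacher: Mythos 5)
Your proof is correct and takes essentially the same approach as the paper's: both rely on the pigeonhole argument to find a single linear region containing infinitely many points of $\Omega$, and then exploit the fact that within one linear region the moment map depends only on $\sum_i w_i\bx_i$ and $\sum_i w_i$, so a dimension count gives a nontrivial linear dependence. The only cosmetic difference is that the paper picks $d+2$ points and invokes affine dependence of $d+2$ points in $\RR^d$, whereas you pick $2n$ points split into two groups of $n$ and solve a homogeneous system of $d+1$ equations in $2n$ unknowns; both are the same kernel-counting argument, and your disjoint-support choice makes the $\mu_1\neq\mu_2$ step slightly cleaner.
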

\begin{proof}
The proof is based on the fact that if $\mu=\sum_{i=1}^n w_i\delta_{\bx_i}$ is supported on a single linear region $L$ of $\bpsi$, then, denoting the parameters of the affine function corresponding to the region $L$ by $\bA \in \RR^{d \times m}, \bb \in \RR^m $, we have that 
\begin{equation}\label{eq:decomposition}
\sum_{i=1}^n w_i\bpsi(\bx_i)=\bA\left(\sum_{i=1}^nw_i\bx_i \right)+\bb\left(\sum_{i=1}^n w_i \right). \end{equation}  
Thus it is sufficient to find two distinct measures in  $\Mn $ which are supported in the same linear region, and for which the two sums $\sum_{i=1}^n w_i $ and $\sum_{i=1}^nw_i\bx_i$ give the same value. Since $\bpsi$ has a finite number of linear regions while $\Omega$ is infinite, there is some linear region which contains an infinite number of points in $\Omega$. Thus, we can choose $d+2$ distinct points $\bx_1,\ldots,\bx_{d+2} $ in this region. Since all these points are in $\RR^d$, they must be affinely dependent, which means there exist weights $w_1,\ldots,w_{d+2}$, not all of which are zero, such that
\begin{align*}
\sum_{j=1}^{d+2} w_j\bx_j=0 \text{ and } \sum_{j=1}^{d+2} w_j=0.
\end{align*} 
We can now define $k=\lceil(d+2)/2 \rceil$, and set $\mu=\sum_{i=1}^k w_i\delta_{\bx_i} $ and $\mu'=\sum_{j=k+1}^{d+2} (-w_j)\delta_{\bx_j} $. According to \eqref{eq:decomposition}, integration of $\bpsi$ over $\mu$ and over $\mu'$ gives the same value, while $\mu \neq \mu'$, and so $\bpsi$ is not moment injective.
\end{proof}

If we change the setup of \cref{prop:pwlM} and consider spaces of \emph{multisets} rather than spaces of \emph{measures}, we get a somewhat more complicate picture: In the following, 
 we  give an example of an irregular, infinite, countable alphabet $\Sigma_\alpha $, for which even the simple identity function $x\mapsto x $ \emph{is} moment injective. We shall then show that for the more natural alphabet $\Sigma=\ZZ^d$, PwL moment injectivity is not possible on $\SnSig$. We define $\Sigma_\alpha=\{\alpha,\alpha^2,\alpha^3,\ldots \} $, where $\alpha\in \RR$ is a \emph{transcendental number}, which means that it is not the root of any polynomial with rational coefficients. Examples of such numbers include $\pi$ and $e$. In this case, we see that the integral $\int_\RR xd\mu(x) $ over any non-zero measure $\mu=\sum_{i=1}^n w_i \delta_{\alpha^i}$ where $w_i$ are rational numbers, will not be zero. This implies that the identity function is moment injective on $\S_{\leq n}(\Sigma_\alpha) $ for any natural $n$. Note also that if $n$ is fixed, we can just take $\alpha=(n+1)^{-1} $, as suggested in \cite{xu2018powerful}. Finally, note that by choosing $\alpha$ to be positive, we have that the simple piecewise linear MLP $\mathrm{ReLU}(x) $ is moment injective on $\S_{\leq n}(\Sigma_\alpha) $  as well, as $\mathrm{ReLU}(x)=x $ on the domain $\Sigma_\alpha$.

We now consider the case $\Sigma=\ZZ^d$:

\begin{restatable}{prop}{pwlZ}
\label{prop:pwlsetsZ}
Let $d,n\geq 2$ be natural numbers and $\Sigma=\ZZ^d$. If $\bpsi:\RR^d \to \RR^m$ is piecewise linear, then it is not moment injective on $\SnSig$. 
\end{restatable}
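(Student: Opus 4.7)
The plan is to mirror the strategy of Proposition~\ref{prop:pwlsetsR}: exhibit inside a single linear region of $\bpsi$ two distinct multisets of equal cardinality whose elements sum to the same vector, so that affinity of $\bpsi$ on that region forces them to have identical moments. The new obstacle compared with the $\Omega=\RR^d$ case is that the continuous perturbation $\bx_0\pm\epsilon\bd$ is no longer available; over the lattice $\ZZ^d$ I instead need an \emph{exact} integer collision.

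First I would use that a PwL map $\bpsi$ partitions $\RR^d$ into finitely many closed convex polyhedral regions $L_1,\dots,L_r$ on each of which $\bpsi$ agrees with an affine map $\bx\mapsto\bA_j\bx+\bb_j$. Since $\ZZ^d$ is infinite while $r$ is finite, by pigeonhole at least one region, say $L$, satisfies $|L\cap\ZZ^d|=\infty$.

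The central step is to produce three lattice points $\bx,\by,\bz\in L\cap\ZZ^d$ with $\bx\neq\by$ and $\bx+\by=2\bz$. I would do this by a parity pigeonhole: the reduction map $\ZZ^d\to(\ZZ/2\ZZ)^d$ has only $2^d$ classes, so the infinite set $L\cap\ZZ^d$ must contain two distinct points $\bx\neq\by$ in a common parity class. Their midpoint $\bz:=(\bx+\by)/2$ then lies in $\ZZ^d$ by parity and in $L$ by convexity of the polyhedron, placing all three points in $L\cap\ZZ^d$. This is the step I expect to be the most delicate, since it relies on the convexity of the linear regions; I would therefore pause here to record that PwL regions of $\bpsi$ are indeed convex polyhedra, so that the midpoint cannot escape the region where $\bpsi$ is affine.

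To conclude, because $\bx,\by,\bz$ all lie in $L$, affinity of $\bpsi$ on $L$ yields $\bpsi(\bx)+\bpsi(\by)=\bA_L(\bx+\by)+2\bb_L=2\bpsi(\bz)=\bpsi(\bz)+\bpsi(\bz)$. Hence the two multisets $\lms\bx,\by\rms$ and $\lms\bz,\bz\rms$, both of cardinality $2\leq n$ and both supported in $\Sigma=\ZZ^d$, are distinct (as $\bx\neq\by$) yet share the same $\bpsi$-moment, contradicting moment injectivity on $\SnSig$.
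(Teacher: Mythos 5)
Your proof is correct, and it takes a genuinely different route from the paper's to find the required collision. The paper reduces to dimension one: for $d=1$ it observes that a convex linear region containing infinitely many natural numbers must contain a ray $[N,\infty)$, then takes the explicit pair of multisets $\lms 2N,2N\rms$ and $\lms N,3N\rms$; for $d>1$ it restricts $\bpsi$ to the coordinate line $\{(0,\dots,0,t)\}$, where the restriction is again PwL, and re-applies the $d=1$ argument. You instead argue intrinsically in $\ZZ^d$: pigeonhole the infinite set $L\cap\ZZ^d$ over the $2^d$ residue classes mod $2$ to get $\bx\neq\by$ with $\bx\equiv\by\ (\mathrm{mod}\ 2)$, take the integer midpoint $\bz=\tfrac{1}{2}(\bx+\by)\in L$ by convexity, and compare $\lms\bx,\by\rms$ with $\lms\bz,\bz\rms$. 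Note that both proofs ultimately produce a collision of the same shape ($\bx+\by=2\bz$ with all three in one affine piece), and both lean on the same structural input — that a PwL function admits a \emph{finite} decomposition of $\RR^d$ into \emph{convex} polyhedral pieces on which it is affine — which you were right to record explicitly, since the maximal affine regions of a PwL map need not themselves be convex. Your parity argument is arguably the cleaner of the two: it avoids the reduction to a one-dimensional sublattice and the ray argument, and works uniformly in all dimensions.
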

\begin{proof}
 Let us first consider the case where $d=1$. 
 Since the number of linear regions is finite, there exists a linear region $L$ that contains an infinite number of natural numbers. From the convexity of $L$, it follows that $L$ contains an interval of the form $[N,\infty)$ where $N\in \NN$. Using the same argument as in \cref{prop:pwlsetsR}, we conclude that the moments of the multisets $\lms 2N,2N \rms $ and $\lms N,3N \rms $ are the same, and thus $\bpsi$ is not moment injective. The same argument can be applied in the case $d>1$ using the identification of  $\ZZ$ with the elements in $\ZZ^d$ whose first $d-1$ coordinates are zero.   
\end{proof}

\subsection{Piecewise linear network moment injectivity for sets with finite alphabets}
We now consider moment injectivity of PwL networks when the alphabet is finite. Recall that when the activations are analytic, moment injectivity is achievable on $\SnSig$ with a single output  neuron. For PwL activations, moment injectivity is also possible, but the required number of neurons depends on the cardinality of the alphabet.

\begin{definition}
For given $W,L,d \in \NN$, we denote by $M(W,L,d) $ the maximal number of linear regions that a ReLU network $f:\RR^d \to \RR $ with maximal width $W$ and depth $L$ can have. 
\end{definition}

\begin{prop}\label{prop:bounds}
If $n,W,L,d $ are natural numbers with $n\geq (d+2)/2$, and $\Sigma\subseteq \RR^d$ is a finite alphabet  satisfying $|\Sigma|>M(W,L,d)\cdot (d+1) $, then any ReLU neural network $\bpsi:\RR^d \to \RR^m $ of depth $L$ and maximal width $W$ will not be injective on $\mathcal{M}_{\leq  n}(\Sigma)$. Moreover, for large enough $n$, assuming that all points in $\Sigma$ have rational coordinates, such ReLU networks will also not be moment injective on $\mathcal{S}_{\leq  n}(\Sigma)$.
\end{prop}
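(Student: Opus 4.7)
The proof would combine a pigeonhole argument on the linear regions of $\bpsi$ with the affine-dependence construction from \cref{prop:pwlM}. First, since $\bpsi$ is a ReLU network of width $W$ and depth $L$, it is piecewise affine with at most $M(W,L,d)$ linear regions. Because $|\Sigma|>M(W,L,d)\cdot(d+1)$, the pigeonhole principle furnishes at least one linear region $R$ whose intersection with $\Sigma$ contains at least $d+2$ points, which I label $\bx_1,\ldots,\bx_{d+2}$; on $R$ one may write $\bpsi|_R(\bx)=\bA\bx+\bb$ for some $\bA\in\RR^{m\times d}$ and $\bb\in\RR^m$.

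For $\Mn$, the argument of \cref{prop:pwlM} transfers almost verbatim to this finite setting. The $d+2$ chosen points lie in $\RR^d$ and are therefore affinely dependent: there exist reals $w_1,\ldots,w_{d+2}$, not all zero, with $\sum_i w_i=0$ and $\sum_i w_i\bx_i=\bzero$. Setting $k=\lceil (d+2)/2\rceil$, which is at most $n$ by the hypothesis $n\geq (d+2)/2$, I would form the signed measures
\[
\mu=\sum_{i=1}^{k} w_i\delta_{\bx_i}, \qquad \mu'=\sum_{j=k+1}^{d+2}(-w_j)\delta_{\bx_j}.
\]
These are two distinct elements of $\Mn$ (disjoint supports, combined weights not identically zero), and a direct computation using the affine form of $\bpsi$ on $R$ together with the dependence relation gives $\int\bpsi\,d\mu=\int\bpsi\,d\mu'$, so $\bpsi$ fails to be moment-injective on $\Mn$.

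For $\Sn$, I would still work inside the single region $R$. On $R$ the $\bpsi$-sum of any multiset $M$ supported in $\Sigma\cap R$ equals $\bA\bigl(\sum_{\bx\in M}\bx\bigr)+|M|\bb$, so it depends on $M$ only through its cardinality and the sum of its elements. A collision between two distinct multisets $M\neq M'$ of size at most $n$ is therefore equivalent to a nonzero integer vector $\boldsymbol\delta\in\ZZ^K$, with $K=|\Sigma\cap R|\geq d+2$, satisfying $\sum_i\delta_i\bpsi(\bx_i)=\bzero$ and $\|\boldsymbol\delta\|_1\leq 2n$: the positive and negative parts of $\boldsymbol\delta$ then play the role of the multiplicity vectors of $M$ and $M'$. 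Since the vectors $\bpsi(\bx_1),\ldots,\bpsi(\bx_K)$ all lie in an affine subspace of dimension at most $d$, hence in a linear subspace of $\RR^m$ of dimension at most $d+1<K$, they are certainly $\RR$-linearly dependent.

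The main obstacle is upgrading this $\RR$-linear dependence to an \emph{exact integer} dependence among the $\bpsi(\bx_i)$: the real kernel of $[\bpsi(\bx_1)\mid\cdots\mid\bpsi(\bx_K)]$ has positive dimension, but it need not meet $\ZZ^K\setminus\{0\}$ unless additional arithmetic structure is extracted from the finite set $\Sigma\cap R$ and the affine form $\bA\bx+\bb$. Once such an integer witness $\boldsymbol\delta$ is produced (e.g.\ by clearing denominators from a rational vector in the kernel, obtained by exploiting the finitely many $\bpsi$-values and their containment in the $(d+1)$-dimensional subspace), the phrase ``for large enough $n$'' simply means $n\geq \tfrac12\|\boldsymbol\delta\|_1$, which completes the multiset part of the proof.
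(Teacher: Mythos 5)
Your argument for $\Mn$ coincides with the paper's: pigeonhole on the $\leq M(W,L,d)$ linear regions to find a region containing $\geq d+2$ points of $\Sigma$, $\RR$-affine dependence of those $d+2$ points, and a split of the dependence relation into two signed measures with disjoint supports. This part is fine and is essentially what the paper writes.

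For $\Sn$, however, you have picked a harder target than the paper does. You look for an integer vector in the kernel of $[\bpsi(\bx_1)\mid\cdots\mid\bpsi(\bx_K)]$, but since $\bA,\bb$ are arbitrary real parameters, even for $\Sigma\subseteq\mathbb{Q}^d$ there is no reason for that kernel to contain a nonzero integer or rational vector. The paper instead asks for an integer \emph{affine} relation among the data points themselves: $\sum_j w_j\bx_j=\bzero$ and $\sum_j w_j=0$ with $w_j\in\ZZ$ not all zero. By \eqref{eq:decomposition} this automatically gives $\sum_j w_j\bpsi(\bx_j)=\bzero$ for any affine restriction of $\bpsi$. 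The paper obtains rational $w_j$ from the assertion that the $d+2$ points are affinely dependent over $\mathbb{Q}$, clears denominators to get integers, and then partitions the indices \emph{by sign} (not by position, since multiset multiplicities must be nonnegative) to split the $w_j$ into the multiplicity vectors of two distinct multisets; ``large enough $n$'' simply means $n\geq\max\bigl(\sum_{w_j>0}w_j,\ \sum_{w_j<0}(-w_j)\bigr)$.

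That said, the misgiving you raise is well founded and in fact cuts against the paper's own argument. Affine dependence over $\mathbb{Q}$ does not hold for an arbitrary finite $\Sigma\subseteq\RR^d$: already in $d=1$, the points $1,\pi,\sqrt{2}$ admit (up to scaling) a unique affine relation, and its coefficients are irrational, so no integer relation exists and a map that is affine on a region containing these points remains moment injective on the induced multisets. The ``Moreover'' clause therefore implicitly needs a hypothesis such as $\Sigma\subseteq\mathbb{Q}^d$, under which the $(d+1)\times(d+2)$ dependency system has rational data and hence a rational nonzero kernel vector, making the paper's proof sound. The right fix is thus not to manufacture an integer relation among the $\bpsi(\bx_j)$, which is the wrong object to make integral, but to follow the paper's route and impose a rationality restriction on $\Sigma$ so that an integer affine relation among the $\bx_j$ is available.
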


\begin{proof}
Since $M(W,L,d)<\frac{|\Sigma|}{d+1}$, by the pigeonhole principle, we know that there must exist at least one linear region  that contains at least $d+2$ points in $\Sigma$. Denote these points $\bx_1,\bx_2,...,\bx_{d+2}$. These points are affinely dependent, and so there exist weights  $w_1,\ldots,w_{d+2}$, not all of which are zero, such that
\begin{equation}\label{eq:int}
\sum_{j=1}^{d+2} w_j\bx_j=0 \text{ and } \sum_{j=1}^{d+2} w_j=0.
\end{equation} 
We can now define $k=\lceil(d+2)/2 \rceil$, and set $\mu=\sum_{i=1}^k w_i\delta_{\bx_i} $ and $\mu'=\sum_{j=k+1}^{d+2} (-w_j)\delta_{\bx_j} $. According to \eqref{eq:decomposition}, integration of $\bpsi$ over $\mu$ and over $\mu'$ gives the same value, while $\mu \neq \mu'$, and so $\bpsi$ is not moment injective on $\MnSig$.

We now show that moment injectivity fails also on $\SnSig$ when $n$ is large enough, under the assumption that $\Sigma \subset \mathbb{Q}^d$. Under this assumption, the above-mentioned points $\bx_1,\bx_2,...,\bx_{d+2}$ are affinely dependent over $\mathbb{Q}$, and there exist rational weights $w_1,\ldots,w_{d+2}$ for which \labelcref{eq:int} holds.

Now we can multiply \eqref{eq:int} by an appropriate integer, so that the $w_i$s are all integers. Using these new weights, suppose that $n \geq \sum_i |w_i|$. Let $\mu=\sum_{i: w_i\geq 0} w_i\delta_{\bx_i} $ and $\mu'=\sum_{j: w_j<0} (-w_j)\delta_{\bx_j} $. Then $\mu$ and $\mu'$ are two distinct measures that assign the same moment to $f$. Since the weights of $\mu$ and $\mu'$ are all natural numbers, these measures correspond to multisets, where the weights denote the number of repetitions of each element. 
\end{proof}

\paragraph{Number of linear regions}
Proposition~\ref{prop:bounds} gives a lower bound on the number of linear regions needed for moment injectivity.
The relationship between a neural network's size and the number of its linear regions is well studied \cite{montufar2014number,raghu2017expressive}. In particular, it is known that the number of linear regions can be exponentially larger than the number of the parameters. As shown in (\cite{aamand2022exponentially}, Theorem D.6), the maximal number of linear regions $M(W,L,d) $ of a ReLU network with maximal width $W$, depth $L$, and input dimension $d$, is bounded from above by $CW^{d\cdot L}$, where $C>0$ is a constant. Joining this together with  Proposition~\ref{prop:bounds}, we see that we cannot have moment injectivity if 
\begin{equation}\label{eq:lb}
CW^{d\cdot L}(d+1)<|\Sigma|.
\end{equation}
This implies that the number of neurons required for injectivity is at least logarithmic in the cardinality of the alphabet. Although this is only a lower bound, we believe that this bound can be achieved using the techniques developed in \cite{aamand2022exponentially}.

We conclude this discussion by providing an \emph{upper bound} for the number of linear regions needed in the simple case that $\Sigma=\{\ell_1<\ell_2<\ldots<\ell_S \} $ is a subset of $\RR$, and the depth $L$ is one. In this case, our bound \eqref{eq:lb} states that one cannot attain moment injectivity on $\MnSig$ when $2C\cdot W<S $. In the other direction, we show that moment injectivity \emph{can} be obtained when $W=S$.
\begin{prop} Let $\Sigma=\{\ell_1<\ell_2<\ldots<\ell_S \} $, and let $n \leq S $ be some natural number. Then there exists a shallow ReLU network of width $W=S$ that is moment injective on $\MnSig$.     
\end{prop}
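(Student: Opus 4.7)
The plan is to reduce moment injectivity to a question of invertibility of an $S \times S$ matrix, and then construct $\psi$ explicitly so that the resulting matrix is triangular with nonzero diagonal. Since the elements $\ell_1 < \ell_2 < \cdots < \ell_S$ of $\Sigma$ are distinct, every measure $\mu \in \mathcal{M}_{\leq n}(\Sigma)$ has a unique representation $\mu = \sum_{i=1}^S w_i \delta_{\ell_i}$ with at most $n$ nonzero coefficients. Consequently, the moment map $\mu \mapsto \int \psi\, d\mu$ on $\mathcal{M}_{\leq n}(\Sigma)$ agrees with the linear map $\mathbf{w} \mapsto M\mathbf{w}$, where $M$ is the matrix with columns $\psi(\ell_1), \ldots, \psi(\ell_S)$. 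Because $n \leq S$, it is enough to exhibit $\psi$ for which $M$ is invertible, as then the linear map is injective on all of $\RR^S$.

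For the construction I would fix any $\epsilon$ with $0 < \epsilon < \min_{1 \leq i < S}(\ell_{i+1} - \ell_i)$, set $c_j = \ell_j - \epsilon$ for $j = 1, \ldots, S$, and take
$$\psi_j(x) = \mathrm{ReLU}(x - c_j), \qquad j = 1, \ldots, S.$$
This is a shallow ReLU network with one hidden layer of width $W = S$ (input-to-hidden weights all equal to $1$, hidden biases $-c_j$) and the identity output layer, mapping $\RR \to \RR^S$.

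The key computation is evaluating $M_{ji} = \psi_j(\ell_i) = \mathrm{ReLU}(\ell_i - \ell_j + \epsilon)$. For $i \geq j$ the argument is at least $\epsilon > 0$, so $M_{ji} = \ell_i - \ell_j + \epsilon$, and in particular $M_{jj} = \epsilon$. For $i < j$ the choice of $\epsilon$ gives $\ell_j - \ell_i \geq \min_k(\ell_{k+1}-\ell_k) > \epsilon$, so the argument is negative and $M_{ji} = 0$. Thus $M$ is upper-triangular with constant diagonal $\epsilon$, so $\det M = \epsilon^S \neq 0$.

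Therefore $\mathbf{w} \mapsto M\mathbf{w}$ is injective on $\RR^S$, and so $\psi$ is moment injective on $\mathcal{M}_{\leq n}(\Sigma) \subseteq \RR^S$, as desired. The proof is essentially a one-line construction, so there is no serious obstacle; the only point that requires a small amount of care is choosing the biases to interleave the alphabet symbols, so that the moment matrix becomes triangular.
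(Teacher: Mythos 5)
Your proof is correct and follows essentially the same approach as the paper's: both construct ReLU units whose thresholds interleave the alphabet elements (your $c_j = \ell_j - \epsilon$ plays the role of the paper's $t_j$ with $t_1 < \ell_1 < t_2 < \cdots < t_S < \ell_S$). The paper argues by recursive back-substitution starting from the last coordinate, while you make the same structure explicit by observing that the moment matrix is triangular with nonzero diagonal; these are two phrasings of the same argument.
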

\begin{proof}
Choose some $t_1,\ldots,t_S$ such that 
$$t_1<\ell_1<t_2<\ell_2< \ldots<t_S<\ell_S. $$
Consider the shallow ReLU network
$$x\mapsto \bpsi(x)= \begin{pmatrix}
\relu(x-t_1)\\
\relu(x-t_2)\\
\vdots\\
\relu(x-t_S)
    \end{pmatrix}
$$
\end{proof}
Suppose that $\mu=\sum_{i=1}^S w_i\delta_{\ell_i}$ and $\mu'=\sum_{i=1}^S w_i'\delta_{\ell_i}$ are two measures such that 
$$\int \bpsi(x)d\mu(x)=\int \bpsi(x)d\mu'(x) .$$
The equality of the last coordinate implies
$$w_S'(\ell_S-t_S)=\int \relu(x-t_S)d\mu'(x)=\int \relu(x-t_S)d\mu(x)=w_S(\ell_S-t_S) $$
and so $w_S=w_S' $. Next, we can consider the equality of the $(S-1)$th coordinate and show that this implies that $w_{S-1}=w_{S-1}' $. Continuing with this argument recursively, we see that the measures must agree.




\newpage
\section{Optimal embedding dimension of moment-injective functions}\label{app:lb}

In \cref{sec_lower_bounds}, we prove the lower bounds on the embedding dimensions showed in \cref{tab:results} for $\Mnd, \Snd$ and $\MnSig$ for countable $\Sigma$. The remaining lower bounds in the table are equal to one, and thus do not require a proof. For $\Snd$, these bounds are already known \cite{joshi2023expressive,corso2020principal,wagstaff2022universal}, but here we present a slightly different proof, which also applies to spaces of measures that were not considered previously. As a rule, for all the spaces of measures/multisets  considered, the lower bound equals to the \emph{intrinsic dimension} of the space, i.e., the number of continuous parameters required to describe it. For example, $\Snd$ is parameterized by $n$ continuous vectors in $\RR^d$, with a total dimension of $n\cdot d$, and by a discrete weight vector that has no influence on the dimension, and thus the lower bound in this case is $n\cdot d$. 

An interesting question that remains open is whether the gap between the embedding dimension achieved here with shallow neural networks, and the best lower bound, can be closed completely. For multisets with features in $[0,1]$, \cite{wagstaff2022universal} showed that the lower bound of $n$ moments can be attained using $n$ polynomial functions. In \cref{sec_T_systems}, we show that for spaces of measures with features in $\RR$, the lower bound $2n$ in \cref{tab:results} can be attained by $2n$ functions that form a \emph{T-system} (see discussion below). Examples of T-systems include the functions $x,x^2,\ldots,x^{2n}$, as well as $2n$ univariate sigmoid neural networks with distinct parameters. Thus, for $d=1$, we know that the lower bounds on the embedding dimension are optimal. For $d>1$, it seems that this question is still open.

\subsection{Lower bounds}\label{sec_lower_bounds}

Our lower bounds are based on the following intuitive fact, which is a simple corollary to Brouwer's invariance of domain theorem:
\begin{prop}\label{prop:topology}
If $U\subseteq \RR^D$ is an open set and $F:U \to \RR^M$ is continuous and injective, then $M\geq D$.
\end{prop}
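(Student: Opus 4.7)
The strategy is to reduce to Brouwer's invariance of domain theorem, which asserts that a continuous injective map from an open subset of $\RR^D$ into $\RR^D$ has open image. This is the only nontrivial tool needed; everything else is elementary.

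The plan is to argue by contradiction. Suppose for a moment that $M < D$, and (without loss of generality) that $U$ is nonempty (otherwise the statement is vacuous). First I would embed the codomain $\RR^M$ into $\RR^D$ via the trivial inclusion
\[
\iota : \RR^M \to \RR^D, \qquad (y_1, \ldots, y_M) \mapsto (y_1, \ldots, y_M, 0, \ldots, 0),
\]
and form the composition $G = \iota \circ F : U \to \RR^D$. Since $F$ is continuous and injective and $\iota$ is a continuous injection, $G$ is continuous and injective as well. Thus $G$ is a continuous injective map from the open set $U \subseteq \RR^D$ into $\RR^D$, to which invariance of domain applies.

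By invariance of domain, $G(U)$ is open in $\RR^D$. On the other hand, by construction $G(U) \subseteq \iota(\RR^M)$, which is a proper linear subspace of $\RR^D$ (namely the subspace where the last $D - M$ coordinates vanish). Any proper linear subspace of $\RR^D$ has empty interior in $\RR^D$. Since $U$ is nonempty, $G(U)$ is also nonempty, and so cannot simultaneously be open in $\RR^D$ and contained in a set with empty interior. This contradiction forces $M \geq D$.

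The only nonroutine ingredient is invariance of domain itself, which is a deep theorem from algebraic topology; in this proposal I would simply cite it. The remaining steps — embedding $\RR^M$ linearly in $\RR^D$, checking that a proper subspace has empty interior, and noting that composition preserves continuity and injectivity — are all straightforward, so there is no real technical obstacle beyond the appeal to the cited theorem.
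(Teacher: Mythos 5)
Your proof is correct and takes essentially the same approach as the paper: embed the codomain into $\RR^D$ by padding with zeros, apply invariance of domain, and derive a contradiction from the image being open yet contained in a proper subspace. The only cosmetic difference is that the paper puts the zero coordinates first ($(\boldsymbol{0}, F(\bx))$) rather than last.
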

\begin{proof}
Suppose by contradiction that $M<D$, and let $\bzero \in \RR^{D-M}$ be a vector of all zeros. Then the function $\tilde F(\bx)=(\bzero,F(\bx)) $ is a continuous injective function of $U\subseteq \RR^D$ into $\RR^D$ and by the invariance of domain theorem (\cite{hatcher}, Theorem 2B.3) $\tilde F(U)$ should be open. This leads to a contradiction, since arbitrarily small perturbations of the first coordinate of a point $(\bzero,F(\bx)) $ in the image of $\tilde F$ will no longer be in the image. 
\end{proof}
Based on this proposition, we can prove the lower bounds in \cref{tab:results}. The following lower bound holds for \emph{any} alphabet $\Sigma$.   
\begin{theorem}
Let $\Sigma \subseteq \RR^d$ with $|\Sigma| \geq n$. Suppose that $f:\Sigma \to \RR^m $ is moment injective on $\MnSig$. Then $m\geq n$ .   
\end{theorem}
\begin{proof}
Fix $n$ distinct points $\bx_1,\ldots,\bx_n $ in $\Sigma$. The function
$$(w_1,\ldots,w_n)\mapsto \sum_{i=1}^n w_i\delta_{\bx_i} $$
maps $\RR^n$ injectively into the space of measures $\MnSig$. Since $f$ is moment injective, it follows that the continuous (in fact, linear) function 
$$\RR^n \ni (w_1,\ldots,w_n)\mapsto \sum_{i=1}^n w_i f(\bx_i) \in \RR^m $$
is injective. Thus, by Proposition~\ref{prop:topology} we have that $m\geq n $.
\end{proof}
When the alphabet is uncountable, we can obtain stronger lower bounds. We now show two lower bounds on the embedding dimension required for moment injectivity of a continuous function $f$ with the alphabet $\Omega = \RR^d$.
\begin{theorem}
Let $f:\RR^d \to \RR^m$ be a continuous function. Then
\begin{enumerate}
\item If $f $ is moment injective on $\Mnd$, then $m \geq n(d+1) $.
\item If $f $ is moment injective on $\Snd$, then $m \geq nd $.
\end{enumerate}
\end{theorem}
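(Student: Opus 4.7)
The strategy is to apply Proposition~\ref{prop:topology} (invariance of domain) after cutting out a suitable open parameter region on which the parameterization of measures, respectively multisets, by raw coordinates is injective. The standard obstruction in this type of argument, namely the $S_n$-symmetry that permutes the labels of the atoms, is eliminated by restricting to a section in which the first coordinates are strictly increasing; this is an open condition, so the parameter region remains open and retains full dimension.

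For the claim on $\Mnd$, I would introduce the open set
\[
U = \bigl\{(\bw,\bX)\in \RR^n \times \RR^{d\times n} \,:\, w_i > 0 \text{ for all } i,\ x_{1,i} < x_{1,i+1} \text{ for } i=1,\ldots,n-1\bigr\} \subseteq \RR^{n(d+1)},
\]
which is non-empty and open in $\RR^{n(d+1)}$. Two distinct points of $U$ yield distinct measures in $\Mnd$: the strict ordering of first coordinates forces the $n$ support points to be pairwise distinct and recoverable as an ordered tuple, while positivity of the weights ensures the support of $\sum_i w_i \delta_{\bx_i}$ is exactly $\{\bx_1,\ldots,\bx_n\}$, from which the weights can be read off. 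Composing the injective parameterization $U \to \Mnd$ with the moment map $\hat f$ gives
\[
F: U \to \RR^m, \qquad F(\bw,\bX) = \sum_{i=1}^n w_i f(\bx_i),
\]
which is continuous (as $f$ is continuous) and injective (as $\hat f$ is moment injective). Proposition~\ref{prop:topology} then forces $m \ge \dim U = n(d+1)$.

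For the claim on $\Snd$, essentially the same argument is applied to the open set
\[
V = \bigl\{\bX=(\bx_1,\ldots,\bx_n)\in \RR^{d\times n} \,:\, x_{1,i} < x_{1,i+1} \text{ for } i=1,\ldots,n-1\bigr\} \subseteq \RR^{nd}.
\]
The strict ordering makes the map $\bX \mapsto \lms \bx_1,\ldots,\bx_n \rms$ injective into $\Snd$, so $G(\bX) = \sum_{i=1}^n f(\bx_i)$ is a continuous injective map $V \to \RR^m$, and Proposition~\ref{prop:topology} gives $m \ge \dim V = nd$.

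There is no substantive obstacle in this proof; the only subtle point is to exhibit an open slice of parameter space on which the tautological parameterization of $\Mnd$ (respectively $\Snd$) is already injective, and the ordering-by-first-coordinate trick achieves this cleanly while preserving the full parameter dimension $n(d+1)$ (respectively $nd$).
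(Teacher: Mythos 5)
Your proof is correct and uses essentially the same approach as the paper: apply Proposition~\ref{prop:topology} to a continuous, injective map from an open full-dimensional parameter slice into $\RR^m$. The only difference is the choice of slice — you break the permutation symmetry by requiring the first coordinates to be strictly increasing, whereas the paper confines each $\bx_i$ to a small ball around a fixed point $\by_i$ (with disjoint balls) and requires only $w_i\neq 0$; both choices work equally well.
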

\begin{proof}
Choose $n$ distinct points $\by_1,\ldots,\by_n\in \RR^d $ and let $r>0$ be small enough so that the distance between any two distinct points $\by_i,\by_j$ is larger than $2r$. Then the function 
$$(w_1,\ldots,w_n,\bx_i,\ldots,\bx_n)\mapsto \sum_{i=1}^n w_i\delta_{\bx_i}  $$ maps the open set 
$$\{(w_1,\ldots,w_n,\bx_i,\ldots,\bx_n) \mid \, w_i\neq 0, |\bx_i-\by_i|<r, i=1,\ldots,n \} $$
injectively into $\Mnd$. Therefore, the map $$\RR^{n+dn}\ni(w_1,\ldots,w_n,\bx_1,\ldots,\bx_n)\mapsto \sum_{i=1}^n w_i f(\bx_i) \in \RR^m $$
is  injective, and since it is also continuous,  Proposition~\ref{prop:topology} implies that $m\geq n(d+1)$.

For the second part of the theorem, where $f$ is moment injective on $\Snd$, we can use a similar argument, the only difference being that we fix  $w_i=1$. Namely, the map 
$$\RR^{nd}\ni (\bx_1,\ldots,\bx_n)\mapsto \sum_{i=1}^n f(\bx_i) \in \RR^m $$
is continuous and injective on the open set of $(\bx_1,\ldots,\bx_n) $ with $|\bx_i-\by_i|<r $ for all $i$, and therefore by Proposition~\ref{prop:topology}  $m\geq nd$.

\end{proof}
\subsection{T-Systems}\label{sec_T_systems}
We now show how to achieve moment injectivity on $\MnR$ with an \emph{optimal} embedding dimension. Recall from \cref{tab:results} that for this space, our lower bound on the embedding dimension of continuous moment-injective functions is $2n$, whereas the embedding dimension of the MLPs constructed using our technique is $4n+1$. We shall now show that this gap can be closed, using the moments of $2n$ functions that form a \emph{T-system}, which we now define.

\begin{definition}[\cite{moments}]\label{def:Tsys}
Let $\Omega \subseteq \RR $. We say that the $k$ functions $\T_i:\Omega \to \RR,\ i=1,\ldots,k$, form a \textbf{T-system} on $\Omega$, if for all pairwise-distinct $x_1\ldots x_k \in \Omega $, the square matrix
\begin{equation}\label{eq:Tmat}
M_{\bT}=	[\T_i(x_j)]_{1 \leq i,j \leq k} \in \RR^{k\times k}
\end{equation}
is invertible. 
\end{definition} 
An example of a T-system on $\Omega=\RR $ is the standard monomial basis $\tau_i(x)=x^{i-1}, i=1,\ldots,k $. With this choice, \eqref{eq:Tmat} is the Vandermonde matrix, which is invertible, and hence the monomial basis is a T-system. 

We know that the moments of the first $n$ elements of the standard monomial basis are injective on $\SnR$. The following simple proposition shows that injectivity on all of $\MnR$ can be achieved, at the cost of increasing the number of moments to $2n$. Moreover, this is true for all T-systems.
\begin{prop}[Trivial, based on \cite{moments}]\label{prop:Tmom}
If $\bT=(\T_1,\ldots,\T_{2n})$ is a \textbf{T-system} on $\Omega \subseteq \RR$, then the induced moment function
	\begin{equation*}
	\hat \bT(\mu)=\left(\int_{\RR} \T_1(x)d\mu(x),\ldots,\int_{\RR} \T_{2n}(x)d\mu(x)\right)
	\end{equation*}
	 is injective on $\Mn$. 
\end{prop}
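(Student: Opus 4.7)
The plan is to reduce injectivity of $\hat\bT$ to the invertibility condition built into the definition of a T-system, by considering the difference $\nu = \mu - \mu'$ of two measures with equal moments. Concretely, suppose $\mu,\mu'\in \MnR$ satisfy $\hat\bT(\mu)=\hat\bT(\mu')$. Writing $\mu=\sum_i w_i\delta_{x_i}$ and $\mu'=\sum_j w_j'\delta_{x_j'}$, the signed measure $\nu=\mu-\mu'$ is supported on the union of these atoms, a set of at most $2n$ points in $\Omega$, and its $\bT$-moments vanish. The goal is to deduce $\nu=0$.

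Next, I would enumerate the support of $\nu$ as $y_1,\ldots,y_k$ with coefficients $v_1,\ldots,v_k$, where $k\le 2n$. In the nontrivial case $|\Omega|\ge 2n$, I pad the list with arbitrary additional distinct points of $\Omega$, assigning them weight $0$, to obtain exactly $2n$ distinct points $y_1,\ldots,y_{2n}\in\Omega$ and a coefficient vector $\bv=(v_1,\ldots,v_{2n})\in\RR^{2n}$ whose nonzero entries sit at the true atoms of $\nu$. The assumption $\hat\bT(\nu)=0$ then reads
\[
\sum_{j=1}^{2n} v_j\,\T_i(y_j)=0,\qquad i=1,\ldots,2n,
\]
which is precisely $M_{\bT}\bv=0$ for the matrix $M_{\bT}=[\T_i(y_j)]_{1\le i,j\le 2n}$ appearing in Definition~\ref{def:Tsys}.

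By the T-system assumption applied to the $2n$ distinct points $y_1,\ldots,y_{2n}$, the matrix $M_\bT$ is invertible, so $\bv=0$. In particular all weights of $\nu$ at its own atoms vanish, hence $\nu=0$ and $\mu=\mu'$, proving injectivity. The degenerate case $|\Omega|<2n$ only needs a brief remark: there $\Mn$ consists of measures supported on at most $|\Omega|$ points, the padding argument still fits with $k\le |\Omega|$ using any T-subsystem, or alternatively the whole statement is vacuous because $\bT$ being a T-system on such a small $\Omega$ already forces a strong nondegeneracy.

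I do not anticipate a real obstacle here; the only subtlety is bookkeeping: making sure one pads to exactly $2n$ distinct points so that the T-system invertibility condition applies verbatim, and verifying that padding with zero weights does not alter the equation $\hat\bT(\nu)=0$. This is immediate because zero-weight atoms contribute nothing to the sums. Thus the proposition follows essentially by the rank-$2n$ property baked into the definition of a T-system, together with the observation that the difference of two measures in $\MnR$ is supported on at most $2n$ points.
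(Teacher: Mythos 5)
Your proof is correct and follows essentially the same argument as the paper: form the difference measure $\nu=\mu-\mu'$, write it as a combination of $2n$ distinct points (padding with zero weights), and invoke the invertibility of $M_{\bT}$ from the T-system definition to force the coefficient vector to vanish. The only difference is cosmetic: you explicitly flag the padding step and the degenerate case $|\Omega|<2n$, which the paper leaves implicit.
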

\begin{proof}
Let $\mu,\mu'\in \Mn$ such that  $\hat \bT(\mu)=\hat \bT(\mu')$. Then $\bT(\mu-\mu')=0$, where $\mu-\mu'$ denotes the difference measure. Since $\mu-\mu'$ is supported on at most $2n$ points, we can write
$$\mu-\mu'=\sum_{i=1}^{2n} w_i\delta_{\bx_i} ,$$
where the points $\bx_i$ are pairwise distinct, and some of the $w_i$ may be zero. Our goal is to show that the vector
$$\bw=(w_1,\ldots,w_{2n}) $$
is all zero. Indeed, the fact that $\hat \tau(\mu-\mu')=0 $ implies that $M_{\bT} \cdot \bw=0 $. And since by assumption $M_{\bT}$ is full rank, we have that $\bw=0$.
\end{proof}

\begin{example}[Sigmoid T-systems]\label{ex:T}
We've seen that the standard monomial basis forms a T-system. Another well-known example \cite{moments} is the family of functions 
$$\tau_i(x)=\frac{1}{x-a_i},\quad i=1,\ldots,k.$$
If the numbers $a_1,\ldots,a_k$ are pairwise distinct,
then $\bT=(\tau_1,\ldots,\tau_k)$ form a T-system on  
$\Omega=\RR \setminus \{ a_1, \ldots,a_k \}$. 

We can use these example to obtain an alternative T-system based on the sigmoid activation $\sigma(x)=(1+e^{-x})^{-1} $. Firstly, using the injectivity of the exponent function, we can deduce from the fact that $\bT$ is a T-system, that for any distinct $b_1,\ldots,b_k $ the functions 
$$\hat \tau_i(x)=\frac{1}{e^{b_i}+e^{-x}}, \quad i=1,\ldots,k $$
form a T-system on $\Omega=\RR$. It follows that the functions 
$$ \sigma(x+b_i)=\frac{1}{1+e^{-x-b_i}}=e^{-b_i}\left( \frac{1}{e^{b_i}+e^{-x}} \right), \quad i=1,\ldots,k$$
form a T-system as well. In particular, for all pairwise-distinct $b_1,\ldots,b_{2n}$, the function 
$$x\mapsto \sigma(x+b_i),\quad i=1,\ldots,2n $$
is moment injective on $\MnR$.
\end{example}

Unfortunately, these results cannot be extended to $\Mnd$ with $d>1$, as it is known that T-systems can only be defined on subsets of $\RR$ or $S^1$ \cite{moments}.

\newpage
\section{Proofs}
In this section, we provide the proofs omitted from the main text, except for the proofs of \cref{thm_fwt_simple} and \cref{prop:deep}, which appear in \cref{app:cor}.
\subsection{Proofs for Section~\ref{sec:analytic}}

\disc*
\begin{proof}
Let $\sigma$ be as in the statement of the proposition, and let $\mu$ be a signed Borel measure that is finite and is supported on a compact set $K\subseteq \RR^d$. Furthermore, assume  that  $ \int_{\RR^d} {\sigma (\ba \cdot \bx+b) d\mu(\bx) } = 0 $ for all $\ba$ and $b$. We need to prove that $\mu=0$. 

Note that by the linearity of $\mu$, we have that $\int f d\mu=0$ for every $f$ in  the space spanned by functions of the form  $\sigma (\ba \cdot \bx+b)$. Additionally, since functions in this space can approximate any continuous function on $K$ uniformly (\cite{pinkus1999approximation}), Propositions 3.3 and  3.8), and $\mu$ is compactly supported,  we have that $\int f d\mu=0$ for every continuous function. Finally, by the Riesz representation theorem (\cite{rudin86real}, Theorem 6.19) we know that a signed (and more generally complex) measure on $K$ is defined uniquely by the integrals of continuous functions and thus $\mu=0$ as required.
\end{proof}

\gauss*
\begin{proof}
The proof follows the proof of Theorem~\ref{thm:continuous_alph} with some modifications.
Set $m=2n(d+1)+1$. A measure in $\Mnd$ is determined (albeit not uniquely) by a matrix  $\bX=(\bx_1,\ldots,\bx_n)\in \RR^{d\times n} $ that represents $n$ points in $\RR^d$, and a weight vector $\bw\in \RR^n$. Let $\M$ denote the space of pairs of measure parameters 
$$\M=\{(\bw,\bw',\bX,\bX')\in \RR^n\times\RR^n \times \RR^{d\times n}\times \RR^{d\times n} \}, $$
and let \begin{equation}
	F(\bw,\bw',\bX,\bX';\by,\sigma)=\sum_{i=1}^n w_i\exp\left(-\sigma^{-2}\|\bx_i-\by\|^2 \right)-\sum_{i=1}^n w_i'\exp\left(-\sigma^{-2}\|\bx_i'-\by\|^2 \right).
\end{equation}
 We prove the proposition by showing that  for Lebesgue almost every  
  $\by_1,\ldots ,\by_m$ and $\sigma_1,\ldots,\sigma_m$,
\begin{align*}
&\{\left(\bw,\bw',\bX,\bX'\right)\in \M \mid \, \sum_{i=1}^n w_i\delta_{\bx_i}=\sum_{i=1}^n w_i'\delta_{\bx_i'} \}\\
\stackrel{(*)}{=}&\{\left(\bw,\bw',\bX,\bX'\right)\in \M \mid \, F\left(\bw,\bw',\bX,\bX'; \by, \sigma\right)=0, \,\ \forall \by \in \RR^d, \sigma \in \RR_{+}  \}\\
\stackrel{(**)}{=}&\{\left(\bw,\bw',\bX,\bX'\right)\in \M \mid \, F\left(\bw,\bw',\bX,\bX'; \by_i, \sigma_i\right)=0, \,\ \forall i=1,\ldots,m  \}\\
\end{align*}
As in the  proof of Theorem~\ref{thm:continuous_alph}, equality (**)  follows from the \emph{finite witness theorem}.  The equality (*) follows on the one hand from the fact that whenever the measures defined by $(\bw,\bX) $ and $(\bw',\bX')$ are the same, necessarily all integrals of functions against these measures are the same, and so $F\left(\bw,\bw',\bX,\bX'; \by, \sigma\right)=0$ for all $ \by \in \RR^d, \sigma \in \RR_{+}  $.

On the other hand, if the measure $\mu$ defined by $(\bw,\bX) $ and the measure $\mu'$ defined by $(\bw',\bX')$, are \emph{not} the same, then it is sufficient to show that for some choice of $ \by \in \RR^d, \sigma \in \RR_{+}  $ we have 
$F(\bw,\bw',\bX,\bX';\by,\sigma)\neq 0 $. To see this is indeed the case, choose some $\by_0\in \RR^d$ which the non-zero matrix $\mu-\mu' $ assigns a non-zero weight $w_0$. Then 
$$\lim_{\sigma \rightarrow 0, \sigma>0} F(\bw,\bw',\bX,\bX';\by_0,\sigma)=w_0\neq 0.  $$
and so for  small enough $\sigma$ we have that $F(\bw,\bw',\bX,\bX';\by_0,\sigma)$ which concludes the proof.
\end{proof}

\subsection{Proofs for Section~\ref{sec:stability}}
\unstable*
\begin{proof}
We choose some arbitrary  $\bd\in \RR^d$ with unit norm, and focus on multisets of two elements in $\RR^d$ of the form $S_\epsilon=\lms \bx_0+\epsilon \bd,\bx_0-\epsilon \bd \rms$. We note that the Wasserstein distance $W_2(S_\epsilon,S_0) $ is $\sqrt{2}\epsilon $. Thus, it is sufficient to show that 
\begin{equation}\label{eq:lim}
\lim_{\epsilon \rightarrow 0} \frac{ \|\hat f(S_\epsilon)-\hat f(S_0)\|}{|\epsilon|}=0,
\end{equation}
for this implies that there is no positive $c$ for which \eqref{eq:lip} holds.
Indeed, denoting the differential of $f$ at $\bx_0$ by $J$, we have 
\begin{align*}
\frac{ \|\hat f(S_\epsilon)-\hat f(S_0)\|}{|\epsilon|}&=\frac{\|f(\bx_0+\epsilon \bd)+f(\bx_0-\epsilon \bd)-2f(\bx_0)\|}{|\epsilon|}\\
&=\frac{\|f(\bx_0+\epsilon \bd)-f(\bx_0)-\epsilon J \bd+f(\bx_0-\epsilon \bd)-f(\bx_0)+\epsilon J\bd\|}{|\epsilon|}\\
&\leq \frac{\|f(\bx_0+\epsilon \bd)-f(\bx_0)-\epsilon J\bd\| }{|\epsilon|}+\frac{\|f(\bx_0-\epsilon \bd)-f(\bx_0)-(-\epsilon J\bd)\|}{|\epsilon|}\stackrel{\epsilon \rightarrow 0}{\rightarrow} 0,
\end{align*}
where the convergence to zero of the last expression follows from the definition of differentiability at a point.
\end{proof}

\subsection{Proofs for Section~\ref{sec:applications}}
\label{sec_proofs_applications}
We now prove Corollary~\ref{cor:universalA}:
\universalA*
\begin{proof}
By Proposition 1.3 in \cite{dym2022low}, it is sufficient to show that there exists $\bA\in \RR^{m\times d}, \bb\in \RR^d $ such that the permutation invariant function 
\begin{equation}\label{eq:ourfun}
(\bx_1,\ldots,\bx_n) \mapsto \sum_{j=1}^n \sigma(\bA \bx_j+\bb)
\end{equation}
is orbit separating. This  means that any two element in $K^n$ that are not related by a permutation will be separated by the function in \eqref{eq:ourfun}. Any pair of elements not related by a permutation correspond to two distinct multisets in $\Snd$ with exactly $n$ points. By  Theorem~\ref{thm:continuous_alph}, for almost every choice of $\bA,\bb$ the function in \eqref{eq:ourfun} will be injective on $\Snd$, and thus for such choice this function is indeed invariant and orbit separating.  
\end{proof}

We now prove Corollary~\ref{cor:universalB}:
\universalB*
\begin{proof}
Our first step is to show that $\PnK$ is compact with respect to the Wasserstein metric. Since $\PnK$ is the image of the compact set 
$$Q:=\{(\bw,\bX)\in \RR^n\times K^n \mid \, \sum_{i=1}^n w_i=1 \text{ and } w_j \geq 0, , j=1,\ldots,n  \} $$
under the function 
$$(\bw,\bX) \mapsto \sum_{i=1}^n w_i \delta_{\bx_i} ,$$
it is sufficient to show that this function is continuous, as the image of a compact set under a continuous map is compact. Thus, given a sequence of $(\bw^{(k)},\bX^{(k)})$ which converges to some $(\bw,\bX)$, we need to show that in Wasserstein space, the measures $\mu_k:=\sum_{i=1}^n w_i^{(k)} \delta_{\bx_i^{(k)}}$ converge to the measure $\mu:= \sum_{i=1}^n w_i \delta_{\bx_i}$. Since the Wasserstein distance metrizes the weak topology on measures (\cite{villani2009optimal}, Theorem 6.9), it is sufficient to see that the integral of  every continuous $s:K \to \RR $ against the sequence of measures converge to the limit measure. Indeed:
\begin{align*}
\int_K s(\bx)d\mu^{(k)}(\bx)=\sum_{i=1}^n w_i^{(k)} s(\bx_i^{(k)})\rightarrow \sum_{i=1}^n w_i s(\bx_i)=\int_K s(\bx)d\mu(\bx)
\end{align*}
Thus we have shown that $\PnK$ is compact. 

Next,  by  Theorem~\ref{thm:continuous_alph}, for almost every choice of $\bA,\bb$ the function 
$$q(\mu):=\int_{\bx\in K} \sigma(\bA \bx+\bb)d\mu(\bx) $$
is injective on $\Mnd$, and so in particular on the compact subset $\PnK$. As $q$ is also continuous, and a continuous injective function defined on a compact set is a homeomorphism, it follows that $q^{-1}:q(\PnK) \to \PnK$ is continuous. We then have that $f=(f\circ q^{-1})\circ q $. We can then use Tietze's extension theorem to extend $f\circ q^{-1} $ from its compact domain to a continuous function $F$ defined on all of $\RR^m$ , and we then obtain $f=F\circ q $ as required.  
\end{proof}

\thmgnn*
\begin{proof}
The message passing iterations discussed in the theorem define new node features $\bh_v^{(t)}$ from the previous node features $\bh_v^{(t-1)}$ via
\begin{equation*}
\bh_v^{(t)}= \sum_{u \in \N(v)} \sigma \left( \bA^{(t)} \left(\eta^{(t)}\bh_v^{(t-1)}+\bh_u^{(t-1)}\right)+\bb^{(t)} \right).
\end{equation*}
This can be rewritten as 

\begin{align*}
 \bbm_{v,u}^{(t)}&=\eta^{(t)}\bh_v^{(t-1)}+\bh_u^{(t-1)}\\
 \bh_v^{(t)}&=f^{(t)}\left( \lms \bbm_{v,u}^{(t)},  \mid  \, u\in \N(v) \rms \right)=\sum_{u \in \N(v)} \sigma \left( \bA^{(t)}\bbm_{v,u}^{(t)}+\bb^{(t)} \right), t=1.\ldots,T\\
  \end{align*}
The final `readout' step creates a global graph features from the node features of the last iteration $T$ via
$$\bh_G=f^{(T+1)}\left( \lms \bh_v^{(T)} \mid  \, v\in V \rms \right)=\sum_{v \in V} \sigma \left( \bA^{(T+1)}\bh_v^{(T)}+\bb^{(T+1)} \right)$$

It is known \cite{xu2018powerful} that  every pair of graphs $G,\hat{G} \in \GnSig $ which are not be separated by $T$ iterations of the WL test, will not be separated by our message passing procedure, regardless of the choice of the parameters 
$$\bth=\left(\bA^{(1)},\ldots,\bA^{(T+1)},\bb^{(1)},\ldots,\bb^{(T+1)},\eta^{(1)}\ldots, \eta^{(T)} \right).$$
We need to prove the opposite direction. 

Since the collection of all graph-pairs from $\GnSig$ is countable, it is sufficient to show that  any fixed pair of graphs $G,\hat{G} \in \GnSig $ that \emph{can} be separated by $T$ iterations of the WL test, can be separated by \emph{almost every} choice of parameters. 

Let us fix such a pair $G,\hat{G} \in \GnSig $ which can be separated by $T$ iterations of WL. Note that the final features $h_G=h_G(\bth) $ and $h_{\hat G}=h_{\hat G}(\bth)$ obtained from the message passing procedures are an analytic function of the parameters $\bth$, and therefore to show separation for \emph{almost every} $\bth$ it is sufficient to show \emph{existence} of $\bth$ for which $h_G(\bth) \neq h_{\hat G} (\btheta) $. 

To show that a single such $\bth$ exists, we choose the parameters of the functions recursively in the order $(\eta^{(1)},\bA^{(1)},
\bb^{(1)}),(\eta^{(2)},\bA^{(2)},
\bb^{(2)}),\ldots$ in which they are applied in the message passing procedure. When choosing the parameters $(\eta^{(t)},\bA^{(t)},
\bb^{(t)})$corresponding to the $t$th step, it is sufficient to show that, if at the previous $(t-1)$ timestamp, for two different nodes $v,w$ we had 
\begin{equation}\label{eq:different}
\bh_v^{(t-1)}\neq \bh_w^{(t-1)} \text{ or } \lms \bh_u^{(t-1)}, \, u \in \N(v)\rms\neq \lms  \bh_u^{(t-1)}, \quad u\in \N(w) \rms 
\end{equation}
then $ \bh_{v}^{(t)}$ and $ \bh_{w}^{(t)}$ will not be the same. Here $v$ and $w$ are nodes in either $G$ or $\hat G$. 

Our first goal is to choose $\eta^{(t)}$ so that, for all given nodes $v,w$ in $G$ or $\hat G$ such that \eqref{eq:different} is satisfied, we have  
\begin{equation}\label{mmulti}
\lms \bbm_{v,u}^{(t)},  \mid  \, u\in \N(v) \rms \neq \lms \bbm_{w,x}^{(t)},  \mid  \, x\in \N(w) \rms.\end{equation}
To choose $\eta^{(t)}$ in this way, we note that since all previous parameters were already determined, and since we are only interested in a single pair of graphs, there is only a finite number of features 
$$\Sigma^{(t)}=\{\bh_v^{(t-1)}| v \text{ is a node in } G \text{ or } \hat G \} $$
which we are interested in. Since for fixed vectors  $\bx_1,\bx_2,\by_1,\by_2$ of the same dimension with $\bx_1\neq \bx_2 $, there can be at most a single $\eta$ satisfying the equation
\begin{equation}\label{eq:intersection}
\eta \bx_1+\by_1=\eta \bx_2+\by_2.
\end{equation}
we conclude that all but a finite number of $\eta$ satisfy
\begin{align*}\eta \bh_v^{(t-1)}+\bh_u^{(t-1)}&\neq \eta \bh_w^{(t-1)}+\bh_x^{(t-1)},\\
&\forall \bh_v^{(t-1)},\bh_u^{(t-1)},\bh_w^{(t-1)},\bh_x^{(t-1)}\in \Sigma^{(t-1)} \text{ such that } \bh_v^{(t-1)} \neq \bh_w^{(t-1)}
\end{align*}
We choose $\eta^{(t)}$ which  satisfies the inequality above. This implies that if the left-hand side in \eqref{eq:different} is indeed an inequality $\bh_v^{(t-1)}\neq \bh_w^{(t-1)}$, then $\bbm_{v,u}^{(t)}\neq \bbm_{w,x}^{(t)}$ for all $(u,x)\in \N(v) \times \N(w) $, which in turn implies the inequality of  multisets obtained from $w$ and $v$ (\eqref{mmulti}. On the other hand, 
if the left-hand side in \eqref{eq:different} is an equality $\bh_v^{(t-1)}= \bh_w^{(t-1)}$ and the multisets in the right-hand side of \eqref{eq:different} are distinct, then clearly the multisets in \eqref{mmulti} are distinct too. Thus, we have proved that we can choose $\eta^{(t)}$ so that \eqref{eq:different} implies \eqref{mmulti}. 

It remains to show that we can choose the parameters of $f^{(t)}$ so that, if \eqref{mmulti} holds for some nodes $v,w$ of $G$ or $\hat G$, then the features  $\bh_v^{(t)}$ and $\bh_w^{(t)}$ obtained from applying $f^{(t)}$ to the multisets in \eqref{mmulti} will be distinct. Indeed, since we are only requiring $f^{(t)}$ to be injective on a finite collection of finite sets, we have that  all these multisets are contained in $\SnSig $ for some finite $\Sigma$, and therefore by Theorem~\ref{thm:continuous_alph} there is a choice of parameters $(\bA^{(t)},
\bb^{(t)}) $ which is injective on $\SnSig$, even when $m=1$. Thus, we obtained a recursive procedure for choosing the parameters $\btheta$ such that $G$ and $\hat G$ will be separated, which concludes our proof.
\end{proof}

\newpage
\section{Numerical Experiments: Additional Details}\label{app:exp}

\subsection{Empirical injectivity and Bi-Lipschitzness}\label{app_exp_empirical_injectivity}


To generate the results shown in \cref{fig_injectivity} (and \cref{fig_injectivity_group} below), we ran multiple independent test instances in which we generated two random matrices $X_1, X_2 \in \RR^{d \times n}$, representing two sets of $n$ vectors in $\RR^d$.  With exact details appearing below, $X_1, X_2$ were generated such that: (1) Each entry of $X_1$ and $X_2$ has expectation zero and a standard deviation (STD) of $1$; (2) $X_1$ and $X_2$ differ in exactly $n_{\Delta}$ randomly chosen columns, with the parameter $n_{\Delta}$ chosen uniformly at random from $\{ 1,\ldots, n \}$; (3) each entry of the $n_{\Delta}$ nonzero columns of $\Delta X = X_2-X_1$ has expectation zero and STD=$\rho$, with the parameter $\rho$ itself drawn uniformly (once per test instance) from $ \left[\rho_{\min},\rho_{\max} \right]$.  We used $\rho_{\min}=0.02$, $\rho_{\max}=1$. In other words, the relative difference $\rho$ between non-identical columns of $X_1$ and $X_2$ is chosen at each instance randomly between $2\%$ and $100\%$. The motivation for this construction was to test various types and magnitudes of differences between multisets.

We then randomly generated $\bar \bA \in \RR^{\bar m \times d}$ and $\bar \bb \in \RR^{\bar m \times 1}$, from which we took subblocks to be used as parameters for the function $f\left(x;\bA, \bb\right)$ of \cref{eq:mlp}: for various values of $m \in \left[\bar m\right]$, we took $\bA_{m}$, $\bb_m$ to be the top $m$ rows of $\bar \bA$ and $\bar \bb$ respectively, and used them to construct the embedding $ \hat f \left( X; \bA_m, \bb_m, \sigma \right): \RR^{d \times n} \rightarrow \RR$:
$$ \hat f \left( X; \bA_m, \bb_m, \sigma \right) = \sum_{i=1}^n \sigma\left( \bA_m x_i + \bb_m \right),$$ with $x_i$ denoting the columns of $X$. 
The entries of $\bar \bA$ and $\bar \bb$ were drawn from Gaussian distributions chosen such that for each row $a_k$ of $\bar \bA$ and corresponding entry $b_k$ of $\bb$, and each column $x_i$ of $X_1$ or $X_2$, the input to the activation $\sigma$, $a_k \cdot x_i + b_k$, has expectation zero and STD=1; specifically,
$$\mathbb{E}\left[a_k \cdot x_i\right] = \mathbb{E}\left[b_k \right] = 0
\textup{\quad and\quad}
\textup{STD}\left[ a_k \cdot x_i \right] = \textup{STD}\left[ b_k \right] = \tfrac{1}{\sqrt{2}}.
$$

For various activation functions $\sigma$, we calculated the ratio 
$$ r\left(X_1,X_2\right) = \frac{ \lVert \hat f \left( X_1; \bA_m, \bb_m, \sigma \right) - \hat f \left( X_2; \bA_m, \bb_m, \sigma \right) \rVert_{2}} {W_2 \left( X_1, X_2 \right)}, $$
with $W_2 \left( \cdot, \cdot \right)$ being the $2$-Wasserstein distance. We used a Sinkhorn approximation of $W_2 \left( \cdot, \cdot \right)$, calculated by the \texttt{GeomLoss} Python library \cite{feydy2019interpolating}.
Finally, for each activation $\sigma$ and embedding dimension $m$, we took $c$ and $C$ to be the minimum and maximum of $r\left(X_1,X_2\right)$ respectively over all test instances, and recorded the ratio $c/C$. In each experimental setting, we ran between 500,000 and 2 million independent instances, depending on the values of $d$ and $n$. The results appear in \cref{fig_injectivity_group}.\begin{figure}
\begin{tikzpicture}

\begin{groupplot}[group style={group size=3 by 2},height=4.8cm,width=5.1cm]

\ifx\currcsvfile\undefined
  \newrobustcmd{\currcsvfile}{xxx}
\else
  \renewrobustcmd{\currcsvfile}{xxx}
\fi

\input{plots/plot_injectivity_d3_n10}
\input{plots/plot_injectivity_d3_n100}
\input{plots/plot_injectivity_d3_n1000}
\input{plots/plot_injectivity_d100_n10}
\input{plots/plot_injectivity_d100_n100}

\nextgroupplot[group/empty plot]

\end{groupplot}

\node at (group c3r2.center){\pgfplotslegendfromname{InjectivityLegend}};

\end{tikzpicture}
The empirical ratio $c/C$ of \cref{eq:lip} as a function of the embedding dimension $m$. The results for different sizes of vector-sets $n$ and ambient dimension $d$ are shown. For low $m$, the piecewise-linear \texttt{ReLU} and \texttt{HardTanH} have $c/C$ exactly zero. See \cref{app_exp_empirical_injectivity} for a full description of the experimental setting.
\caption{Empirical injectivit and bi-Lipschitzness\label{fig_injectivity_group}}
\end{figure}
It can be seen that a similar behaviour is exhibited across different values of $d$ and $n$. In particular, all PwL activations have $c/C = 0$ at low $m$ and all analytic activations have positive $c/C$ in all settings tested.

\paragraph{Probabilistic distributions of data}
At each instance, as mentioned above, we first chose $n_{\Delta} \sim \textup{Uniform}\left[  \{1,\ldots,n\} \right]$ and $\rho \sim \textup{Uniform}\left[\rho_{\min},\rho_{\max}\right]$. We then randomly chose $n_{\Delta}$ columns labelled by $J$, at which $X_1$, $X_2$ should differ. 
Let $I = \left[n \right] \setminus J$. Denote by $X[:,\Lambda]$ the subblock of the matrix $X$
with columns indexed by $\Lambda$. The entries of $X_1[:,I] = X_2[:,I]$ were drawn i.i.d. from $\textup{Normal}\left(0,1\right)$.
For $X_1[:,J]$ and $X_2[:,J]$, we generated two random matrices $U,V\in \RR^{d \times n_{\Delta}}$, each of whose entries are i.i.d. Gaussian with expectation 0, STD=$\sqrt{1-\tfrac{1} {12}\left(\rho_{\max}^2+ \rho_{\max}\rho_{\min} + \rho_{\min}^2\right)}$ and STD=$1$ for $U$,$V$ respectively. We then set 
$$ X_1[:,J] = U - \tfrac{1}{2}\rho V, \quad X_2[:,J] = U + \tfrac{1}{2}\rho V.$$
Lastly, we generated the entries of $\bar \bA$ and $\bar \bb$ from $\textup{Normal}\left(0,\tfrac{1}{\sqrt{2 d}}\right)$ and $\textup{Normal}\left(0,\tfrac{1}{\sqrt{2 }}\right)$ respectively.
We now show that: (i) each entry of $X_1[:,J]$ and $X_2[:,J]$ has expectation zero and STD=1, (ii) given $\rho$, each entry of $\Delta X[:,J]$ has expectation 0 and STD=$\rho$, and (iii) All inputs to the activation $\sigma$ have expectation zero and STD=1.
Let $x_1,x_2$ be two corresponding entries of $X_1[:,J],X_2[:,J]$, and let $u,v$ be their corresponding entries of $U,V$. Then $$ x_1 = u - \tfrac{1}{2} \rho v, \quad x_2 = u + \tfrac{1}{2} \rho v.$$
We now calculate the expectation and variance of $x_1$, $x_2$.
Since $U$, $V$ and $\rho$ are independent, we have that 
$$ \mathbb{E}\left[ x_1 \right] 
= \mathbb{E}\left[ u - \tfrac{1}{2}\rho v \right]
= \mathbb{E}\left[ u \right] - \tfrac{1}{2}\mathbb{E}\left[ \rho \right] \mathbb{E}\left[ v \right]
= 0 - \tfrac{1}{2}\frac{\rho_{\min} + \rho_{\max}}{2} \cdot 0 = 0,$$ 
and by a similar reasoning, $\mathbb{E}\left[ x_2 \right] = 0$. 
The variance of $x_1$ is given by:\begin{equation*}
\begin{split}    
\textup{Var}\left[ x_1 \right] 
&= \textup{Var}\left[ u - \tfrac{1}{2} \rho v \right]
= \textup{Var}\left[ u \right] 
+ \tfrac{1}{4} \left( \textup{Var}\left[ \rho \right] + \mathbb{E}\left[ \rho \right]^2 \right) \left( \textup{Var}\left[ v \right] + \mathbb{E}\left[ v \right]^2 \right) - \mathbb{E}\left[ \rho \right]^2 \mathbb{E}\left[ v \right]^2
\\&= \textup{Var}\left[ u \right] 
+ \tfrac{1}{4} \left( \frac{\left(\rho_{\max} - \rho_{\min}
\right)^2}{12} + \frac{\left( \rho_{\max} + \rho_{\min}\right)^2}{4} \right) \left( 1 + 0 \right) - \mathbb{E}\left[ \rho \right]^2 \cdot 0
\\&= \textup{Var}\left[ u \right] + \tfrac{1}{12}\left( \rho_{\max}^2 + \rho_{\max}\rho_{\min} + \rho_{\min}^2\right)
\\ &= \left( 1-\tfrac{1} {12}\left(\rho_{\max}^2+ \rho_{\max}\rho_{\min} + \rho_{\min}^2\right)\right) + \tfrac{1}{12}\left( \rho_{\max}^2 + \rho_{\max}\rho_{\min} + \rho_{\min}^2\right) = 1.
\end{split}
\end{equation*}
By a similar argument, $\textup{Var}\left[ x_2 \right] = 1$. Thus, (i) holds. Let $\Delta x = x_2-x_1 = \rho v$. Then 
$$ \mathbb{E}\left[\Delta x\right] = \mathbb{E}\left[x_2 \right] - \mathbb{E}\left[x_1 \right] = 0 - 0 = 0. $$
Moreover, the conditional variance of $\Delta x$ given $\rho$ is
$$ \textup{Var}\left[ \Delta x \mid \rho \right] 
= \textup{Var}\left[ \rho v \mid \rho \right] 
= \rho^2 \cdot \textup{Var}\left[ v \mid \rho \right] 
= \rho^2 \cdot \textup{Var}\left[ v \right] 
= \rho^2,
$$
and thus (ii) holds. 
Finally, we show that (iii) holds. We already have established that each entry of $X_1$ and of $X_2$ has zero mean and STD=1. Let $a_k$, $b_k$ be a row or $\bar \bA$ and its corresponding entry of $\bar \bb$. Let $x_i$ be an arbitrary column of $X_1$ or $X_2$. Then
$$\mathbb{E}\left[a_k \cdot x_i\right]
= \mathbb{E}\left[ \sum_{j=1}^d \left(a_k\right)_j \cdot \left(x_i\right)_j\right] 
= \sum_{j=1}^d \mathbb{E}\left[\left(a_k\right)_j \right] \mathbb{E}\left[\left(x_i\right)_j\right] 
= \sum_{j=1}^d 0 \cdot 0 = 0,
$$
and by definition $\mathbb{E}\left[b_k \right] = 0$. Since each $\left(a_k\right)_j$ and $\left(x_i\right)_j$ are independent random variables with expectation zero, we have that
$$\textup{Var}\left[a_k \cdot x_i\right]
= \textup{Var}\left[\sum_{j=1}^d \left(a_k\right)_j \cdot \left(x_i\right)_j\right] 
= \sum_{j=1}^d \textup{Var}\left[\left(a_k\right)_j \right] \textup{Var}\left[\left(x_i\right)_j\right] 
= \sum_{j=1}^d \tfrac{1}{2d} \cdot 1 = \tfrac{1}{2},
$$
and by definition $\textup{Var}\left[b_k \right] = \tfrac{1}{2}$. Therefore, (iii) holds.

\paragraph{Computational resources} All experiments were run on an NVidia A40 GPU with 48 GB of GPU memory.

\subsection{Graph separation}\label{app_exp_graph_separation}
We ran the graph separation experiment using the PyTorch Geometric \cite{Fey/Lenssen/2019} implementation for `WL convolutions', and  the GCN convolutions of \cite{DBLP:conf/iclr/KipfW17}, as well as their  version of the TUDataset \cite{Morris+2020}. As initialization for the node features, we only took the node degree.

When moving from the abstract mathematical world to finite precision computing,
\begin{wrapfigure}{r}{0.4\columnwidth}
\includegraphics[width=0.4\textwidth]{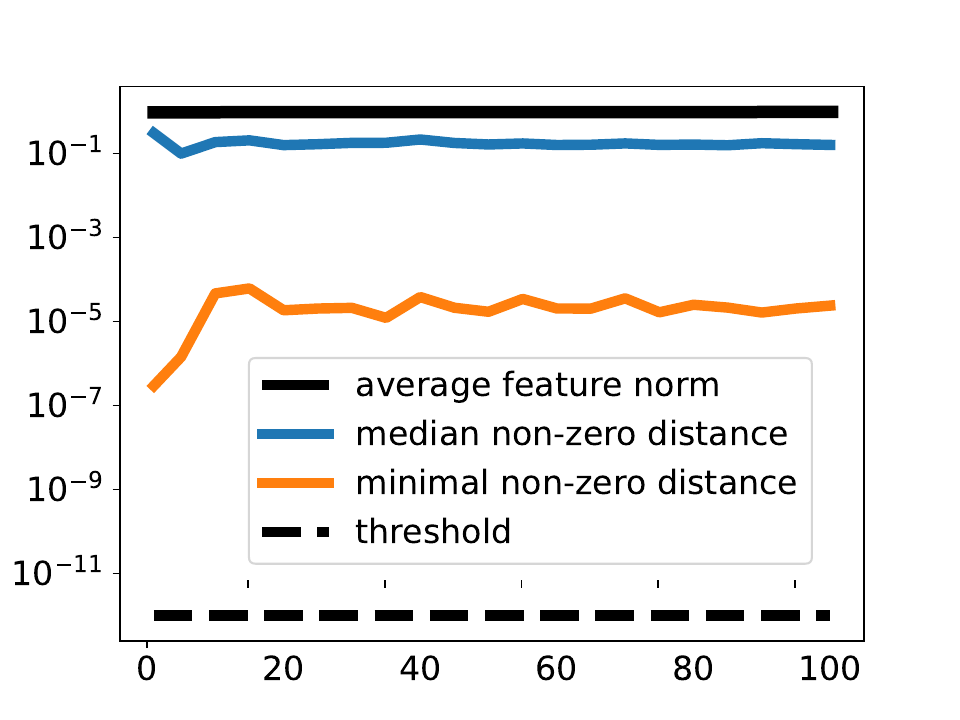}
\caption{Minimal and median distance between non-equivalent features as a function of hidden feature dimension.}
\label{fig:numerics}
\end{wrapfigure}
features that are mathematically equal $h_i=h_j$ could end up having slightly different values. To deal with this, 
all computations were done in double precision. The final features computed by the MPNN were normalized to have an average norm of one, and two features $h_i,h_j$ were deemed equal if $|h_i-h_j|<10^{-12} $. In Figure~\ref{fig:numerics} we show the minimum and median of the quantity $\|h_i-h_j\|$ over all $(i,j) $ from all graphs for which this norm was larger than the threshold of $10^{-12} $. This is shown for the SiLU activation, and the values are shown as a function of the hidden dimension used. We see that the minimal non-zero distance was safely larger than the threshold in all examples. Also note that the minimal distance moderately improves as the hidden dimension increases.









\end{document}